\newtheorem{theorem}{Theorem}[section]
\newtheorem{lemma}{Lemma}[section]
\newtheorem{assum}{Assumption}[section]
\newcommand{\EE}{\mathbb{E}}
\newcommand{\Cov}{\mathrm{Cov}}
\newcommand{\Var}{\mathrm{Var}}
\newcommand{\rv}{\mathrm{V}}
\newcommand{\wx}{\widetilde{x}}
\newcommand{\wv}{\widetilde{v}}
\newcommand{\ww}{\widetilde{w}}
\newcommand{\wrx}{\widetilde{\mathrm{X}}}
\newcommand{\wrv}{\widetilde{\mathrm{V}}}
\newcommand{\rd}{\,\mathrm{d}}
\title{Variance reduction for Random Coordinate Descent-Langevin Monte Carlo}
\author{%
  Zhiyan Ding\\
  Department of Mathematics\\
  University of Wisconsin-Madison\\
  Madison, WI 53706 \\
  \texttt{zding49@math.wisc.edu} \\
  \And
   Qin Li \\
  Department of Mathematics \\
  University of Wisconsin-Madison\\
  Madison, WI 53706 \\
  \texttt{qinli@math.wisc.edu} \\
}
\begin{document}

\maketitle

\begin{abstract}

Sampling from a log-concave distribution function is one core problem that has wide applications in Bayesian statistics and machine learning. While most gradient free methods have slow convergence rate, the Langevin Monte Carlo (LMC) that provides fast convergence requires the computation of gradients. In practice one uses finite-differencing approximations as surrogates, and the method is expensive in high-dimensions.

A natural strategy to reduce computational cost in each iteration is to utilize random gradient approximations, such as random coordinate descent (RCD) or simultaneous perturbation stochastic approximation (SPSA). We show by a counter-example that blindly applying RCD does not achieve the goal in the most general setting. The high variance induced by the randomness means a larger number of iterations are needed, and this balances out the saving in each iteration.

We then introduce a new variance reduction approach, termed Randomized Coordinates Averaging Descent (RCAD), and incorporate it with both overdamped and underdamped LMC. The methods are termed RCAD-O-LMC and RCAD-U-LMC respectively. The methods still sit in the random gradient approximation framework, and thus the computational cost in each iteration is low. However, by employing RCAD, the variance is reduced, so the methods converge within the same number of iterations as the classical overdamped and underdamped LMC~\cite{DALALYAN20195278,Cheng2017UnderdampedLM,dalalyan2018sampling}. This leads to a computational saving overall.
\end{abstract}

\section{Introduction}

Monte Carlo Sampling is one of the core problems in Bayesian statistics, data assimilation~\cite{Reich2011}, and machine learning \cite{MCMCforML}, with wide applications in atmospheric science \cite{FABIAN198117}, petroleum engineering~\cite{PES}, remote sensing~\cite{LiNewton_MCMC} and epidemiology~\cite{COVID_travel} in the form of inverse problems~\cite{OGinverse}, volume computation~\cite{Convexproblem}, and bandit optimization \cite{ATTS}.

Let $f(x)$ be a convex function that is $L$-gradient Lipschitz and $\mu$-strongly convex in $\mathbb{R}^d$. Define the target probability density function $p(x) \propto e^{-f}$, then $p(x)$ is a log-concave function. To sample from the probability distribution induced by $p(x)$ amounts to finding an $x\in\mathbb{R}^d$ (or a list of $\{x^i\in\mathbb{R}^d\}$) that can be regarded as i.i.d. (independent and identically distributed) drawn from the distribution.

There is vast literature on sampling, and proposed methods fall into a few different categories. Markov chain Monte Carlo (MCMC)~\cite{Robert2004} composes a big class of methods, including Metropolis-Hasting based MCMC (MH-MCMC) \cite{doi:10.1063/1.1699114,MCSH}, Gibbs samplers \citep{Geman1984,10.2307/2685208}, Hamiltonian Monte Carlo~\citep{Neal1993,DUANE1987216}, Langevin dynamics based methods~\cite{doi:10.1063/1.436415} (including both the overdamped Langevin~\cite{PARISI1981378,roberts1996,doi:10.1111/rssb.12183} and underdamped Langevin~\cite{10.5555/3044805.3045080,10.5555/2969442.2969566} Monte Carlo), and some kind of combination (such as MALA)~\cite{roberts1996,LDMHA2002,dwivedi2018logconcave,MALA3}. Importance sampling and sequential Monte Carlo~\cite{IM1989,Neal2001,del2006sequential} framework and ensemble type methods~\cite{Reich2011,EKS,Iglesias_2013,ding2019ensemble,ding2019ensemble2,dingQJ2020ensemble} are also popular.

Different MCMC methods are implemented differently, but they share the essence, that is to develop a Markovian transition kernel whose invariant measure is the target distribution, so that after many rounds of iteration, the invariant measure is achieved. If the design of the transition kernel does not involve $\nabla f$ or sense the local behavior of $f$, the convergence is slow~\cite{MCMCslow1,MCMCslow2,10.1093/biomet/83.1.95,10.2307/2242610}. 

The Langevin Monte Carlo methods, both the overdamped or underdamped, can be viewed as special kinds of MCMC that involve the computation of $\nabla f$. The idea is to find stochastic differential equations (SDEs) whose equilibrium-in-time is the target distribution. These SDEs are typically driven by $\nabla f$, and the Overdamped or Underdamped Langevin Monte Carlo (O/U-LMC) can be viewed as the discrete-in-time (such as Euler-Maruyama discretization) version of the Langevin dynamics (SDEs). Since $\nabla f$ leads the dynamics, fast converge is expected~\cite{DALALYAN20195278,Cheng2017UnderdampedLM,dalalyan2018sampling}.

However, $\nabla f$ is typically not available. In particular, if $f$ is obtained from inverse problems with an underlying governing differential equation describing the dynamics, as seen in the remote sensing and epidemiology examples above, the explicit formula for $\nabla f$ is unknown. When this happens, one usually needs to compute all partial derivatives, one by one, either by employing automatic differentiation~\cite{JMLR:v18:17-468}, or by surrogating with the finite-difference approximations $\partial_{i}f \approx [f(x+\eta \textbf{e}^i)-f(x-\eta \textbf{e}^i)]/2\eta$ for every direction $\textbf{e}^i$. This leads to a cost that is roughly $d$ times the number of required iterations. In high dimension, $d\gg 1$, the numerical cost is high. Therefore, how to sample with a small number of finite differencing approximations with a cost relaxed on $d$, becomes rather crucial.

There are methods proposed to achieve gradient-free property, such as Importance Sampling (IS), Ensemble Kalman methods, random walks methods, and various finite difference approximations to surrogate the gradient. However, IS~\cite{IM1989,Doucet2001,SMCBOOK} has high variance of the weight terms and it leads to wasteful sampling; ensemble Kalman methods~\cite{Evensen:2006:DAE:1206873,BR2010,Reich2011,EKS} usually require Gaussianity assumption~\cite{ding2019ensemble,ding2019ensemble2}; random walk methods such that Metropolized random walk (MRW) \cite{10.2307/2242610,roberts1996,10.1093/biomet/83.1.95}, Ball Walk \cite{89553,Dyer1989ARP,Lovsz1993RandomWI} and the Hit-and-run algorithm \cite{Blisle1993HitandRunAF,Kannan1995IsoperimetricPF,Lovsz2006HitandRunFA} cannot guarantee fast convergence~\cite{VS-2005}; and to our best knowledge, modification of LMC with derivatives replaced by its finite difference approximation~\cite{Meeds2015HamiltonianA} or Kernel Hilbert space \cite{GFHMC} are not yet equipped with theoretical non-asymptotic analysis.

\subsection{Contribution}

We work under the O/U-LMC framework, and we look for methods that produce i.i.d. samples with only a small number of gradient computation. To this end, the contribution of the paper is twofolded.

We first examine a natural strategy to reduce the cost by adopting randomized coordinate descent (RCD)~\cite{doi:10.1137/100802001,Ste-2015}, a random directional gradient approximation. This method replaces $d$ finite difference approximations in $d$ directions, by $1$ in a randomly selected direction. Presumably this reduces the cost in each iteration by $d$ folds, and hopefully the total cost. However, in this article we will show that this is not the case in the general setting. We will provide a counter-example: the high variance induced by the random direction selection process brings up the numerical error, and thus more iterations are needed to achieve the preset error tolerance. This in the end leads to no improvement in terms of the computational cost.

We then propose a variance reduction method to improve the application of RCD to LMC. We call the method Randomized Coordinates Averaging Descent Overdamped/Underdamped LMC (or RCAD-O/U-LMC). The methods start with a fully accurate gradient (up to a discretization error) in the first round of iteration, and in the subsequent iterations they only update the gradient evaluation in one randomly selected direction. Since the methods preserve some information about the gradient along the evolution, the variance is reduced. We prove the new methods converge as fast as the classical O/U-LMC~\cite{DALALYAN20195278,Cheng2017UnderdampedLM,dalalyan2018sampling}, meaning the preset error tolerance is achieved in the same number of iterations. But since they require only $1$ directional derivative  per iteration instead of $d$, the overall cost is reduced. We summarize the advantage over the classical O-LMC and U-LMC in Table~\ref{table:result} (assuming computing the full gradient costs $d$ times of one partial derivative). The dependence on the conditioning of $f$ is omitted in the table, but will be discussed in detail in Section \ref{sec:mainresult}.

In some sense, the new methods share some similarity with SAGA~\cite{SAGA-2014}, a modification of SAG (stochastic average gradient)~\cite{SAGA-2013}. These are two methods designed for reducing variance in the stochastic gradient descent (SGD) framework where the cost function $f$ has the form of $\sum_if_i$. Similar approaches are also found in SG-MCMC (stochastic-gradient Markov chain Monte Carlo (SG-MCMC))~\cite{10.5555/2969442.2969566,10.5555/3044805.3045080,Gao2018GlobalCO,10.5555/3045118.3045176,betancourt2017,NIPS2019_8639,pmlr-v80-chatterji18a}. In their cases, variance reduction is introduced in the selection of $\nabla f_i$. In our case, the cost function $f$ is a simple convex function, but the gradient $\nabla f$ can be viewed as $\nabla f = \sum\partial_i f \textbf{e}^i$ and the variance reduction is introduced in the selection of $\partial_i f \textbf{e}^i$.

There are other variance reduction methods, such as SVRG \cite{DBLP:conf/nips/Johnson013} and CV-ULD \cite{CVULD2017,pmlr-v80-chatterji18a}. We leave the discussion to future research.

\begin{table}[ht]
\begin{center}
\begin{tabular}{ |c|c|c|}
\hline
Algorithm&Number of iterations&Number of $\partial f$ evaluations\\
\hline
O-LMC\cite{DALALYAN20195278}&$\widetilde{O}\left(d/\epsilon\right)$&$\widetilde{O}\left(d^2/\epsilon\right)$\\
\hline
U-LMC\cite{Cheng2017UnderdampedLM,dalalyan2018sampling}&$\widetilde{O}\left(d^{1/2}/\epsilon\right)$&$\widetilde{O}\left(d^{3/2}/\epsilon\right)$\\
\hline
RCAD-O-LMC &$\widetilde{O}\left(d^{3/2}/\epsilon\right)$&$\widetilde{O}\left(d^{3/2}/\epsilon\right)$\\
\hline
RCAD-U-LMC &$\widetilde{O}\left(\max\{d^{4/3}/\epsilon^{2/3},d^{1/2}/\epsilon\}\right)$&$\widetilde{O}\left(\max\{d^{4/3}/\epsilon^{2/3},d^{1/2}/\epsilon\}\right)$\\
 \hline
\end{tabular}
\caption{Number of iterations and directional derivative evaluations of $f(x)$ to achieve $\epsilon$-accuracy. $d$ is the dimension. $\widetilde{O}(f)=O(f\log f)$. {If $g=O(f\log f)$, then $g\leq Cf\log (f)$, where $C$ is a constant independent of $f$.} For the overdamped cases, we assume the Lipschitz continuity for the hessian term. Without this assumption, RCAD-O-LMC still outperforms O-LMC, as will be discussed in Section \ref{sec:mainresult}.}
\label{table:result}
\end{center}
\end{table}

\subsection{Organization}
In Section~\ref{sec:ingredients}, we discuss the essential ingredients of our methods: the random coordinate descent (RCD) method, the overdamped and underdamped Langevin dynamics and the associated Monte Carlo methods (O-LMC and U-LMC). In Section~\ref{sec:notation}, we unify the notations and assumptions used in our methods. In Section~\ref{sec:failuerofrandomsearch}, we discuss the vanilla RCD applied to LMC and present a counter-example to show it is not effective if used blindly. In Section~\ref{sec:mainresult}, we introduce our new methods RCAD-O/U-LMC and present the results on convergence and numerical cost. We demonstrate numerical evidence in Section~\ref{sec:Numer}. Proofs are rather technical and are all left to appendices.

\section{Essential ingredients}\label{sec:ingredients}
\subsection{Random coordinate descent (RCD)}
When explicit formula for $\nabla f$ is not available, one needs to compute the partial derivatives for all directions. One straightforward way is to use finite difference: $\partial_i f(x)\approx \frac{f(x+\eta \textbf{e}^i)-f(x-\eta \textbf{e}^i)}{2\eta}$ where $\textbf{e}^i$ is the $i$-th unit direction. Given enough smoothness, the introduced error is $O(\eta^2)$. For approximating the entire $\nabla f$, $d$ such finite differencing evaluations are required, and it is expensive in the high dimensional setting when $d\gg 1$. The cost is similarly bad if one uses automatic differentiation.

Ideally one can take one random direction and computes the derivative in that direction only, and hopefully this random directional derivative reveals some information of the entire gradient $\nabla f$. This approach is used in both RCD~\cite{Ste-2015,RB2011,doi:10.1137/100802001} and SPSA~\cite{SPSAanalyse1,SPSAanalyse2}. Both methods, instead of calculating the full gradient, randomly pick one direction and use the directional derivative as a surrogate of $\nabla f$. More specifically, RCD computes the derivative in one random unit direction $\textbf{e}^{r}$ and approximates:
\begin{equation}\label{eqn:randomfinitedifferenceRD}
\nabla f\approx d\left(\nabla f(x)\cdot \textbf{e}^{r}\right)\textbf{e}^{r}\approx d\frac{f(x+\eta \textbf{e}^{r})-f(x-\eta \textbf{e}^{r})}{2\eta}\textbf{e}^{r}\,,
\end{equation}
where $r$ is randomly drawn from $1,2,\cdots,d$ (see the distribution of drawing in~\cite{RB2011}).
This approximations is consistent in the expectation sense because
\[
\EE_{r}\left(d\left(\nabla f(x)\cdot \textbf{e}^{r}\right)\textbf{e}^{r}\right)=\nabla f(x)\,.
\]
Here $\EE$ is to take expectation.

\subsection{Overdamped Langevin dynamics and O-LMC}
The O-LMC method is derived from the following Langevin dynamics:
\begin{equation}\label{eqn:Langevin}
\rd X_t=-\nabla f(X_t)\rd t+\sqrt{2}\rd \mathcal{B}_t\,.
\end{equation}
The SDE characterizes the trajectory of $X_t$. The forcing term $\nabla f(X_t)$ and the Brownian motion term $\rd \mathcal{B}_t$ compete: the former drives $X_t$ to the minimum of $f$ and the latter provides small oscillations. The initial data $X_0$ is a random variable drawn from a given distribution induced by $q_0(x)$. Denote $q(x,t)$ the probability density function of $X_t$, it is a well-known result that $q(x,t)$ satisfies the following Fokker-Planck equation:
\begin{equation}\label{eqn:FKPKLangevin}
\partial_t q=\nabla\cdot(\nabla fq+\nabla q)\,,\quad\text{with}\quad q(x,0) = q_0\,,
\end{equation}
and furthermore, $q(x,t)$ converges to the target density function $p(x) = e^{-f}$ exponentially fast in time~\cite{Markowich99onthe}.

The overdamped Langevin Monte Carlo (O-LMC), as a sampling method, can be viewed as a discrete-in-time version of the SDE~\eqref{eqn:Langevin}. A standard Euler-Maruyama method applied on the equation gives:
\begin{equation}\label{eqn:update_ujn}
x^{m+1}=x^m-\nabla f(x^m)h+\sqrt{2h}\xi^{m}\,,
\end{equation}
where $\xi^{m}$ is i.i.d. drawn from $\mathcal{N}(0,I_d)$ with $I_d$ being the identity matrix of size $d$. Since~\eqref{eqn:update_ujn} approximates~\eqref{eqn:Langevin}, the density of $x^m$, denoted as $p_m(x)$, converges to $p(x)$ as $m\to\infty$, up to a discretization error. It was proved in~\cite{DALALYAN20195278} that the convergence to $\epsilon$ is achieved within $\widetilde{O}(d/\epsilon)$ iterations if hessian of $f$ is Lipschitz. If hessian of $f$ is not Lipschitz, the number of iterations increases to $\widetilde{O}(d/\epsilon^2)$. In many real applications, the gradient of $f$ is not available and some approximation is used, introducing another layer of numerical error. In~\cite{DALALYAN20195278}, the authors did discuss the effect of such error, but they assumed the error has bounded variance.

\subsection{Underdamped Langevin dynamics and U-LMC}
The underdamped Langevin dynamics \cite{10.5555/3044805.3045080} is characterized by the following SDE:
\begin{equation}\label{eqn:underdampedLangevin}
\left\{\begin{aligned}
&\rd X_t = V_t\rd t\\
&\rd V_t = -2 V_t\rd t-\gamma\nabla f(X_t)\rd t+\sqrt{4 \gamma}\rd \mathcal{B}_t
\end{aligned}\right.\,,
\end{equation}
where $\gamma>0$ is a parameter to be tuned. Denote $q(x,v,t)$ the probability density function of $(X_t,V_t)$, then $q$ satisfies the Fokker-Planck equation
\[
\partial_tq=\nabla\cdot \left(\begin{bmatrix}
-v\\
2v+\gamma\nabla f
\end{bmatrix}q+\begin{bmatrix}
0 & 0\\
0 & 2\gamma
\end{bmatrix}\nabla q\right)\,,
\]
and under mild conditions, it converges to $p_2(x,v)=\exp(-(f(x)+|v|^2/2\gamma))$, making the marginal density function for $x$ the target $p(x)$~\citep{Villani2006,DOLBEAULT2009511}.

The underdamped Langevin Monte Carlo algorithm, U-LMC, can be viewed as a numerical solver to \eqref{eqn:underdampedLangevin}. In each step, we sample
new particles $(x^{m+1},v^{m+1})\sim (Z^{m+1}_x,Z^{m+1}_v)\in\mathbb{R}^{2d}$, where $(Z_x^{m+1},Z^{m+1}_v)\in\mathbb{R}^{2d}$ is a Gaussian random vector determined by $(x^m,v^m)$ with the following expectation and covariance:
\begin{equation}\label{distributionofZ}
\begin{aligned}
&\EE Z^{m+1}_x=x^m+\frac{1}{2}\left(1-e^{-2h}\right)v^m-\frac{\gamma}{2}\left(h-\frac{1}{2}\left(1-e^{-2h}\right)\right)\nabla f(x^m)\,,\\
&\EE Z^{m+1}_v=v^me^{-2h}-\frac{\gamma}{2}\left(1-e^{-2h}\right)\nabla f(x^m)\,,\\
&\Cov\left(Z^{m+1}_x\right)=\gamma\left[h-\frac{3}{4}-\frac{1}{4}e^{-4h}+e^{-2h}\right]\cdot I_d\,,\ \Cov\left(Z^{m+1}_v\right)=\gamma\left[1-e^{-4h}\right]\cdot I_d\,,\\
&\Cov\left(Z^{m+1}_x\,,Z^{m+1}_v)\right)=\frac{\gamma}{2}\left[1+e^{-4h}-2e^{-2h}\right]\cdot I_d\,.
\end{aligned}
\end{equation}

We here used the notation $\EE$ to denote the expectation, and $\Cov(a,b)$ to denote the covariance of $a$ and $b$. If $b=a$, we abbreviate it to $\Cov(a)$. The scheme can be interpreted as sampling from the following dynamics in each time interval:
\begin{equation*}
\left\{\begin{aligned}
&\mathrm{X}_t=x^m+\int^t_0 \mathrm{V}_s\rd s\\
&\mathrm{V}_t=v^me^{-2t}-\frac{\gamma}{2}(1-e^{-2t})\nabla f(x^m)+\sqrt{4 \gamma}e^{-2 t}\int^t_0e^{2 s}\rd \mathcal{B}_s
\end{aligned}\right.\,.
\end{equation*}

U-LMC does demonstrate faster convergence rate~\cite{Cheng2017UnderdampedLM,dalalyan2018sampling} than O-LMC. Without the assumption on the hessian of $f$ being Lipschitz, the number of iteration is $\widetilde{O}(\sqrt{d}/\epsilon)$ to achieve $\epsilon$ accuracy. The faster convergence on the discrete level could be explained by the better discretization solver instead of faster convergence of the underlying SDEs. Indeed, without the Lipschitz continuity on the hessian term, the discretizing of~\eqref{eqn:underdampedLangevin} produces $O(h^2)$ numerical error. In contrast, the discretization error of~\eqref{eqn:update_ujn} is $O(h^{3/2})$. A third-order discretization was discussed for~\eqref{eqn:underdampedLangevin} in~\cite{mou2019highorder}, further enhancing the numerical accuracy. Similar to O-LMC, the method needs to numerically approximate $\nabla f(x^m)$. This induces another layer of error, and also requires $d$ times of evaluation of $\partial f$.

\section{Notations}\label{sec:notation}
\subsection{Assumption}
We make some standard assumptions on $f(x)$:
\begin{assum}\label{assum:Cov}
The function $f$ is $\mu$-strongly convex and has an $L$-Lipschitz gradient:
\begin{itemize}
\item[--] Convex, meaning for any $x,x'\in\mathbb{R}^d$:
\begin{equation}\label{Convexity}
f(x)-f(x')-\nabla f(x')^\top (x-x')\geq (\mu/2)|x-x'|^2\,.
\end{equation}
\item[--] Gradient is Lipschitz, meaning for any $x,x'\in\mathbb{R}^d$:
\begin{equation}\label{GradientLip}
|\nabla f(x)-\nabla f(x')|\leq L|x-x'|\,.
\end{equation}
\end{itemize}
\end{assum}
If $f$ is second-order differentiable, these assumptions together mean $\mu{I}_d\preceq \mathcal{H}(f)\preceq L{I}_d$ where $\mathcal{H}(f)$ is the hessian of $f$. We also define condition number of $f(x)$ as
\begin{equation}\label{eqn:R}
\kappa=L/\mu\geq 1\,.
\end{equation}
We will express our results in terms of $\kappa$ and $\mu$. Furthermore, for some results we assume Lipschitz condition of the hessian too:
\begin{assum}\label{assum:Hessian}
The function $f$ is second-order differentiable and the hessian of $f$ is H-Lipschitz, meaning for any $x,x'\in\mathbb{R}^d$:
\begin{equation}\label{HessisnLip}
\|\mathcal{H}(f)(x)-\mathcal{H}(f)(x')\|_2\leq H|x-x'|\,.
\end{equation}
\end{assum}

\subsection{Wasserstein distance}
The Wasserstein distance is a classical quantity that evaluates the distance between two probability measures:
\[
W_p(\mu,\nu) = \left(\inf_{(X,Y)\in C(\mu,\nu)} \mathbb{E}|X -Y|^p\right)^{1/p}\,,
\]
where $C(\mu,\nu)$ is the set of distribution of $(X,Y)\in\mathbb{R}^{2d}$ whose marginal distributions, for $X$ and $Y$ respectively, are $\mu$ and $\nu$. These distributions are called the couplings of $\mu$ and $\nu$. Here $\mu$ and $\nu$ can be either probability measures themselves or the measures induced by probability density functions $\mu$ and $\nu$. In this paper we mainly study $W_2$.

\section{Direct application of RCD in LMC, a negative result}\label{sec:failuerofrandomsearch}
We study if RCD can be blindly applied to U-LMC for reducing numerical complexity. This is to replace $\nabla f$ in the updating formula~\eqref{eqn:update_ujn} for U-LMC by the random directional derivative surrogates~\eqref{eqn:randomfinitedifferenceRD}. The resulting algorithms are presented as Algorithm \ref{alg:SOU-LMC} in Appendix \ref{sec:alg:SOUlMC}.

RCD was introduced in optimization. In~\cite{RB2011}, the authors show that despite RCD computes only $1$, instead of $d$ directional derivatives in each iteration, the number of iteration needed for achieving $\epsilon$-accuracy is $O(d/\epsilon)$, as compared to $O(1/\epsilon)$ when the full-gradient is used (suppose Lipschitz coefficient in each direction is at the same order with the total Lipschitz constant). The gain on the cost is mostly reflected by the conditioning of the objective function $f$. This means there are counter-examples for which RCD cannot save compared with ordinary gradient descent. We emphasize that there are of course also plenty examples for which RCD significantly outperforms when $f$ is special conditioning structures~\cite{RB2011,doi:10.1137/100802001,Ste-2015}. In this article we would like to investigate the general lower-bound situations.

The story is the same for sampling. There are examples that show directly applying the vanilla RCD to U-LMC fails to outperform the classical U-LMC. One example is the following: We assume
\[
q_0(x,v)=\frac{1}{(4\pi)^{d/2}}\exp(-|x-\textbf{u}|^2/2-|v|^2/2)\,,\quad p_2(x,v)=\frac{1}{(2\pi)^{d/2}}\exp(-|x|^2/2-|v|^2/2)\,,
\]
where $\textbf{u}\in\mathbb{R}^d$ satisfies $\textbf{u}_i=1/8$ for all $i$. Denote $\{(x^m,v^m)\}$ the sample computed through Algorithm \ref{alg:SOU-LMC} (underdamped) with stepsize $h$. Let $\eta$ be extremely small and the finite differencing error is negligible, and denote $q_m$ the probability density function of $(x^m,v^m)$, then we can show $W_2(q_m,p_2)$ cannot converge too fast.
\begin{theorem}\label{thm:badexampleW22}
For the example above, choose $\gamma=1$, there exists uniform nonzero constant $C_1$ such that if $d,h$ satisfy
\[
d>2,\quad h<\left\{\frac{1}{100(1+C_1)},\frac{1}{1440^2d}\right\}\,,
\]
then
\begin{equation}\label{eqn:badexampleW2bound2}
W_m\geq \exp\left(-2mh\right)\frac{\sqrt{d}}{1024}+\frac{d^{3/2}h}{2304}\,,
\end{equation}
where $W_m=W_2(q^U_{m},p_2)$, and $q^U_m(x,v)$ is the probability density function of $m$-th iteration of RCD-U-LMC.
\end{theorem}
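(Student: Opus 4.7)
The plan is to establish two independent lower bounds on $W_m$, one from persistence of the first-moment gap and one from the variance inflation of the $v$-marginal forced by the RCD surrogate, and then combine them additively via the elementary inequality $\max(A,B)\ge (A+B)/2$. Throughout, since the target $p_2$ is Gaussian with identity covariance, take $f(x)=|x|^2/2$, so that $\nabla f(x)=x$ and $\partial_i f(x)=x_i$. The RCD surrogate~\eqref{eqn:randomfinitedifferenceRD} then reduces to $g^m=d\,x^m_{r_m}\textbf{e}^{r_m}$ with $r_m$ uniform on $\{1,\dots,d\}$; it is conditionally unbiased, $\EE[g^m\mid x^m]=x^m$, but carries the inflated conditional second moment $\EE[|g^m|^2\mid x^m]=d|x^m|^2$. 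This inflation is the sole mechanism behind the lower bound.

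For the mean contribution, taking expectations in the RCD-U-LMC iteration removes the randomness in $r_m$, and by symmetry of the coordinates of $\textbf{u}$ one has $\EE x^m=\bar x_m(1,\dots,1)$ and $\EE v^m=\bar v_m(1,\dots,1)$; the scalar pair $(\bar x_m,\bar v_m)$ evolves by the deterministic $2\times 2$ linear map $M$ read off from~\eqref{distributionofZ}. A short Taylor expansion gives $\det M=1-2h+\tfrac52 h^2+O(h^3)\ge e^{-2h}$ for $h$ below an absolute constant, while the discriminant of $M$'s characteristic polynomial is $-2h^3+O(h^4)<0$, so $M$ has complex-conjugate eigenvalues of common modulus $\rho=\sqrt{\det M}\ge e^{-h}$. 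Writing $M=P\bigl(\rho R_\theta\bigr)P^{-1}$ in a real basis built from the real and imaginary parts of an eigenvector, a direct computation with the initial vector $(\bar x_0,\bar v_0)=(1/8,0)$ (which has large component in the nearly degenerate direction) gives $|(\bar x_m,\bar v_m)|\ge \rho^m/(8\sqrt 2)\ge e^{-2mh}/(8\sqrt 2)$. Upgrading by the all-equal-components symmetry yields $|(\EE x^m,\EE v^m)|=\sqrt d\,|(\bar x_m,\bar v_m)|\ge \sqrt d\, e^{-2mh}/(8\sqrt 2)$, and the standard coupling inequality $W_2(\mu,\nu)\ge |\EE_\mu Z-\EE_\nu Z|$ yields the first summand.

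For the variance contribution, use the conditional identities above and the independence of the Gaussian noise from $r_m$ to derive a closed three-dimensional recursion for $(X_m,V_m,C_m):=(\EE|x^m|^2,\EE|v^m|^2,\EE\langle x^m,v^m\rangle)$. In particular the $V$-component reads
\[
V_{m+1}=e^{-4h}V_m+a^2 d\,X_m-2e^{-2h}a\,C_m+(1-e^{-4h})d,\qquad a=\tfrac12(1-e^{-2h}),
\]
so that $U_m:=V_m-d$ satisfies $U_{m+1}=e^{-4h}U_m+a^2 d\,X_m-2e^{-2h}a\,C_m$. An auxiliary argument shows $X_m\ge d$ uniformly in $m$ (it starts at $d+d/64$; the $x$-update has only a very small RCD contamination of order $h^2$ and a positive Brownian contribution), and Cauchy--Schwarz together with AM-GM controls the indefinite-sign cross term $|C_m|\le \sqrt{X_m V_m}$ so that it cannot cancel the source $a^2 d\,X_m\approx h^2 d^2$. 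Iterating yields $U_m\ge c\,d^2 h\,(1-e^{-4mh})$ for the prescribed range of $(h,d)$; the $v$-marginal lower bound $W_m\ge|\sqrt{V_m}-\sqrt d|$ and a first-order Taylor expansion of $\sqrt{\cdot}$ then produce $W_m\ge c'\,d^{3/2}h\,(1-e^{-4mh})$, which for $mh$ above an absolute constant $c_0$ gives $B:=d^{3/2}h/1152$.

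To combine: under the hypothesis $dh<1/1440^2$, when $mh\le c_0$ the mean bound $A_m:=\sqrt d\,e^{-2mh}/512$ already exceeds $B$, so $W_m\ge A_m\ge(A_m+B)/2$; when $mh\ge c_0$ both $W_m\ge A_m$ and $W_m\ge B$ hold, and again $W_m\ge\max(A_m,B)\ge(A_m+B)/2$. A further halving of the constants absorbs slack and produces~\eqref{eqn:badexampleW2bound2} exactly. The main obstacle is the variance step: the recursion $(X_m,V_m,C_m)$ is genuinely coupled and the cross term $-2e^{-2h}a\,C_m$ has indefinite sign, so establishing $U_m\gtrsim d^2 h$ uniformly in $m$ requires simultaneously maintaining $X_m\ge d$ and controlling $|C_m|\le\sqrt{X_m V_m}$ through the iteration. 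The smallness condition $h<1/(1440^2 d)$ is precisely what is consumed to absorb the lower-order error terms into the clean constants $1024$ and $2304$.
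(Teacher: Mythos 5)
Your derivation of the first (decaying) term is a legitimate alternative to the paper's: the paper extracts that term from the excess \emph{second} moment of $w^m=x^m+v^m$ (which starts at $\tfrac{129d}{64}>2d$ because of the nonzero mean $\textbf{u}$), whereas you track the \emph{first} moment directly through the $2\times 2$ linear map $M$. Your eigenvalue computation checks out (the discriminant is indeed $-2h^3+O(h^4)$, and the specific initial vector $(1/8,0)$ has a large enough component against the ill-conditioned eigenbasis that $|M^m(1/8,0)^\top|\gtrsim \rho^m/(8\sqrt2)$ despite $\kappa(P)\sim h^{-1/2}$), and $W_2\ge|\EE_\mu Z-\EE_\nu Z|$ is valid. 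This half is fine, if delicate.

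The variance step, however, has a genuine gap, and it is precisely the step the whole counter-example hinges on. In your recursion $U_{m+1}=e^{-4h}U_m+a^2 d\,X_m-2e^{-2h}a\,C_m$ with $a=\tfrac12(1-e^{-2h})\approx h$, the source term is $a^2dX_m\approx h^2d^2$ while the cross term is of size $2a|C_m|\approx 2h|C_m|$. Cauchy--Schwarz only gives $|C_m|\le\sqrt{X_mV_m}=\Theta(d)$, so your bound on the cross term is $\Theta(hd)$ — but the hypothesis of the theorem forces $hd<1/1440^2$, so $\Theta(hd)$ exceeds the source $\Theta(h^2d^2)$ by a factor of $1/(hd)\gtrsim 10^6$. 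The indefinite-sign cross term therefore cannot be dismissed this way; as written, the iteration for $U_m$ proves nothing. To rescue the argument you would have to show $|C_m|=o(hd^2)$ (in fact $C_m=O(h^2d)$ at quasi-equilibrium), which requires running the full coupled recursion for $(X_m,V_m,C_m)$ — and the $C$-equation is itself driven by an $O(h)(V_m-X_m)$ term, so this becomes a genuine simultaneous induction that you have not supplied. The paper avoids the problem entirely by passing to $w^m=x^m+v^m$: in the $(x,w)$ coordinates the drift is (nearly) diagonalized, the coefficient multiplying the cross term $\EE\langle w^m,x^m\rangle$ is $O(h^2)$ rather than $O(h)$ (see the $\bigl(\tfrac{h^2}{2}+\tfrac{D_1h^3}{4}\bigr)$ factor in \eqref{omegamiexpectationomega}), and then Cauchy--Schwarz \emph{does} suffice, giving a cross term $O(h^2d)$ dominated by the source $d^2h^2/72$. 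Either adopt that change of variables or supply the missing uniform-in-$m$ control of $C_m$; without one of the two, the claimed bound $U_m\gtrsim d^2h$ does not follow.
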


  The proof is found in Section \ref{sec:badexample}. We note the second term in \eqref{eqn:badexampleW2bound2} is rather big. The smallness comes from $h$, the stepsize, and it needs be small enough to balance out the influence from $d^{3/2}\gg 1$. This puts strong restriction on $h$. Indeed, to have $\epsilon$-accuracy, $W(q_m,p_2)\leq\epsilon$, we need both terms smaller than $\epsilon$, and this term suggests that $h\leq\frac{2304\epsilon}{d^{3/2}}$ at least. And when combined with restriction from the first term, we arrive at the conclusion that at least $\widetilde{O}(d^{3/2}/\epsilon)$ iterations are needed, and thus $\widetilde{O}(d^{3/2}/\epsilon)$ finite differencing approximation are required. The $d$ dependence is $d^{3/2}$, and is exactly the same as that in U-LMC, meaning RCD-U-LMC brings no computational advantage over U-LMC in terms of the dependence on the dimension of the problem.

We emphasize that that large second term, as shown in the proof, especially in Section \ref{sec:badexample} equation \eqref{highvariance}, is induced exactly due to the high variance in the gradient approximation. This triggers our investigation into variance reduction techniques.

\section{Random direction approximation with variance reduction on O/U-LMC, two positive results}\label{sec:mainresult}
The direct application of RCD induces high variance and thus high error. It leads to many more rounds of iterations for convergence, gaining no numerical saving in the end. In this section we propose RCAD-O/U-LMC with RCAD reducing variance in the framework of RCD. We will prove that while the numerical cost per iteration is reduced by $d$-folds, the number of required iteration is mostly unchanged, and thus the total cost is reduced.
\subsection{Algorithm}
The key idea is to compute one accurate gradient at the very beginning in iteration No. $1$, and to preserve this information along the iteration to prevent possible high variance. The algorithms for RCAD-O-LMC and RCAD-U-LMC are both presented in Algorithm \ref{alg:SAGA-OU-LMC}, based on overdamped and underdamped Langevin dynamics respectively. Potentially the same strategy can be combined with SPSA, which we leave to future investigation.

In the methods, an accurate gradient (up to a finite-differencing error) is used in the first step, denoted by $g\approx \nabla f$, and in the subsequent iterations, only one directional derivative of $f$ gets computed and updated in $g$.

\begin{algorithm}[htb]
\caption{\textbf{Randomized Coordinate Averaging Decent O/U-LMC (RCAD-O/U-LMC)}}\label{alg:SAGA-OU-LMC}
\begin{algorithmic}
\State \textbf{Preparation:}
\State 1. Input: $\eta$ (space stepsize); $h$ (time stepsize); $\gamma$ (parameter); $d$ (dimension); {$M$ (stopping index)} and $f(x)$.
\State 2. Initial: \emph{(overdamped)}: $x^0$ i.i.d. sampled from a initial distribution induced by $q_0(x)$ and calculate $g^0\in\mathbb{R}^d$:
\begin{equation}\label{eqn:g1}
g^0_i=\frac{f(x^0+\eta \textbf{e}^i)-f(x^0-\eta \textbf{e}^i)}{2\eta},\quad \ 1\leq i\leq d\,.
\end{equation}
\emph{(underdamped)}: $(x^0,v^0)$ i.i.d. sampled from a initial distribution induced by $q_0(x,v)$ and calculate $g^0\in\mathbb{R}^d$ as in~\eqref{eqn:g1}.
\State \textbf{Run: }\textbf{For} $m=0\,,1\,,\cdots {M}$

1. Draw a random number $r^m$ uniformly from $1,2,\cdots,d$.

2. Calculate $g^{m+1}$ and flux $F^m\in \mathbb{R}^d$ by letting $g^{m+1}_{i}=g^{m}_{i}$ for $i\neq r_m$ and 
\begin{equation}\label{randomgradientapproximation}
g^{m+1}_{r_m}=\frac{f(x^m+\eta \textbf{e}^{r_m})-f(x^m-\eta \textbf{e}^{r_m})}{2\eta}\,,\quad F^m=g^m+d\left(g^{m+1}-g^m\right)\,.
\end{equation}

3. \emph{(overdamped)}: Draw $\xi^{m}$ from $\mathcal{N}(0,I_d)$:
\begin{equation}\label{eqn:update_ujnSASGOLD}
x^{m+1}=x^m-F^m h+\sqrt{2h}\xi^{m}\,.
\end{equation}
\emph{(underdamped)}: Sample $(x^{m+1},v^{m+1})\sim Z^{m+1}=(Z^{m+1}_x,Z^{m+1}_v)$ where $Z^{m+1}$ is a Gaussian random variable with expectation and covariance defined in~\eqref{distributionofZ}, replacing $\nabla f(x^m)$ by $F^m$.

\State \textbf{end}
\State \textbf{Output:} $\{x^m\}$.
\end{algorithmic}
\end{algorithm}

\subsection{Convergence and numerical cost analysis}
We now discuss the convergence of RCAD-O-LMC and RCAD-U-LMC, and compare the results with the classical O-LMC and U-LMC methods~\cite{DALALYAN20195278,Cheng2017UnderdampedLM}. We emphasize that these two papers indeed discuss the numerical error in approximating the gradients, but they both require the variance of error being bounded, which is not the case here. One related work is \cite{pmlr-v80-chatterji18a}, where the authors construct the Lyapunov function to study the convergence of SG-MCMC. Our proof for the convergence of RCAD-O-LMC is inspired by its technicalities. In~\cite{Cheng2017UnderdampedLM,pmlr-v80-chatterji18a}, a contraction map is used for U-LMC, but such map cannot be directly applied to our situation because the variance depends on the entire trajectory of samples. Furthermore, the history of the trajectory is reflected in each iteration, deeming the process to be non-Markovian. We need to re-engineer the iteration formula accordingly for tracing the error propagation.

\subsubsection{Convergence for RCAD-O-LMC}
For RCAD-O-LMC, we have the following theorem:
\begin{theorem}\label{thm:disconvergenceSAGAOLD}
Suppose $f$ satisfies Assumption \ref{assum:Cov}-\ref{assum:Hessian} and $h,\eta$ satisfy
\begin{equation}\label{eqn:conditiononheta}
h<\frac{1}{3(1+9d)\kappa ^2\mu},\quad \eta<h\,.
\end{equation}
Then $W_2(q^O_m\,,p)$, the Wasserstein distance between $q^O_m$, the probability density function of the sample $x^m$ derived from Algorithm \ref{alg:SAGA-OU-LMC} (overdamped), and $p$, the target density function, satisfies
\begin{equation}\label{iterationinequalityofSAGA}
W_2(q^O_m,p)\leq \exp(-\mu hm/4)\sqrt{1+1/\kappa ^2}W_2(q^O_0,p)+2h\sqrt{d^3C_1+d^2C_2}\,.
\end{equation}
Here $C_1=77\kappa ^2\mu $, $C_2=H^2/\mu ^2+20\kappa ^2+\kappa ^3\mu /d$.
\end{theorem}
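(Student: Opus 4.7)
The plan is to couple the RCAD-O-LMC iterate $\{x^m\}$ with a stationary copy of the continuous overdamped Langevin diffusion and to control the resulting Wasserstein error jointly with the ``gradient-table'' error through a Lyapunov function. Let $y_t$ solve $\rd y_t=-\nabla f(y_t)\rd t+\sqrt{2}\rd\mathcal{B}_t$ driven by the same Brownian motion as the discrete noise (so $\sqrt{2h}\xi^m=\sqrt{2}(\mathcal{B}_{(m+1)h}-\mathcal{B}_{mh})$), with $y_0\sim p$ coupled to $x^0$ so that $\EE|x^0-y_0|^2=W_2^2(q^O_0,p)$. Writing $y^m:=y_{mh}$, invariance of $p$ gives $y^m\sim p$ at every step, so $W_2^2(q^O_m,p)\leq\EE|x^m-y^m|^2$. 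Introduce the table error $\Delta^m:=g^m-\nabla f(x^m)$ and the surrogate error $E^m:=F^m-\nabla f(x^m)$. A direct calculation from \eqref{randomgradientapproximation} shows $\EE_{r^m}[E^m]=0$ and $\EE_{r^m}|E^m|^2=(d-1)|\Delta^m|^2$: RCAD is unbiased, with variance controlled by the table error. The $O(\eta^2)$ bias from central differencing is absorbed into the error budget by $\eta<h$.

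\textbf{One-step bound for the position coupling.}
Decompose $x^{m+1}-y^{m+1}=A^m-hE^m$ where $A^m=(x^m-y^m)-h\nabla f(x^m)+\int_{mh}^{(m+1)h}\nabla f(y_s)\rd s$ is the iterate difference with exact drift. Since $r^m$ is independent of the Brownian path and hence of $A^m$, the cross term vanishes in expectation, giving
\begin{equation*}
\EE|x^{m+1}-y^{m+1}|^2=\EE|A^m|^2+h^2(d-1)\,\EE|\Delta^m|^2.
\end{equation*}
For $\EE|A^m|^2$ I would apply the standard O-LMC analysis: strong convexity (Assumption~\ref{assum:Cov}) delivers the $(1-\mu h)$ contraction while the Hessian-Lipschitz bound (Assumption~\ref{assum:Hessian}) upgrades the discretization term, yielding $\EE|A^m|^2\leq(1-\mu h)\,\EE|x^m-y^m|^2+h^3\bigl(C_L L^2d+C_H H^2 d/\mu\bigr)$ for universal $C_L,C_H$.

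\textbf{One-step recursion for the table error.}
From the update rule, $\Delta^{m+1}_i=\Delta^m_i-\delta^m_i$ for $i\ne r^m$ and $\Delta^{m+1}_{r^m}=-\delta^m_{r^m}$, where $\delta^m:=\nabla f(x^{m+1})-\nabla f(x^m)$. Squaring and averaging over $r^m$ gives the exact identity
\begin{equation*}
\EE_{r^m}|\Delta^{m+1}|^2=\bigl(1-\tfrac{1}{d}\bigr)\,|\Delta^m-\delta^m|^2+\tfrac{1}{d}\,|\delta^m|^2.
\end{equation*}
Applying Young's inequality with parameter of order $1/d$ to $|\Delta^m-\delta^m|^2$ yields the contractive recursion $\EE|\Delta^{m+1}|^2\leq(1-\tfrac{1}{2d})\,\EE|\Delta^m|^2+C\,d\,\EE|\delta^m|^2$. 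The source is handled via $|\delta^m|\leq L|x^{m+1}-x^m|$ and the update \eqref{eqn:update_ujnSASGOLD}: $\EE|\delta^m|^2\leq C'\bigl(L^2h^2\,\EE|F^m|^2+L^2hd\bigr)$, with $\EE|F^m|^2\leq 2\,\EE|\nabla f(x^m)|^2+2(d-1)\,\EE|\Delta^m|^2$ and $\EE|\nabla f(x^m)|^2\leq 2L^2\,\EE|x^m-y^m|^2+2\int|\nabla f|^2p\rd x$; the last integral is bounded by $Ld$ via integration by parts against $p\propto e^{-f}$.

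\textbf{Lyapunov combination and main obstacle.}
Define $V_m:=\EE|x^m-y^m|^2+\alpha\,\EE|\Delta^m|^2$ with $\alpha=\Theta(hd^2/\mu)$. This choice makes the two recursions fit together: the $h^2(d-1)\,\EE|\Delta^m|^2$ term from the position bound is absorbed into the $\alpha/(2d)$ decrement of the table bound, while the $\alpha\,CdL^2h^2\,\EE|x^m-y^m|^2$ feedback through $\delta^m$ is dominated by the $\mu h\,\EE|x^m-y^m|^2$ contraction under \eqref{eqn:conditiononheta}. One then obtains $V_{m+1}\leq(1-\mu h/2)V_m+S$ with $S$ a sum of $h^3$-terms matching the $d^3 C_1$ and $d^2 C_2$ shape in the theorem. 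Iterating gives $V_m\leq e^{-\mu hm/2}V_0+2S/(\mu h)$; since $\Delta^0=O(\eta^2)$ per coordinate, $V_0\leq(1+1/\kappa^2)W_2^2(q^O_0,p)$ after folding the $\alpha\,\EE|\Delta^0|^2$ contribution into the prefactor. Taking square roots and using $W_2^2(q^O_m,p)\leq V_m$ delivers \eqref{iterationinequalityofSAGA}. The central obstacle is the mismatch between the two contraction rates---$1-\mu h$ versus $1-1/(2d)$---combined with the quadratic self-feedback between the position and table errors through $\delta^m\ni F^m\ni\Delta^m$; reconciling them is what forces both the Lyapunov weight $\alpha=\Theta(hd^2/\mu)$ and the stepsize restriction \eqref{eqn:conditiononheta}. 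Carrying the constants through precisely to recover the stated $C_1,C_2$ is routine but tedious.
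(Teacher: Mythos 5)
Your coupling with the stationary diffusion, the orthogonality $\EE\langle A^m,E^m\rangle=0$, the variance identity $\EE_{r^m}|E^m|^2=(d-1)|\Delta^m|^2$, and the two-component Lyapunov idea all match the paper. The genuine difference — and the gap — is in how you control the table error. You track the \emph{absolute} error $\Delta^m=g^m-\nabla f(x^m)$ through a recursion closed by Young's inequality with parameter $1/d$, giving $\EE|\Delta^{m+1}|^2\leq(1-\tfrac{1}{2d})\EE|\Delta^m|^2+Cd\,\EE|\delta^m|^2$ with $\EE|\delta^m|^2\gtrsim L^2hd$ (the Brownian increment alone forces this). The equilibrium of that recursion is $\Theta(d^2\cdot L^2hd)=\Theta(L^2hd^3)$, hence $\EE|E^m|^2=\Theta(L^2hd^4)$ and a final bound $W_2\lesssim \kappa\sqrt{\mu}\,hd^2$ — a full factor $\sqrt{d}$ worse than the claimed $2h\sqrt{d^3C_1}$, which would negate the advertised advantage over plain O-LMC. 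The loss is structural: a coordinate of the table was last refreshed a geometric($1/d$) number of steps ago, and the squared displacement of the trajectory over $k$ steps is $O(khd)$ because the Gaussian increments are a martingale, whereas your recursion (which sums per-step errors via Young/Cauchy--Schwarz) can only see $O(k^2hd)$. The paper avoids this in Appendix D by conditioning on the last refresh time and writing $y^m-y^j$ explicitly as drift plus independent Gaussian increments, obtaining $\EE|y^m-y^j|^2\leq 2h^2(m-j)^2\EE_p|\nabla f|^2+4hd(m-j)$ — linear in the age — which yields the needed $O(L^2hd^2)$ per coordinate. Crucially it performs this computation along the \emph{stationary auxiliary} trajectory $y$, and keeps only the trajectory-difference $|\widetilde g^m-\beta^m|^2$ in the Lyapunov function; that difference obeys an exact recursion with contraction $1-1/d$, feedback $L^2/d\cdot T_1^m$, and \emph{no} Brownian source term, so no Young loss and no self-feedback arise.

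Two secondary problems. First, your ``exact identity'' $\EE_{r^m}|\Delta^{m+1}|^2=(1-\tfrac1d)|\Delta^m-\delta^m|^2+\tfrac1d|\delta^m|^2$ is not exact: $\delta^m=\nabla f(x^{m+1})-\nabla f(x^m)$ depends on $r^m$ through $F^m$, so the averaging over $r^m$ does not factor; this is repairable (apply Young first, then average the $r^m$-independent $|\Delta^m_i|^2$ part), but as stated it is wrong. Second, the feedback $Cd\cdot L^2h^2\EE|F^m|^2\supseteq Cd^2L^2h^2\EE|\Delta^m|^2$ must be dominated by the $1/(2d)$ gap, which forces $h\lesssim 1/(Ld^{3/2})$; this is \emph{not} implied by \eqref{eqn:conditiononheta} when $\kappa\ll\sqrt d$, so even a corrected version of your recursion would prove the theorem only under an additional stepsize restriction. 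To salvage your route you would need to replace the Young step by a direct estimate of $\EE|\Delta^m|^2$ that conditions on the refresh time and exploits the independence of the noise increments — at which point you have essentially reconstructed the paper's Appendix D argument.
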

See proof in Appendix~\ref{proofofRCAD-O-LMC}. The theorem gives us the strategy of designing stopping criterion: to achieve $\epsilon$-accuracy, meaning to have $W_2(q^O_m,p)\leq \epsilon$, we can choose to set both terms in~\eqref{iterationinequalityofSAGA} less than $\epsilon/2$, and it leads to:
\[
h\leq \min\left\{\frac{1}{3(1+9d)\kappa ^2\mu },\frac{\epsilon}{4d^{3/2}\sqrt{C_1+C_2/d}}\right\}
\]
and
\[
M\geq \frac{4}{h\mu }\log\left(\frac{2\sqrt{1+1/\kappa ^2}W_2(q_0,p)}{\epsilon}\right)\,.
\]
This means the cost, also the number of $\partial f$ evaluations, is $\widetilde{O}(d^{3/2}/\epsilon)$.

Note that the theorem here requires both Assumptions~\ref{assum:Cov} and~\ref{assum:Hessian}. We can relax the second assumption. If so, the numerical cost of degrades to $\widetilde{O}(\max\{d^{3/2}/\epsilon,d/\epsilon^2\})$, whereas the cost of O-LMC is $\widetilde{O}(d^2/\epsilon^2)$. Our strategy still outperforms. The proof is the same, and we omit it from the paper.

\subsubsection{Convergence for RCAD-U-LMC}
For RCAD-U-LMC, we have the following theorem.

\begin{theorem}\label{thm:thmconvergenceULDSAGA} Assume $f(x)$ satisfies Assumption \ref{assum:Cov}, and set $\gamma= 1/L$, then there exists a uniformly constant $D>0$ such that if $h,\eta$ satisfy 
\begin{equation}\label{eqn:conditionuetaULDSAGA}
h\leq \min\left\{\frac{1}{100(1+D)\kappa },\frac{1}{1648\kappa d}\right\},\quad \eta<h^3\,,
\end{equation}
then $W_2(q^U_m\,,p_2)$, the Wasserstein distance between the distribution of the sample $(x^m,v^m)$, derived from Algorithm \ref{alg:SAGA-OU-LMC} (underdamped), and distribution induced by $p_2$ (whose marginal density in $x$ is $p$) decays as:
\begin{equation}\label{eqn:convergeULDSAGA}
\begin{aligned}
W_2(q^U_m,p_2)\leq &4\sqrt{2}\exp(-hm/(8\kappa ))W_2(q^U_0,p_2)\\
&+600\sqrt{h^3d^4/\mu }+200\sqrt{\kappa h^2d/\mu }+350\sqrt{\kappa h^5d^2}
\end{aligned}\,.
\end{equation}
\end{theorem}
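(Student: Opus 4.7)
The analysis follows the contractive coupling framework for underdamped Langevin pioneered in~\cite{Cheng2017UnderdampedLM}, augmented with a Lyapunov functional that tracks the ``staleness'' of the memory vector $g^m$. I would couple one step of Algorithm~\ref{alg:SAGA-OU-LMC} (underdamped) with one step of the exact SDE~\eqref{eqn:underdampedLangevin} initialized from the target $p_2$ and driven by the same Brownian motion as~\eqref{distributionofZ}. With $\gamma=1/L$, the continuous dynamics contracts in a weighted norm $\|(x,v)\|_\ast^2$ on $\mathbb{R}^{2d}$ at rate $\exp(-h/(4\kappa))$ per step. Subtracting the two coupled systems and combining the continuous contraction with the triangle inequality yields a one-step bound of the form
\begin{equation*}
\EE\|(x^{m+1},v^{m+1})-(\wrx_{(m+1)h},\wrv_{(m+1)h})\|_\ast^2 \le e^{-h/(4\kappa)}\EE\|(x^m,v^m)-(\wrx_{mh},\wrv_{mh})\|_\ast^2 + C_\ast h^2\,\EE|F^m-\nabla f(x^m)|^2 + E^{\mathrm{Eul}}_m,
\end{equation*}
where $E^{\mathrm{Eul}}_m$ collects the standard Euler-Maruyama error of~\eqref{distributionofZ} and the $O(\eta^4)$ finite-differencing bias, which is negligible under $\eta<h^3$.

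The novel ingredient is the control of $\EE|F^m-\nabla f(x^m)|^2$, which is non-Markovian because $g^m_i = \partial_i f(x^{s_i^m}) + O(\eta^2)$ with $s_i^m$ the last iteration at which coordinate $i$ was refreshed. I would introduce the memory functional $G_m := \sum_{i=1}^d \EE|g^m_i - \partial_i f(x^m)|^2$ and use the identity $F^m = g^m + d(g^{m+1}-g^m)$ together with the uniform distribution of $r_m$ to show $\EE|F^m-\nabla f(x^m)|^2 \lesssim d\,G_m + O(\eta^4)$. The Lipschitz-gradient assumption then yields the recursion
\begin{equation*}
G_{m+1} \le \bigl(1-\tfrac{1}{d}\bigr)G_m + L^2\,\EE|x^{m+1}-x^m|^2 + O(\eta^4),
\end{equation*}
where the factor $(1-1/d)$ reflects that exactly one coordinate is refreshed per step, and $\EE|x^{m+1}-x^m|^2$ can be bounded via~\eqref{distributionofZ} by a quantity of order $h^2\EE|v^m|^2 + \gamma^2 h^2\EE|\nabla f(x^m)|^2 + \gamma h$, itself controlled by uniform moment bounds on $(x^m,v^m)$.

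The plan is then to form the augmented Lyapunov functional
\begin{equation*}
\Phi_m := \EE\|(x^m,v^m)-(\wrx_{mh},\wrv_{mh})\|_\ast^2 + \alpha h\,G_m,
\end{equation*}
and to tune $\alpha$ so that the $C_\ast d h^2 G_m$ term from the coupling bound is absorbed by the $\alpha h/d$ slack in the memory recursion. This balancing forces the stringent stepsize $h\lesssim 1/(\kappa d)$ appearing in~\eqref{eqn:conditionuetaULDSAGA}. The resulting one-step inequality $\Phi_{m+1}\le (1-h/(8\kappa))\Phi_m + E^{\mathrm{tot}}_m$ is iterated by Gronwall and square-rooted to deliver~\eqref{eqn:convergeULDSAGA}: the three additive residual terms arise respectively from (i) accumulation of memory variance over the relaxation horizon of order $\kappa/h$, producing $\sqrt{h^3 d^4/\mu}$; (ii) the Euler error of the underdamped integrator, producing $\sqrt{\kappa h^2 d/\mu}$; and (iii) the propagation of finite-differencing and moment-bound interactions through the Lyapunov recursion, producing $\sqrt{\kappa h^5 d^2}$.

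The main obstacle is the joint tuning of $\alpha$ and $h$: $\alpha$ must simultaneously absorb the $d h^2 G_m$ cross-term from the coupling bound without destroying the $e^{-h/(4\kappa)}$ contraction of the weighted norm, while tolerating the comparatively slow $(1-1/d)$ decay of $G_m$. The non-Markovian nature of $g^m$ also requires careful conditioning arguments whenever moment bounds on $(x^m,v^m)$ are invoked, since the tower-property manipulations standard in SG-MCMC are not directly available when the gradient surrogate depends on the entire past trajectory rather than on the current state alone.
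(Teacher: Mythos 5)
Your overall architecture -- synchronous coupling with a stationary trajectory of \eqref{eqn:underdampedLangevin}, a contracting weighted norm on $\mathbb{R}^{2d}$, and an augmented Lyapunov functional whose second component tracks the memory vector, with $c_p\sim h^2$-type balancing forcing $h\lesssim 1/(\kappa d)$ -- is the same as the paper's, and your order-of-magnitude accounting of the three residual terms is consistent with \eqref{eqn:convergeULDSAGA}. The genuine gap is in your choice of memory functional. You take $G_m=\sum_i\EE|g^m_i-\partial_i f(x^m)|^2$, the staleness of the algorithm's memory relative to its \emph{own current iterate}, and then (i) claim the recursion $G_{m+1}\le(1-1/d)G_m+L^2\EE|x^{m+1}-x^m|^2$ and (ii) control $\EE|x^{m+1}-x^m|^2$ by ``uniform moment bounds on $(x^m,v^m)$.'' Step (i) is not exact as written: $x^{m+1}$ depends on $r_m$ through $F^m$, so the conditional expectation over $r_m$ produces cross terms between $g^{m+1}_i-\partial_i f(x^m)$ and $\partial_i f(x^m)-\partial_i f(x^{m+1})$ that do not vanish; repairing this with Young's inequality at scale $1/d$ degrades the contraction to $1-1/(2d)$ and amplifies the increment by a factor of $d$, which you must then verify still lands on $\sqrt{h^3d^4/\mu}$. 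Step (ii) is the more serious issue: the iterates $(x^m,v^m)$ are not stationary, and a uniform-in-$m$ bound $\sup_m\EE|v^m|^2\lesssim\gamma d$ is exactly what the large variance of the gradient surrogate threatens (this is the mechanism behind the negative result of Theorem~\ref{thm:badexampleW22}). Deriving such a bound independently of the convergence theorem requires a separate drift argument or a simultaneous induction; invoking it as an input is circular as stated.

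The paper sidesteps both problems by choosing the second Lyapunov component to be $T^m_2=\EE|\widetilde{g}^m-\beta^m|^2$, the difference between the algorithm's memory and the memory of the \emph{coupled stationary trajectory}, refreshed at the same coordinate $r_m$ using the step-$m$ positions (which are measurable before $r_m$ is drawn). This gives the exact, cross-term-free recursion $T^{m+1}_2\le \frac{L^2}{d}T^m_1+(1-\frac{1}{d})T^m_2$ of Lemma~\ref{lemmaTM2+1}, and the variance bound of Lemma~\ref{lem:varianceofE} decomposes $\EE|\widetilde{E}^m|^2$ into the coupling distance, $T^m_2$, and an absolute staleness term $24Lh^2d^4$ that is evaluated along the \emph{stationary} trajectory, where $\EE_{p_2}|v|^2=\gamma d$ is known exactly and the last-refresh time is geometric with mean $d$. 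All absolute moments are thus taken at stationarity and no a priori bound on the algorithm's iterates is ever needed. You should also state explicitly that the unbiasedness $\EE_{r_m}(F^m-\nabla f(x^m))=O(\eta)$ is what kills the cross term between the coupling difference and the gradient error; without it the error would enter at order $h\sqrt{dG_m}$ rather than $h^2 dG_m$ and the claimed rates would not follow.
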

See proof in Appendix ~\ref{proofofRCAD-U-LMC}. To achieve $\epsilon$-accuracy, meaning to have $W_2(q^U_m,p_2)\leq \epsilon$, we can choose all terms in~\eqref{eqn:convergeULDSAGA} less than $\epsilon/4$. This gives:
\[
h\leq \min\left\{\frac{\epsilon^{2/3}\mu ^{1/3}}{(2400)^{2/3}d^{4/3}},\frac{\epsilon \mu ^{1/2}}{800\kappa ^{1/2}d^{1/2}},\frac{\epsilon^{2/5}}{(1400)^{2/5}\kappa ^{1/5}d^{2/5}},\frac{1}{(1+D)\kappa },\frac{1}{1648\kappa d}\right\}
\]
and thus the stopping index needs to be:
\[
M\geq \frac{8\kappa }{h}\log\left(\frac{16\sqrt{2}W_2(q^U_0,p_2)}{\epsilon}\right)\,.
\]
This means $\widetilde{O}\left(\max\left\{d^{4/3}/\epsilon^{2/3},d^{1/2}/\epsilon\right\}\right)$ evaluations of $\partial f$.
\section{Numerical result}\label{sec:Numer}
We demonstrate numerical evidence in this section. We first note that it is extremely difficult to compute the Wasserstein distance between two probability measures in high dimensional problems, especially when they are represented by a number of samples. The numerical result below evaluates a weaker measure:
\begin{equation}\label{errortest}
\mathrm{Error}=\left|\frac{1}{N}\sum^{N}_{i=1} \phi(x^{M,i})-\EE_p(\phi)\right|\,,
\end{equation}
where $\phi$ is the test function. $\{x^{M,i}\}_{i=1}^N$ are $N$ different samples iterate till $M$-th step, and $p$ is the target distribution.

In the first example, our target distribution is $\mathcal{N}(0,I_d)$ with $d=1000$, and in the second example we use
\[
p(x)\propto \exp\left(-\sum^d_{i=1}\frac{|x_i-2|^2}{2}\right)+\exp\left(-\sum^d_{i=1}\frac{|x_i+2|^2}{2}\right)\,.
\]
For both example, we sample the initial particles according to $\mathcal{N}(0.5,I_d)$. We run both RCD-O/U-LMC and RCAD-O/U-LMC using $N=5\times 10^5$ particles and test MSE error with $\phi(x)=|x_1|^2$ in both examples. In Figure~\ref{Figure1} and Figure~\ref{Figure2} respectively we show the error with respect to different stepsizes. In all the computation, $M$ is big enough. The improvement of adding variance reduction technique is obvious in both examples.
\begin{figure}[htbp]
     \centering
     \subfloat{\includegraphics[height = 0.15\textheight, width = 0.46\textwidth]{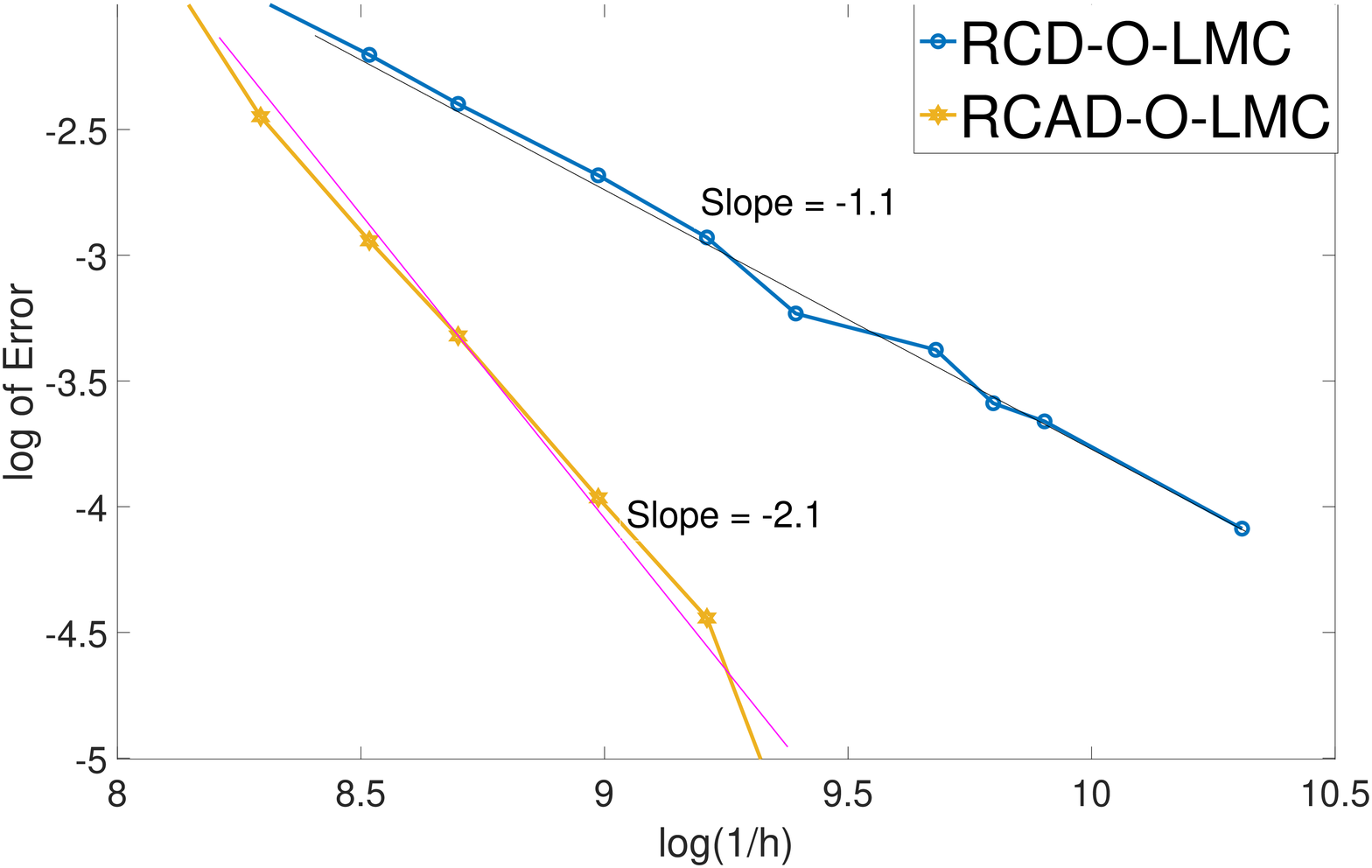}}\hspace{0.04\textwidth}
     \subfloat{\includegraphics[height = 0.15\textheight, width = 0.46\textwidth]{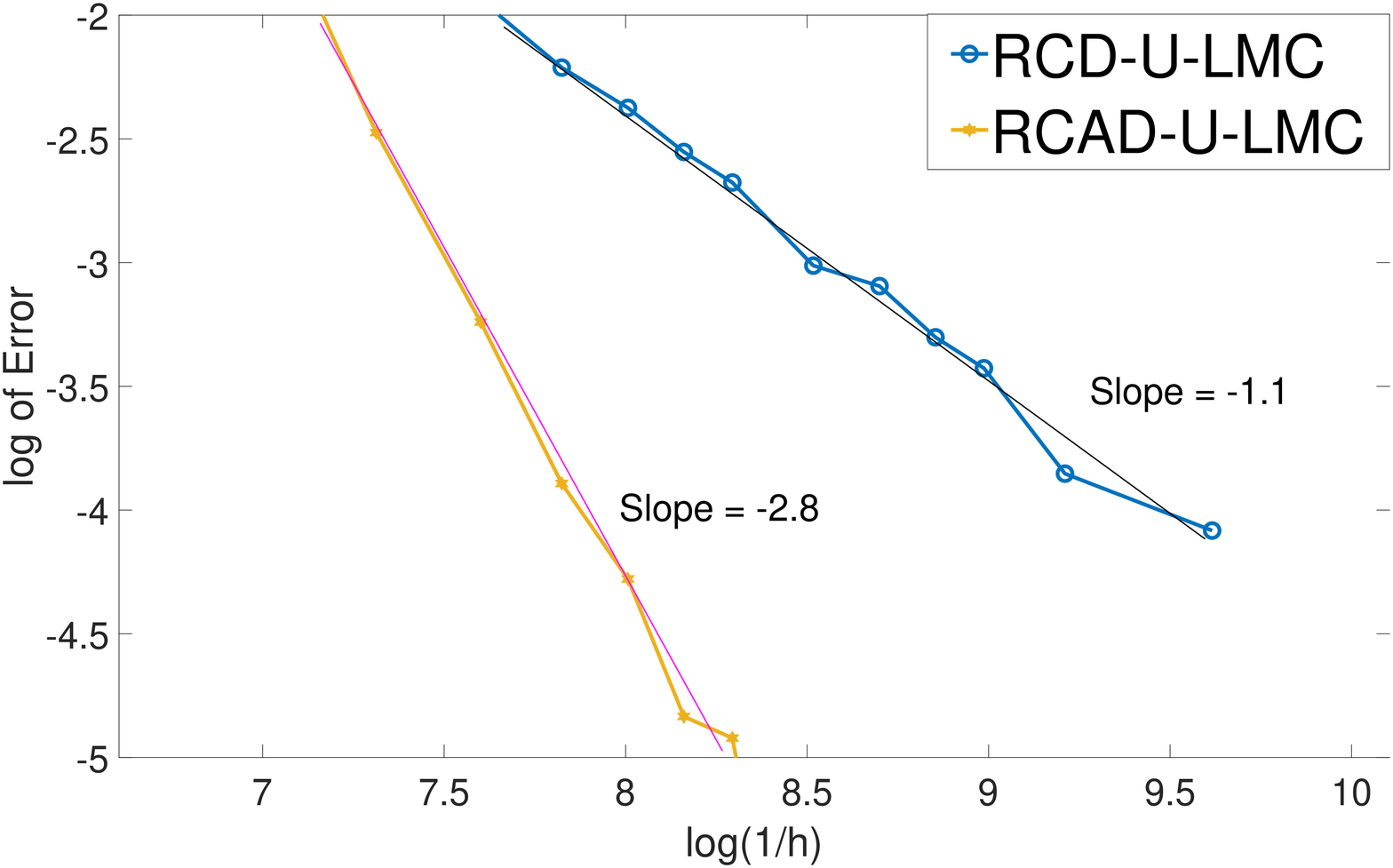}}
     \caption{Example 1. Decay of Error of O-LMC (left) and U-LMC (right) with and without RCAD.}
     \label{Figure1}
\end{figure}
\begin{figure}[htbp]
      \centering
      \subfloat{\includegraphics[height = 0.15\textheight, width = 0.46\textwidth]{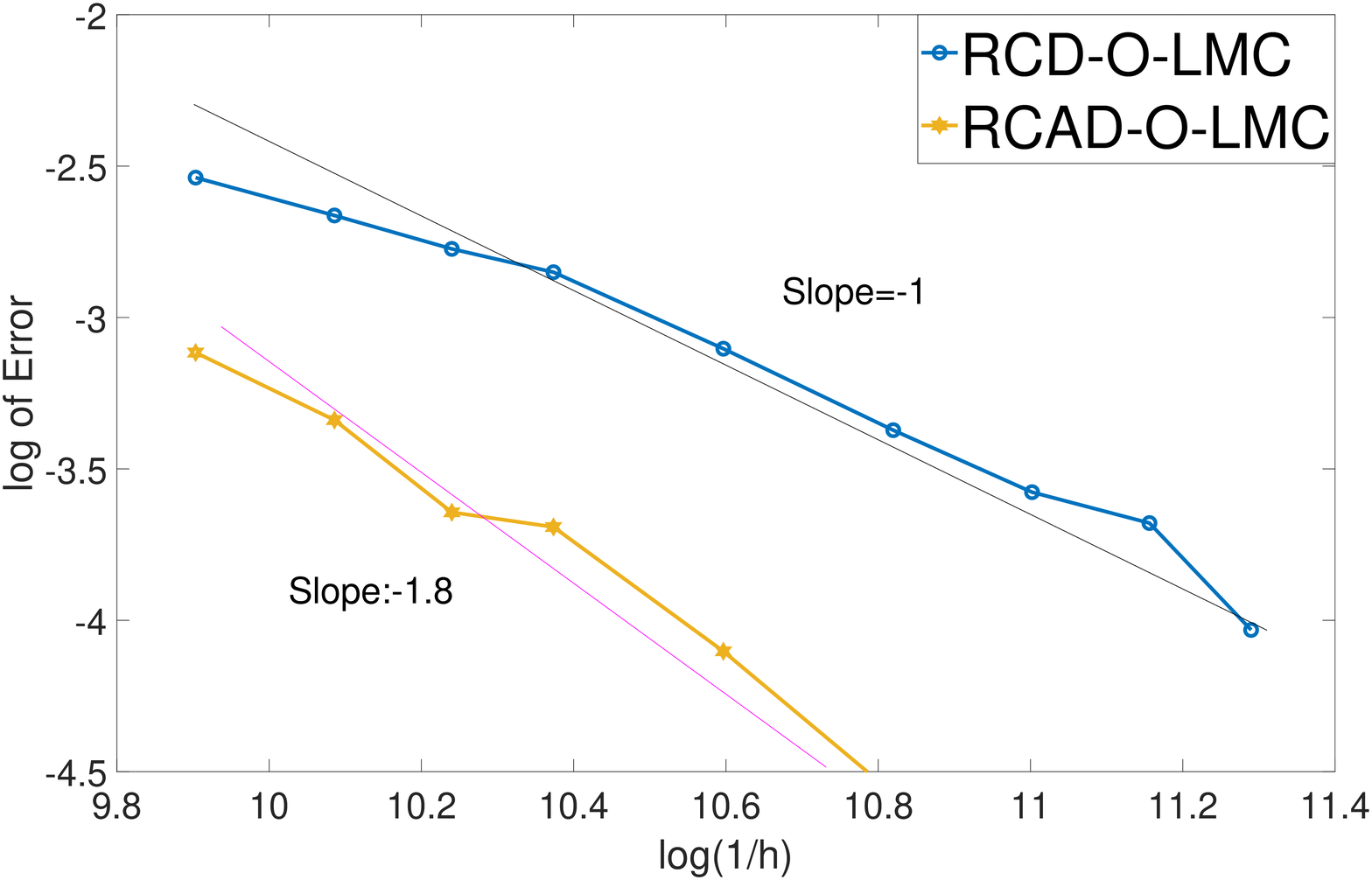}}\hspace{0.04\textwidth}
      \subfloat{\includegraphics[height = 0.15\textheight, width = 0.46\textwidth]{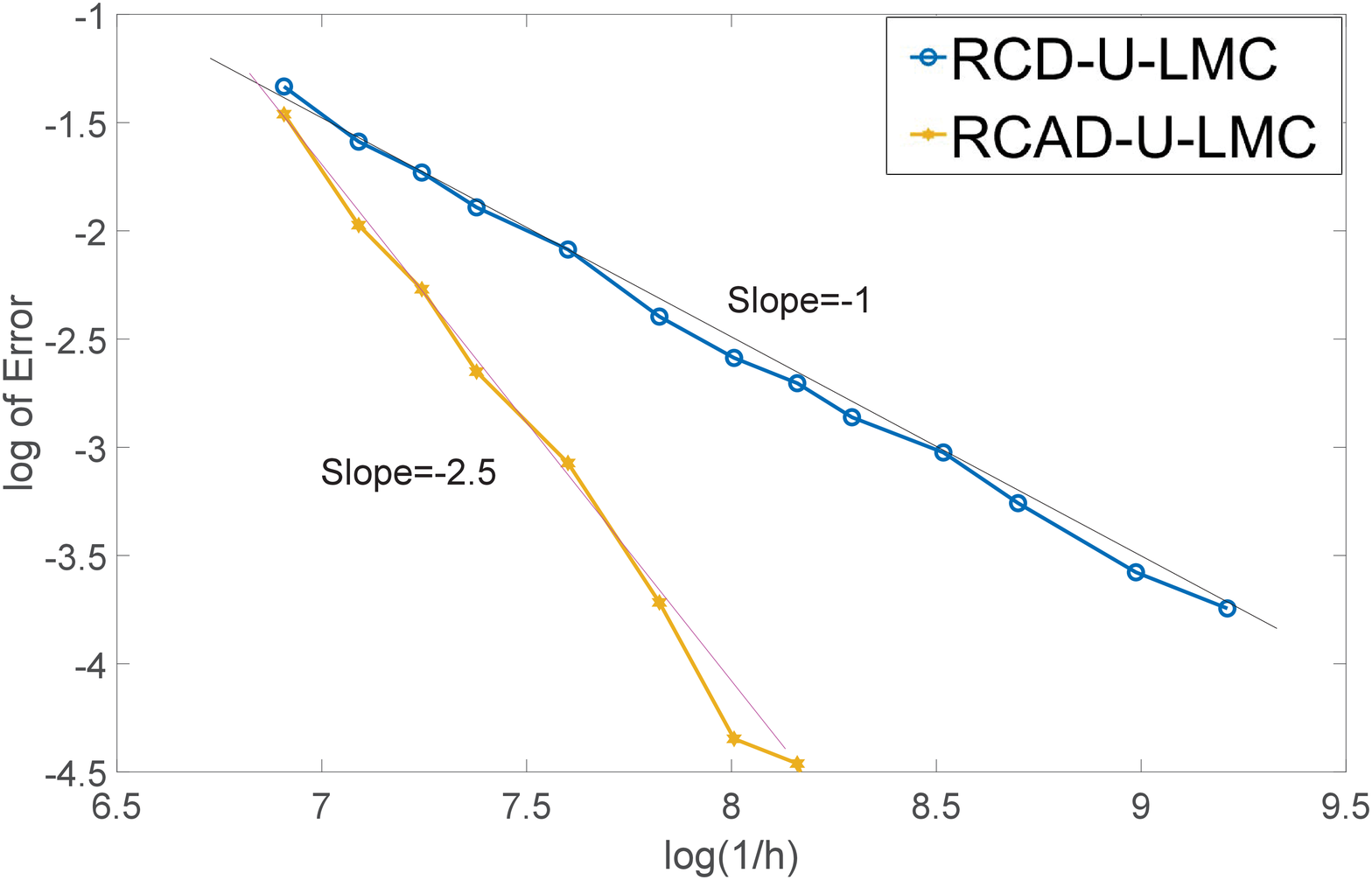}}
      \caption{Example 2. Decay of Error of O-LMC (left) and U-LMC (right) with and without RCAD.}
      \label{Figure2}
 \end{figure}

\section{Conclusion and future work}
To our best knowledge, this is the first work that discusses both the negative and positive aspects of applying random gradient approximation, mainly RCD type, to LMC, in both overdamped and underdamped situations without and with variance reduction. Without variance reduction we show the RCD-LMC has the same numerical cost as the classical LMC, and with variance reduction, the numerical cost is reduced in both overdamped and underdamped cases.

There are a few future directions that we would like to pursue. 1. Our method, in its current version, is blind to the structure of $f$. The only assumptions are reflected on the Lipschitz bounds. In~\cite{RB2011,doi:10.1137/100802001,doi:10.1137/130949993} the authors, in studying optimization problems, propose to choose random directions according to the Lipschitz constant in each direction. The idea could potentially be incorporated in our framework to enhance the sampling strategy. 2. Our algorithms are designed based on reducing variance in the RCD framework. Potentially one can also apply variance reduction methods to improve SPSA-LMC. There are also other variance reduction methods that one could explore.

\section{Broader Impact}

The result provides theoretical guarantee to the application of random coordinate descent to Langevin Monte Carlo, when variance reduction technique is used to reduce the cost. It has potential application to inverse problems emerging from atmospheric science, remote sensing, and epidemiology. This work does not present any foreseeable societal consequence.

\begin{ack}
Both authors acknowledge generous support from NSF-DMS 1750488, NSF-TRIPODS 1740707, Wisconsin Data Science Initiative, and Wisconsin Alumni Research Foundation.
\end{ack}
\bibliographystyle{plain}
\bibliography{enkf}

\begin{appendix}
\section{Algorithms and Results of RCD-LMC}\label{sec:SPSAmainresult}
\subsection{Algorithm}\label{sec:alg:SOUlMC}
We apply RCD as surrogates of the gradient in O/U-LMC. This amounts to replacing the gradient terms in \eqref{eqn:update_ujn} using the approximation~\eqref{eqn:randomfinitedifferenceRD}. The new methods are presented in Algorithm~\ref{alg:SOU-LMC}, termed RCD-O/U-LMC.
\begin{algorithm}[htb]
\caption{\textbf{RCD-overdamped(underdamped) Langevin Monte Carlo}}\label{alg:SOU-LMC}
\begin{algorithmic}
\State \textbf{Preparation:}
\State 1. Input: $\eta$ (space step); $h$ (time step); $\gamma$ (parameter); $d$ (dimension); $M$ (stopping index) and $f(x)$.
\State 2. Initial: \emph{(overdamped)}: $x^0$ i.i.d. sampled from a initial distribution induced by $q_0(x)$.

\emph{(underdamped)}: $(x^0,v^0)$ i.i.d. sampled from the initial distribution induced by $q_0(x,v)$.

\State \textbf{Run: }\textbf{For} $m=0\,,1\,,\cdots M$

1. Finite difference: calculate flux approximation by RCD:
\begin{equation}\label{eqn:RCD}
F^m=d\frac{f(x^m+\eta \textbf{e}^{r})-f(x^m-\eta \textbf{e}^{r})}{2\eta}\textbf{e}^{r}
\end{equation}
with $r$ uniformly drawn from $1\,,\cdots,d$.

2. \emph{(overdamped)}: Draw $\xi^{m}$ from $\mathcal{N}(0,I_d)$:
\begin{equation}\label{eqn:update_ujnSD}
x^{m+1}=x^m-F^m h+\sqrt{2h}\xi^{m}\,.
\end{equation}
\emph{(underdamped)}: Sample $(x^{m+1},v^{m+1})\sim Z^{m+1}=(Z^{m+1}_x,Z^{m+1}_v)$ where $Z^{m+1}$ is a Gaussian random variable with expectation and covariance defined in~\eqref{distributionofZ}, replacing $\nabla f(x^m)$ by $F^m$.
\State \textbf{end}
\State \textbf{Output:} $\{x^m\}$.
\end{algorithmic}
\end{algorithm}

\subsection{A counter-example}\label{sec:badexample}
In this section, we prove Theorem \ref{thm:badexampleW22}.

Fisrt, we define $w^m=x^m+v^m$, and denote $u_m(x,w)$ the probability density of $(x^m,w^m)$ and $u^\ast(x,w)$ the probability density of $(x,w)$ if $(x,v=w-x)$ is distributed according to density function $p_2$. From~\cite{Cheng2017UnderdampedLM}, we have:
\begin{equation}\label{trivialinequlaitySAGAcounterexample}
 |x^m-x|^2+|v^m-v|^2\leq 4(|x^m-x|^2+|w^m-w|^2)\leq 16(|x^m-x|^2+|v^m-v|^2)\,,
 \end{equation}
 and thus
 \begin{equation}\label{trivialinequlaitySAGA2counterexample}
  W^2_2(q^U_{m},p_2)\leq  4W^2_2(u_{m},u^*)\leq  16W^2_2(q^U_{m},p_2)\,.
\end{equation}

\begin{proof}[Proof of Theorem \ref{thm:badexampleW22}] 
Throughout the proof, we drop the superscript ``$U$" to have a concise notation. According to \eqref{trivialinequlaitySAGA2counterexample}, it suffices to find a lower bound for $W^2_2(u_{m},u^*)$. We first notice
\begin{equation}\label{iterationW2badbound}
\begin{aligned}
W_2(u_{m},u^{\ast})\geq&\sqrt{\int |w|^2u_m(x,w)\rd w\rd x}-\sqrt{\int|w|^2u^{\ast}(x,w)\rd w\rd x}\\
=&\sqrt{\int |w|^2u_m(x,w)\rd w\rd x}-\sqrt{2d}=\sqrt{\EE|w^m|^2}-\sqrt{2d}\\
=&\frac{\EE|w^m|^2 -2d}{\sqrt{\EE|w^m|^2}+\sqrt{2d}}\,,
\end{aligned}
\end{equation}
where $\EE$ takes all randomness into account. This implies to prove \eqref{eqn:badexampleW2bound2}, it suffices to find a lower bound for second moment of $w^m$. Indeed, in the end, we will show that
\begin{equation}\label{ddd}
W_2(u_m\,,u^\ast)\geq \left(1-2h\right)^m\frac{\sqrt{d}}{512}+\frac{d^{3/2}h}{1152}
\end{equation}
and thus
\[
W_2(q_m,p_2)\geq \left(1-2h\right)^m\frac{\sqrt{d}}{1024}+\frac{d^{3/2}h}{2304}
\]
proving the statement of the theorem since $\left(1-2h\right)^m\geq \exp\left(-2mh\right)$. To show~\eqref{ddd}, we first note, by direct calculation:
\begin{equation}\label{bbb}
W_2(q_0,p_2)=\sqrt{d}/8\,,\quad \EE|\omega^0|^2=\frac{129d}{64}\,,
\end{equation}
then we divide the proof into several steps:
\begin{itemize}
\item \textbf{First step:} \emph{Priori estimation}

According to \eqref{bbb}, use convergence result of Algorithm \ref{alg:SOU-LMC} (\cite{ZQLMC} Theorem 4.1), we have for any $m\geq 0$
\[
W_2(q_m,p_2)\leq \frac{\sqrt{d}}{2}+\frac{\sqrt{d}}{6}=\frac{2\sqrt{d}}{3}\,,
\]
Similar to \eqref{iterationW2badbound}, we have
\[
W_2(q_m,p_2)\geq |\sqrt{\EE|x^m|^2}-\sqrt{d}|\,,\quad W_2(q_m,p)\geq |\sqrt{\EE|v^m|^2}-\sqrt{d}|
\]
which implies
\begin{equation}\label{lowerboundforx}
\frac{\sqrt{d}}{3}\leq \sqrt{\EE|x^m|^2}\leq \frac{5\sqrt{d}}{3},\quad \frac{\sqrt{d}}{3}\leq \sqrt{\EE|v^m|^2}\leq \frac{5\sqrt{d}}{3}\,
\end{equation}
for any $m\geq 0$. 

Finally, use \eqref{lowerboundforx}, we can obtain
\begin{equation}\label{upperboundforw}
\sqrt{\EE|\omega^m|^2}\leq \sqrt{\EE|x^m|^2}+\sqrt{\EE|v^m|^2}\leq 4\sqrt{d}\,.
\end{equation}

\item \textbf{Second step:} \emph{Iteration formula of $\EE|w^m|^2$.}

By the special structure of $p$, we can calculate the second moment explicitly. Since $f(x)$ can be written as
\[
f(x)=\sum^d_{i=1}\frac{|x_i|^2}{2}\,,
\]
in each step of RCD-U-LMC, according to Algorithm \ref{alg:SOU-LMC}, for each $m\geq 0$ and $1\leq i\leq d$, we have
\begin{equation}\label{distributionofZ1}
\begin{aligned}
&\EE \left(x^{m+1}_i|(x^m,v^m,r^m)\right)=x^m_i+\frac{1-e^{-2h}}{2}v^m_i-\left(\frac{h}{2}-\frac{1-e^{-2h}}{4}\right)(x^m_i-E^m_i),\\
&\EE \left(v^{m+1}_i|(x^m,v^m,r^m)\right)=v^m_ie^{-2h}-\frac{1-e^{-2h}}{2}(x^m_i-E^m_i)\,,\\
&\EE \left(w^{m+1}_i|(x^m,v^m,r^m)\right)=\frac{1+e^{-2h}}{2}w^m_i+\frac{1-e^{-2h}}{2}E^m_i-\left(\frac{h}{2}-\frac{1-e^{-2h}}{4}\right)(x^m_i-E^m_i)\,,\\
&\Var\left(x^{m+1}_i|(x^m,v^m,r^m)\right)=h-\frac{3}{4}-\frac{1}{4}e^{-4h}+e^{-2h}\,,\\
&\Var\left(v^{m+1}_i|(x^m,v^m,r^m)\right)=1-e^{-4h}\,,\\
&\Cov\left((x^{m+1}_i\,,v^{m+1}_i)|(x^m,v^m,r^m)\right)=\frac{1}{2}\left[1+e^{-4h}-2e^{-2h}\right]\,,
\end{aligned}
\end{equation}
where $E^m\in\mathbb{R}^d$ is a random variable defined as
\[
E^m_i=x^m_i-dx^m_i\textbf{e}^{r_m}_i
\]
and satisfies
\begin{equation}\label{highvariance}
\EE_{r_m}(E^m_i)=0,\quad \EE_{r_m}\left|E^m_i\right|^2=(d-1)|x^m_i|^2
\end{equation}
for each $1\leq i\leq d$. Furthermore,
\begin{equation}\label{aaaa}
\EE\langle w^m_i\,,E^m_i\rangle = \EE\langle x^m_i\,,E^m_i\rangle = 0\,.
\end{equation}

Now, since $h\leq \frac{1}{880}$, we can replace $e^{-2h}$ and $e^{-4h}$ by their Taylor expansion:
\begin{equation}\label{taylorofe}
e^{-2h}=1-2h+2h^2+D_1h^3,\quad e^{-4h}=1-4h+8h^2+D_2h^3\,,
\end{equation}
where $D_1,D_2$ are negative constants depends on $h$ and satisfy
\[
|D_1|<10,\quad |D_2|<100\,.
\]
Plug \eqref{taylorofe} into \eqref{distributionofZ1}, we have
\begin{equation}\label{distributionofZ2}
\begin{aligned}
&\EE \left(w^{m+1}_i|(x^m,w^m,r^m)\right)=\left(1-h+h^2+\frac{D_1h^3}{2}\right)w^m_i-\left(\frac{h^2}{2}+\frac{D_1h^3}{4}\right)x^m_i\\
&\quad\quad\quad\quad\quad\quad\quad\quad\quad\quad\quad\quad+\left(h-\frac{h^2}{2}-\frac{D_1h^3}{4}\right)E^m_i\,,\\
&\mathrm{Var}\left(x^{m+1}_i|(x^m,v^m,r^m)\right)=\left(D_1-\frac{D_2}{4}\right)h^3\,,\\
&\Var\left(v^{m+1}_i|(x^m,v^m,r^m)\right)=4h-8h^2-D_2h^3\,,\\
&\Cov\left((x^{m+1}_i\,,v^{m+1}_i)|(x^m,v^m,r^m)\right)=2h^2+\frac{\left(D_2-2D_1\right)h^3}{2}\,.
\end{aligned}
\end{equation}
The last three equalities in \eqref{distributionofZ2} implies
\[
\begin{aligned}
\Var\left(w^{m+1}_i|(x^m,v^m,r^m)\right)=&\Var\left(x^{m+1}_i|(x^m,v^m,r^m)\right)+\Var\left(v^{m+1}_i|(x^m,v^m,r^m)\right)\\
&+2\Cov\left(x^{m+1}_i\,,v^{m+1}_i)|(x^m,v^m,r^m)\right)\\
=&4h-4h^2-\left(D_1+\frac{D_2}{4}\right)h^3\,.
\end{aligned}
\]
Then, we can calculate the iteration formula for $\EE|x^{m+1}_i|^2$ and $\EE|\omega^{m+1}_i|^2$:
\[
\begin{aligned}
&\EE|\omega^{m+1}_i|^2\\
=&\EE_{x^m,w^m,r^m}\left(\left.|\omega^{m+1}_i|^2\right|(x^m,v^m,r^m)\right)\\
=&\EE_{x^m,w^m,r^m}\left(\left|\EE \left(w^{m+1}_i|(x^m,w^m,r^m)\right)\right|^2+\Var\left(w^{m+1}_i|(x^m,v^m,r^m)\right)\right)\\
=&\left(1-h+h^2+\frac{D_1h^3}{2}\right)^2\EE|w^m_i|^2+(d-1)\left(h-h^2-\frac{D_1h^3}{2}\right)^2\EE|x^m_i|^2\\
&+\left(\frac{h^2}{2}+\frac{D_1h^3}{4}\right)^2\EE|x^m_i|^2-2\left(1-h+h^2+\frac{D_1h^3}{2}\right)\left(\frac{h^2}{2}+\frac{D_1h^3}{4}\right)\EE\left\langle w^m_i,x^m_i\right\rangle\\
&+4h-4h^2-\left(D_1+\frac{D_2}{4}\right)h^3\,,
\end{aligned}
\]
where we use \eqref{highvariance} and \eqref{aaaa}\,. 

Sum them up with $i$, we finally obtain an iteration formula for $\EE|w^m|^2$:
\begin{equation}\label{omegamiexpectationomega}
\begin{aligned}
&\EE|\omega^{m+1}|^2\\
=&\left(1-h+h^2+\frac{D_1h^3}{2}\right)^2\EE|w^m|^2+
(d-1)\left(h-h^2-\frac{D_1h^3}{2}\right)^2\EE|x^m|^2\\
&+\left(\frac{h^2}{2}+\frac{D_1h^3}{4}\right)^2\EE|x^m|^2-2\left(1-h+h^2+\frac{D_1h^3}{2}\right)\left(\frac{h^2}{2}+\frac{D_1h^3}{4}\right)\EE\left\langle w^m,x^m\right\rangle\\
&+4dh-4dh^2-d\left(D_1+\frac{D_2}{4}\right)h^3\,.
\end{aligned}
\end{equation}

\item \textbf{Third step:} \emph{Lower bound for $W_2(u_{m},u^*)$}
Use \eqref{upperboundforw}, since $D_1<0$, $h<\frac{1}{100}<\frac{1}{|D_1|}$ and $d>1872$, we have
\[
\frac{h^2}{2}\leq h^2+\frac{D_1h^3}{2}\leq h^2,\quad h-h^2-\frac{D_1h^3}{2}\geq h-h^2\geq \frac{h}{2}\,,
\]
which implies
\begin{equation}\label{inequalfirst}
1-h+h^2+\frac{D_1h^3}{2}\geq 1-h+h^2/2\,,\quad (d-1)\left(h-h^2-\frac{D_1h^3}{2}\right)^2\EE|x^m|^2\geq \frac{d^2h^2}{72}\,.
\end{equation}
and
\begin{equation}\label{inequalsecond}
\left(\frac{h^2}{2}+\frac{D_1h^3}{4}\right)\EE\left\langle w^m,x^m\right\rangle\leq \frac{h^2}{2}\left(\EE|w^m|^2\EE|x^m|^2\right)^{1/2}\leq 4dh^2\,.
\end{equation}

For the last line of \eqref{omegamiexpectationomega}, since $h\leq \frac{1}{880}<\frac{1}{|D_1|+|D_2|/4}$, we have
\begin{equation}\label{inequalthird}
4dh-4dh^2-d\left(D_1+\frac{D_2}{4}\right)h^3\geq 4dh-5dh^2\,.
\end{equation}
Plug \eqref{inequalfirst},\eqref{inequalsecond},\eqref{inequalthird} into \eqref{omegamiexpectationomega}, we have
\begin{equation}\label{lowerboundofomegam0}
\EE|\omega^{m+1}|^2\geq\left(1-h+h^2/2\right)^2\EE|w^m|^2+4dh+d^2h^2/72-13dh^2\,.
\end{equation}
Note that $\left(1-h+h^2/2\right)^2\geq 1-2h$, and since $d>1872$, we have $\frac{d^2h^2}{144}\geq 13dh^2$. Use~\eqref{lowerboundofomegam0} iteratively and combine with~\eqref{bbb}, we finally have:
\begin{equation}\label{lowerboundofomegam}
\begin{aligned}
\EE|\omega^{m}|^2&\geq\frac{129\left(1-2h\right)^md}{64}+\left(1-(1-2h)^m\right)\left[2d+d^2h/288\right]\\
&=(1-2h)^m\left[\frac{d}{64}-\frac{d^2h}{288}\right]+2d+\frac{d^2h}{288}\\
&\geq (1-2h)^m\frac{d}{128}+2d+\frac{d^2h}{288}\,,
\end{aligned}
\end{equation}

Plug \eqref{lowerboundofomegam} into \eqref{iterationW2badbound}, we further have
\[
\begin{aligned}
W_2(u_{m},u^*)&\geq \frac{(1-2h)^m\frac{d}{128}+2d+\frac{d^2h}{288}-2d}{\sqrt{(1-2h)^m\frac{d}{128}+2d+\frac{d^2h}{288}}+\sqrt{2d}}\\
&\geq \frac{(1-2h)^m\frac{d}{128}+\frac{d^2h}{288}}{4\sqrt{d}}\\
&\geq \left(1-2h\right)^m\frac{\sqrt{d}}{512}+\frac{d^{3/2}h}{1152}\,,
\end{aligned}
\]
where we use small enough $h$ in the second inequality to bound the $d^2h$ term by $d$ in the denominator. This proves \eqref{ddd}.
\end{itemize}
\end{proof}
\section{Proof of convergence of RCAD-O-LMC (Theorem \ref{thm:disconvergenceSAGAOLD})}\label{proofofRCAD-O-LMC}

In this section we provide the detailed proof for Theorem~\ref{thm:disconvergenceSAGAOLD}.

Before diving into details, we quickly summarize the proving strategy. Recall that the target distribution $p$ is merely the equilibrium of the SDE~\eqref{eqn:Langevin}. This means, if a particle prepared at the initial stage is drawn from $p$, then following the dynamics of SDE~\eqref{eqn:Langevin}, the distribution of this particle will continue to be $p$. In the analysis below, we call the trajectory of this particle $y_t$, and the sequence generated by this particle evaluated at discrete time $y^m$. Essentially we evaluate how quickly $x^m$ converges to $y^m$ as $m$ increases. In particular, we call $\Delta^m = x^m-y^m$ and will derive an iteration formula that shows the convergence of $\Delta^m$.

In evaluating $\Delta^{m}$, there are three kinds of error that get involved:
\begin{itemize}
\item[1.] discretization error in $\eta$: this can be made as small as possible. $\eta$ is a spatial stepsize parameter and can be made as small as we wish. The finite differencing accuracy is second order and thus the produced error is at the order of $\mathcal{O}(\eta^2)$. By making $\eta$ small, we make this part of error negligible;
\item[2.] discretization error in $h$: this amounts to controlling the discretization error of the SDE~\eqref{eqn:Langevin}. To handle this part of error we employ the estimates in~\cite{DALALYAN20195278};
\item[3.] random coordinate selection process error: this is to measure, at each iteration, how big can $\nabla f(x^m)-F^m$ be. According to the way $F^m$ is defined, it is straightforward to show that the expectation of this error is always $0$, but the variance $\EE|\nabla f(x^m)-F^m|^2$ can be big, and this is the main reason for the direct application of RCD on LMC to fail~\cite{ZQLMC}. The variance reduction technique discussed in this paper is exactly to reduce the size of this term.
\end{itemize}

In a way, see details in~\eqref{eqn:Deltam+1}, we can derive the iteration formula, ignoring the discretization error in $\eta$, 
\begin{align*}
\Delta^{m+1}
=&y^{m+1} - x^{m+1}=\Delta^m+(y^{m+1}-y^{m})-(x^{m+1}-x^m)\\
=&\Delta^m-h\left(\nabla f(y^m)-\nabla f(x^m)\right)-\int^{(m+1)h}_{mh}\left(\nabla f(y_s)-\nabla f(y^m)\right)\rd s\\
&-h(\nabla f(x^m)-F^m)\,.
\end{align*}

The second term on the right hand side, by using the Lipschitz continuity, will provide $-Lh\Delta^m$, and it produces desirable property when combined with the first $\Delta^m$. The third term encodes the discretization error in $h$, and was shown to be small in~\cite{DALALYAN20195278}. The last term is the error that comes from the random coordinate selection process. We discuss it in details in Section~\ref{VarwidetileEX}. We note that this term cannot be simply controlled and estimated by $\Delta^m$ only, but $\partial_{r^m} f(x^m) - \partial_{r^m} f(y^m)$ as well. If we simply relax it to $L\Delta^m$, we will lose the decay property brought by the second term. To overcome that, we define the Lyapunov function that combines the effects of $\Delta^m$ and $\partial_{r^m} f(x^m) - \partial_{r^m} f(y^m)$. See definition in~\eqref{eqn:Lyna}.

Now we prove the theorem in details. After a lengthy definition of all notations, we will present Lemma~\ref{lemma:SAGAOLD} and Lemma~\ref{lemmaTM2+1}. They are to bound, iteratively $\Delta^m$ and $\partial_{r^m} f(x^m) - \partial_{r^m} f(y^m)$ term respectively. The proof of the theorem then follows by combining the two lemmas to control the Lyapunov function.

As presented in the main text, the first step of RCAD-O-LMC uses the finite differencing approximation for every direction, namely, setting $g^0\in\mathbb{R}^d$ to be:
\[
g^0_i=\frac{f(x^0+\eta \textbf{e}^{i})-f(x^0-\eta \textbf{e}^{i})}{2\eta},\quad i=1,2,\cdots,d\,.
\]
In the following iterations, one random direction is selected for the updating,
\[
g^{m+1}_{r_m}=\frac{f(x^m+\eta \textbf{e}^{r_m})-f(x^m-\eta \textbf{e}^{r_m})}{2\eta}
\]
with other directions untouched: $g^{m+1}_i = g^m_i$ for all $i\neq r_m$. Define:
\[
F^m=g^m+d\left(g^{m+1}-g^m\right)\,,
\]
then the updating formula is:
\begin{equation}\label{eqn:update_ujnSAGA}
x^{m+1}=x^m-F^mh+\sqrt{2h}\xi^{m}\,,
\end{equation}
where $h$ is the time stepsize, and $\xi^{m}$ i.i.d. drawn from $\mathcal{N}(0,I_{d})$. Denote
\begin{equation}\label{eqn:errorofapproximate}
E^m=\nabla f(x^m)-F^m\,,
\end{equation}
then this updating formula~\eqref{eqn:update_ujnSAGA} writes to:
\begin{equation}\label{eqn:update_ujnSAGA2}
x^{m+1}=x^m-\nabla f(x^m)h+E^mh+\sqrt{2h}\xi^{m}\,.
\end{equation}
This is the formula we use for the analysis under Assumptions~\ref{assum:Cov} and~\ref{assum:Hessian}.

To show the theorem, we let $y_0$ be a random vector drawn from target distribution induced by $p$ such that $W^2_2(q^O_0,p)=\EE|x^0-y^0|^2$, and set
\begin{equation}\label{eqn:yt}
y_t=y_0-\int^t_0 \nabla f(y_s)\rd s+\sqrt{2}\int^t_0\rd \mathcal{B}_s\,,
\end{equation}
where we construct the Brownian motion that always satisfies
\begin{equation}\label{eqn:Bt}
B_{h(m+1)}-B_{hm}=\sqrt{h}\xi^m\,.
\end{equation}
Then $y_t$ is drawn from target distribution as well. On the discrete level, let $y^m=y_{mh}$, then:
\[
y^{m+1}=y^{m}-\int^{(m+1)h}_{mh} \nabla f(y_s)\rd s+\sqrt{2h}\xi^{m}\,.
\]

Noting
\[
W^2_2(q^O_m,p)\leq\EE|x^m-y^m|^2\,,
\]
where $\EE$ takes all randomness into account. We now essentially need to show the difference between~\eqref{eqn:update_ujnSAGA} and~\eqref{eqn:yt}, also see~\cite{pmlr-v80-chatterji18a}. 

As for a preparation, we now define an a set of auxiliary gradients.
\begin{itemize}
\item $\widetilde{g}^0$ is the true derivative used at the initial step:
\begin{equation}\label{initialg0}
\widetilde{g}^0=\nabla f(x^0)\,,
\end{equation}
\item $\widetilde{g}^{m+1}$ is the continuous version of $g^{m+1}$:
\begin{equation}\label{updateg}
\widetilde{g}^{m+1}_{r_m}=\partial_{r_m}f(x^m)\quad\text{and}\quad \widetilde{g}^{m+1}_{i}=\widetilde{g}^{m}_{i}\quad \text{if}\quad i\neq r_m\,,
\end{equation}
\item $\widetilde{F}^m$ is the continuous version of $F^m$:
\[
\widetilde{F}^m=\widetilde{g}^m+d\left(\widetilde{g}^{m+1}-\widetilde{g}^m\right)\,.
\]

\item Define $\beta^m$ using \eqref{initialg0},\eqref{updateg} with the same $r_m$ but replacing $x^m$ with $y^m$:
\[
\beta^0=\nabla f(y^0)
\]
and
\[
\beta^{m+1}_{r_m}=\partial_{r_m}f(y^m)\quad\text{and}\quad \beta^{m+1}_{i}=\beta^{m}_{i}\quad \text{if}\quad i\neq r_m\,.
\]
\end{itemize}
Indeed in the later proof we will give an upper bound for the following Lyapunov function:
\begin{equation}\label{eqn:Lyna}
T^m=T^m_1+c_pT^m_2=\EE|y^m-x^m|^2+c_{p}\EE|\widetilde{g}^m-\beta^m|^2\,.
\end{equation}
where $c_{p}$ will be carefully chosen later.

We further define
\[
\widetilde{E}^m=\nabla f(x^m)-\widetilde{F}^m=E^m+F^m-\widetilde{F}^m\,,
\]
this leads to $E^m = \widetilde{E}^m - F^m+\widetilde{F}^m$. The properties of $\widetilde{E}^m$ will be discussed in Appendix \ref{VarwidetileEX}. To quantify $F^m-\widetilde{F}^m$ is straightforward: it can be bounded using mean-value theorem. Since:
\[
|\widetilde{g}^0_i-g^0_i|^2=\left|\frac{f(x^m+\eta \textbf{e}^{i})-f(x^m-\eta \textbf{e}^{i})-2\eta\partial_i f(x^m)}{2\eta}\right|^2\leq\left|\frac{(\partial_{i}f(z)-\partial_i f(x^m))2\eta}{2\eta}\right|^2\leq L^2\eta^2\,
\]
where $z\in\mathbb{R}^d$ is a point between $x^m+\eta \textbf{e}^{i}$ and we use the fact that $\nabla f$ is $L$-Lipschitz. Similarly, for all $m$:
\[
|\widetilde{g}^m-g^m|^2\leq L^2\eta^2d\,,
\]
we have:
\begin{equation}\label{errorinG}
\begin{aligned}
\left|\widetilde{F}^m-F^m\right|^2\leq &2|\widetilde{g}^m-g^m|^2+2d^2|\widetilde{g}^{m+1}
_{r_m}-g^{m+1}_{r_m}|^2<2L^2\eta^2d+8L^2\eta^2d^2\,.
\end{aligned}\,.
\end{equation}

Now we present the iteration formula for $T^{m+1}_1,T^{m+1}_2$, in Lemma~\ref{lemma:SAGAOLD} and Lemma~\ref{lemmaTM2+1} respectively:
\begin{lemma}\label{lemma:SAGAOLD}
Under conditions of Theorem \ref{thm:disconvergenceSAGAOLD}, for any $a>0$, we can upper bound  $T^m_1$:
\begin{equation}\label{firstimboundOLD}
T^{m+1}_1\leq (1+a)AT^m_1 + (1+a)BT^m_2 + (1+a)h^3C +\left(1+\frac{1}{a}\right)h^4D
\end{equation}
where
\begin{equation*}
\begin{aligned}
&A=1-2\mu h+3(1+3d)L^2h^2\,,\quad B = 9h^2d\,,\\
&C=2L^2d+72L^2d^3\left[\frac{hL^2d}{\mu }+1\right]\,,\quad D=(H^2+16L^2)d^2+(L^3+4L^2)d\,.
\end{aligned}
\end{equation*}
\end{lemma}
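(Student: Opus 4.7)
The plan is to couple the algorithm iterate $x^m$ with a reference particle $y^m$ drawn from the target $p$ and evolved under the exact Langevin SDE~\eqref{eqn:yt} with the shared Brownian motion~\eqref{eqn:Bt}, so that $T^{m+1}_1 \leq \EE|\Delta^{m+1}|^2$ with $\Delta^m = y^m - x^m$. Subtracting~\eqref{eqn:update_ujnSAGA} from the integrated SDE produces the one-step decomposition
\[
\Delta^{m+1} = \Delta^m - h\bigl(\nabla f(y^m) - \nabla f(x^m)\bigr) - R^m - h\widetilde{E}^m + h(F^m - \widetilde{F}^m),
\]
where $R^m = \int_{mh}^{(m+1)h}(\nabla f(y_s) - \nabla f(y^m))\rd s$ is the Euler--Maruyama discretization residual and $\widetilde{E}^m = \nabla f(x^m) - \widetilde{F}^m$ is the random-coordinate noise, which has zero conditional mean given the filtration before $r_m$ is drawn. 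Applying the elementary inequality $|U+W|^2 \leq (1+a)|U|^2 + (1+1/a)|W|^2$ with $W = h(F^m - \widetilde{F}^m)$ isolates the pure finite-difference error; by~\eqref{errorinG} together with $\eta<h$, $h^2|F^m - \widetilde{F}^m|^2 \leq 10L^2 d^2 h^4$, feeding the $(1+1/a)h^4 D$ contribution.

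For the main $(1+a)|U|^2$ piece, taking conditional expectation over $r_m$ kills the cross term involving $\widetilde{E}^m$ (all other summands are measurable at that stage), reducing the task to bounding $\EE|U'|^2 + h^2 \EE|\widetilde{E}^m|^2$ with $U' = \Delta^m - h(\nabla f(y^m) - \nabla f(x^m)) - R^m$. Strong convexity together with gradient Lipschitzness (Assumption~\ref{assum:Cov}) yields the core contraction $|\Delta^m - h(\nabla f(y^m) - \nabla f(x^m))|^2 \leq (1 - 2\mu h + L^2 h^2)|\Delta^m|^2$, supplying the $-2\mu h$ decay in $A$; the $R^m$-related contributions (both the cross term with $U$ and $\EE|R^m|^2$) are handled by the standard Euler--Maruyama estimates of~\cite{DALALYAN20195278} under Assumption~\ref{assum:Hessian}: using It\^o's formula to isolate the martingale part of $R^m$ kills the cross in expectation, the leading $O(L^2 d h^3)$ variance feeds $h^3 C$, and the Hessian-Lipschitz enhanced $O(H^2 d^2 h^4)$ drift correction feeds $h^4 D$. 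For $h^2\EE|\widetilde{E}^m|^2$, a direct computation on $\widetilde{F}^m = \widetilde{g}^m + d(\partial_{r_m}f(x^m) - \widetilde{g}^m_{r_m})e^{r_m}$ gives $\EE_{r_m}|\widetilde{E}^m|^2 = (d-1)|\nabla f(x^m) - \widetilde{g}^m|^2$; decomposing
\[
\nabla f(x^m) - \widetilde{g}^m = (\nabla f(x^m) - \nabla f(y^m)) + (\nabla f(y^m) - \beta^m) - (\widetilde{g}^m - \beta^m),
\]
and applying $|a+b+c|^2 \leq 3(|a|^2+|b|^2+|c|^2)$ routes the first piece (via Lipschitzness) into $T^m_1$, the third into $T^m_2$, and leaves only a ``staleness'' residual $\EE|\nabla f(y^m) - \beta^m|^2$. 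Collecting coefficients with the slackening $(d-1) \leq d$ recovers the stated $A$ and $B$.

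The main obstacle is controlling this staleness residual. Because $\beta^m_i = \partial_i f(y^{\tau^m_i})$ with $\tau^m_i$ the most recent iteration at which coordinate $i$ was selected (or $0$ if never touched), the quantity entangles the coupon-collector-like statistics of $\{\tau^m_i\}_i$ with the stationary Langevin trajectory $\{y_t\}$. The plan is to use coordinatewise Lipschitzness to pass to $L^2 \sum_i \EE|y^m - y^{\tau^m_i}|^2$, represent each $y^m - y^{\tau^m_i}$ by its SDE integral over $[\tau^m_i h, mh]$, and combine the stationary estimate $\EE_p|\nabla f|^2 \leq L^2 d/\mu$ with the Bernoulli-$1/d$ moment bounds $\EE(m-\tau^m_i) \lesssim d$ and $\EE(m-\tau^m_i)^2 \lesssim d^2$, obtaining a residual of order $L^2 d^3 h + L^4 d^4 h^2/\mu$. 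Multiplied by the ambient $h^2$ this slots cleanly into $(1+a)h^3 C$ with $C$ of the stated form; gathering all remaining $h^3$- and $h^4$-order absolute pieces into $C$ and $D$ is then routine bookkeeping.
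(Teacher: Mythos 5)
Your proposal follows essentially the same route as the paper's proof: the same coupling with a stationary trajectory $y^m$ driven by the shared Brownian motion, the same decomposition of $\Delta^{m+1}$ into the contraction term, the It\^o-split discretization residual, the zero-mean RCD noise $\widetilde{E}^m$ and the finite-difference error, the same Young's-inequality introduction of $a$, and the same geometric ``staleness'' argument for $\EE|\nabla f(y^m)-\beta^m|^2$ that yields the $72L^2d^3[hL^2d/\mu+1]$ piece of $C$. The only bookkeeping deviation is that the paper places the entire non-martingale residual $V^m$ (not just $h(F^m-\widetilde{F}^m)$) into the $(1+1/a)$ slot of the Young split, precisely because the cross term $\EE\langle\Delta^m,V^m\rangle$ is not killed by the martingale argument; your plan should route $V^m$ there as well to land on the stated constants in $D$.
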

Note that for the proof to proceed, one at least needs the coefficient $(1+a)A<1$. This can be made possible only if $a$ is small enough. For small $a$, the $h^4D$ term is magnified, but it may not matter as $h^4$ serves as a high order error so the term is negligible so long as $a\gg h^4$.
\begin{proof}
Define $\Delta^m=y^m-x^m$, we first divide $\Delta^{m+1}$ into several parts:
\begin{equation}\label{eqn:Deltam+1}
\begin{aligned}
\Delta^{m+1}=&\Delta^m+(y^{m+1}-y^{m})-(x^{m+1}-x^m)\\
=&\Delta^m+\left(-\int^{(m+1)h}_{mh}\nabla f(y_s)\rd s+\sqrt{2h}\xi_m\right)-\left(-\int^{(m+1)h}_{mh}F^m\rd s+\sqrt{2h}\xi_m\right)\\
=&\Delta^m-\left(\int^{(m+1)h}_{mh}\left(\nabla f(y_s)-F^m\right)\rd s\right)\\
=&\Delta^m-\left(\int^{(m+1)h}_{mh}\left(\nabla f(y_s)-\nabla f(y^m)+\nabla f(y^m)-\nabla f(x^m)+\nabla f(x^m)-F^m\right)\rd s\right)\\
=&\Delta^m-h\left(\nabla f(y^m)-\nabla f(x^m)\right)-\int^{(m+1)h}_{mh}\left(\nabla f(y_s)-\nabla f(y^m)\right)\rd s\\
&-h(\nabla f(x^m)-F^m)\\
=&\Delta^m-hU^m-(V^m+h\Phi^m)-hE^m\\
=&\Delta^m-(V^m+h(\widetilde{F}^m-F^m))-h(U^m+\Phi^m+\widetilde{E}^m)
\end{aligned}\,,
\end{equation}
where we set\footnote{In particular, it is obvious that the square of all terms except $\Delta^m$ contribute small values and will enter $d$, and the cross terms would dominate.}
\[
\begin{aligned}
U^m&=\nabla f(y^m)-\nabla f(x^m)\,,\\
V^m&=\int^{(m+1)h}_{mh}\left(\nabla f(y_s)-\nabla f(y^m)-\sqrt{2}\int^s_{mh}\mathcal{H}(f)(y_r)\rd B_r\right)\rd s\,,\\
\Phi^m&=\frac{\sqrt{2}}{h}\int^{(m+1)h}_{mh}\int^s_{mh}\mathcal{H}(f)(y_r)\rd B_r\rd s\,.
\end{aligned}
\]

Upon getting equation~\eqref{eqn:Deltam+1} it is time to analyze each term and hopefully derive an induction inequality that states $\EE|\Delta^{m+1}|^2\approx c\EE|\Delta^m|^2+d$ with $c<1$ and $d$ being of high order in $\eta$ and $h$, some parameters we can tune. Indeed the $\Delta^m$ term is what we would like to preserve, and the $U^m$ term depends on $\Delta^m$ with a Lipschitz coefficient. The opposite signs of these two terms essentially indicate that $c$ can be made $<1$. The $V^m+h\Phi^m$ completely depends on the one-time step error. In some sense, it is close to the forward Euler error obtained in one timestep. The $\widetilde{E}^m$ term is the most crucial term and the only term that reflects the error introduced by the algorithm in one time step. By choosing the right discretization in the algorithm to approximate $\nabla f$, one could expect this term to be small. We leave the analysis of this term to Appendix~\ref{VarwidetileEX}, and focus on how the other terms interact here. 

We first control last two terms in the last line of~\eqref{eqn:Deltam+1}. According to Lemma 6 of~\cite{DALALYAN20195278}, we first have
\begin{equation}\label{boundforVmphim}
\EE|V^m|^2\leq \frac{h^4}{2}\left(H^2d^2+L^3d\right),\quad \EE|\Phi^m|^2\leq \frac{2L^2hd}{3}\,,
\end{equation}

and thus:
\begin{equation}\label{secondDeltam+1}
\begin{aligned}
\EE|V^m+h(\widetilde{F}^m-F^m)|^2&\leq 2\left(\EE|V^m|^2+h^2\EE|\widetilde{F}^m-F^m)|^2\right)\\
&\leq h^4\left(H^2d^2+L^3d\right)+2h^2\left(2L^2\eta^2d+8L^2\eta^2d^2\right)\\
&\leq (H^2+16L^2)h^4d^2+(L^3+4L^2)h^4d = h^4D\,,
\end{aligned}
\end{equation}
where we use~\eqref{errorinG} and \eqref{boundforVmphim} in the second inequality and the condition of $h$ and $\eta$ in~\eqref{eqn:conditiononheta} in last inequality. We also have:
\begin{equation}\label{thirdDeltam+1}
\begin{aligned}
&\EE|U^m+\Phi^m+\widetilde{E}^m|^2\\
\leq& 3\EE|U^m|^2+3\EE|\Phi^m|^2+3\EE|\widetilde{E}^m|^2\,,\\
\leq& 3L^2T^m_1+2L^2hd+9dL^2T^m_1+9dT_2^m+72hL^2d^3\left[\frac{hL^2d}{\mu }+1\right]\,.
\end{aligned}
\end{equation}
where we used the Lipschitz continuity of $f$ for controlling $U^m$, \eqref{boundforVmphim} for $\Phi^m$, and Appendix \ref{VarwidetileEX} for $\widetilde{E}^m$. 

We then handle the cross terms. For example, due to the independence,~\eqref{expectationofEx}, and the convexity, we have:
\begin{equation}\label{eqn:cross_UE}
\EE\left\langle\Delta^m,\Phi^m\right\rangle=0\,,\quad \EE\left\langle\Delta^m,\widetilde{E}^m\right\rangle=0\,,\quad \left\langle\Delta^m,U^m\right\rangle \geq \mu |\Delta^m|^2\,,
\end{equation}
this means the cross term between first and the third term in the last line~\eqref{eqn:Deltam+1} leads to $-2\mu h\EE|\Delta^m|^2$. The cross term produced by the first and the last term, however can be hard to control, mostly because $\EE\left\langle\Delta^m,V^m\right\rangle$ is unknown. We now employ Young's inequality, meaning, for any $a>0$:
\begin{equation}\label{Deltam+1}
\begin{aligned}
T_1^{m+1}&=\EE|\Delta^{m+1}|^2\\
&\leq (1+a)\EE|\Delta^{m+1}+V^m+h(\widetilde{F}^m-F^m)|^2+\left(1+\frac{1}{a}\right)\EE|V^m+h(\widetilde{F}^m-F^m)|^2\,.
\end{aligned}
\end{equation}
While the second term is already investigated in~\eqref{secondDeltam+1}, the first term of~\eqref{Deltam+1}, according to~\eqref{eqn:Deltam+1} becomes:
\begin{equation}\label{firsttermofV}
\begin{aligned}
\EE|\Delta^{m+1}+V^m+h(\widetilde{F}^m-F^m)|^2=&\EE|\Delta^m-h(U^m+\Phi^m+\widetilde{E}^m)|^2\\
=&\EE|\Delta^{m}|^2-2h\EE\left\langle\Delta^m,U^m+\Phi^m+\widetilde{E}^m \right\rangle\\
&+h^2\EE|U^m+\Phi^m+\widetilde{E}^m|^2\\
\leq &(1-2\mu h)\EE|\Delta^{m}|^2+h^2\EE|U^m+\Phi^m+\widetilde{E}^m|^2
\end{aligned}\,,
\end{equation}
where we used~\eqref{eqn:cross_UE}. Plug\eqref{thirdDeltam+1} into \eqref{firsttermofV}, we have have, using the definition of the coefficients $A, B, C$:
\begin{equation}\label{firsttermofV2}
\begin{aligned}
\EE|\Delta^m-h(U^m+\Phi^m+\widetilde{E}^m)|^2\leq AT^m_1+Ch^3+BT^m_2\,,
\end{aligned}
\end{equation}
and plug it together with~\eqref{secondDeltam+1} in~\eqref{Deltam+1} to conclude~\eqref{firstimboundOLD}.
\end{proof}

\begin{lemma}\label{lemmaTM2+1}
Under conditions of Theorem~\ref{thm:disconvergenceSAGAOLD}, we have the upper bound for $T^{m+1}_2$:
\begin{equation}\label{secondimboundOLD}
T^{m+1}_2\leq \tilde{A}T^m_1+\tilde{B}T^m_2
\end{equation}
where $\tilde{A}=\frac{L^2}{d}$ and $\tilde{B} = 1-1/d$.
\end{lemma}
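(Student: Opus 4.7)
The proof plan is to exploit the coordinate structure of the updates of $\widetilde{g}^{m+1}$ and $\beta^{m+1}$, both of which only differ from their predecessors $\widetilde{g}^m$ and $\beta^m$ in the single coordinate $r_m$. My first step is to split $|\widetilde{g}^{m+1}-\beta^{m+1}|^2$ into a contribution from the $r_m$-th coordinate and a contribution from the remaining $d-1$ coordinates:
\[
|\widetilde{g}^{m+1}-\beta^{m+1}|^2 = \bigl|\partial_{r_m}f(x^m)-\partial_{r_m}f(y^m)\bigr|^2 + \sum_{i\neq r_m}\bigl|(\widetilde{g}^m-\beta^m)_i\bigr|^2.
\]
This identity holds pointwise in the realization, since the definitions of $\widetilde{g}^{m+1}$ and $\beta^{m+1}$ prescribe the same update rule at coordinate $r_m$ and inheritance otherwise.

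The second step is to take expectation over $r_m$, conditionally on all prior randomness. Because $r_m$ is drawn uniformly from $\{1,\ldots,d\}$, I expect the first term to average into $\frac{1}{d}|\nabla f(x^m)-\nabla f(y^m)|^2$, which the Lipschitz-gradient assumption in Assumption~\ref{assum:Cov} bounds by $\frac{L^2}{d}|x^m-y^m|^2$. For the second term, averaging $\sum_{i\neq r_m}|(\widetilde{g}^m-\beta^m)_i|^2$ over $r_m$ produces the factor $\frac{d-1}{d}=1-1/d$ multiplying $|\widetilde{g}^m-\beta^m|^2$, because each fixed coordinate $i$ is excluded from the sum with probability $1/d$.

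The third step is simply to take full expectation and recognize the terms $T^m_1=\EE|x^m-y^m|^2$ and $T^m_2=\EE|\widetilde{g}^m-\beta^m|^2$, giving
\[
T^{m+1}_2 \leq \frac{L^2}{d}T^m_1 + \Bigl(1-\frac{1}{d}\Bigr)T^m_2,
\]
which is exactly \eqref{secondimboundOLD} with $\tilde{A}=L^2/d$ and $\tilde{B}=1-1/d$. There is no real obstacle here: the only subtlety is verifying that the coordinate decomposition and the conditional averaging over $r_m$ are applied correctly, and that no cross terms survive (they don't, because the decomposition is a clean orthogonal sum in the standard basis). In contrast to Lemma~\ref{lemma:SAGAOLD}, no Young-type inequality, no hessian-Lipschitz bound, and no discretization-error control is needed; the estimate follows from the linearity of expectation together with $L$-gradient-Lipschitz continuity.
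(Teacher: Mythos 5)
Your proof is correct and follows essentially the same route as the paper: the paper computes $\EE_{r_m}|\beta^{m+1}_i-\widetilde g^{m+1}_i|^2$ coordinate by coordinate and then sums over $i$, which is just a reorganization of your decomposition into the $r_m$-th coordinate plus the untouched coordinates followed by averaging over $r_m$. Both arguments use only the exact pointwise identity, the uniform law of $r_m$, and the $L$-Lipschitz gradient bound, so nothing further is needed.
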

Note that the coefficient $\tilde{B}$ is automatically $<1$ and the gap $1/d$ is independent of $h$ and $\eta$. This gives us some room to tune the parameters.
\begin{proof}
We now expand $\EE\left|\beta^{m+1}_i-\widetilde g^{m+1}_i\right|^2$:
\[
\begin{aligned}
\EE_{r_m}\left|\beta^{m+1}_i-\widetilde g^{m+1}_i\right|^2&=\EE_{r_m}\left[\left|\beta^{m+1}_i-\widetilde g^{m+1}_i\right|^2-\left|\beta^m_i-\widetilde g^{m}_i\right|^2\right]+\left|\beta^{m}_i-\widetilde g^{m}_i\right|^2\\
&=\frac{1}{d}\left[\left|\partial_i f(y^m)-\partial_i f(x^m)\right|^2-\left|\beta^m_i-\widetilde g^{m}_i\right|^2\right]+\left|\beta^{m}_i-\widetilde g^{m}_i\right|^2\\
&=\left(1-\frac{1}{d}\right)\left|\beta^m_i-\widetilde g^{m}_i\right|^2+\frac{1}{d}\left|\partial_i f(y^m)-\partial_i f(x^m)\right|^2
\end{aligned}\,.
\] 
Therefore, we have
\begin{equation}\label{iterationforhg}
\begin{aligned}
\EE\left|\beta^{m+1}-\widetilde g^{m+1}\right|^2&= \left(1-\frac{1}{d}\right)\EE\sum^d_{i=1}\left|\beta^m_i-\widetilde g^{m}_i\right|^2+\frac{1}{d}\EE|\nabla f(y^m)-\nabla f(x^m)|^2\\
&\leq \left(1-\frac{1}{d}\right)\EE\left|\beta^m-\widetilde g^{m}\right|^2+\frac{L^2}{d}\EE|\Delta^m|^2
\end{aligned}
\end{equation}
\end{proof}

Now, we are ready to prove Theorem \ref{thm:disconvergenceSAGAOLD} by adjusting $a$ and $c_p$.

\begin{proof}[Proof of Theorem \ref{thm:disconvergenceSAGAOLD}]
Plug \eqref{firstimboundOLD} and~\eqref{secondimboundOLD} into \eqref{eqn:Lyna}, we have
\begin{equation}\label{firstiterationformula}
\begin{aligned}
T^{m+1}\leq &\left((1+a)A+c_p\tilde{A}\right)T^m_1+\left(\frac{(1+a)B}{c_p}+\tilde{B}\right)c_pT^m_2\\
&+(1+a)h^3C+\left(1+\frac{1}{a}\right)h^4D\,.
\end{aligned}
\end{equation}
To show the proof amounts to choosing proper $c_p$ and $a$. Note that according to the definitions, $A\sim 1-\mu h$, $\tilde{A}\sim 1/d$, $B\sim h^2$ and $\tilde{B}\sim 1-1/d$, this suggests $c_p\sim h^2$ to cancel out the order in $B$, and in the end we have estimates of the form:
\[
(1+a)A + c_p\tilde{A}=1-O(h)\,,\quad \frac{(1+a)B}{c_p}+\tilde{B}=1-O(h)\,.
\]
Indeed, let us choose
\[
c_p=18(1+a)h^2d^2\,,
\]
so that
\[
(1+a)A + c_p\tilde{A} = (1+a)(1-2\mu h+3(1+9d)L^2h^2)\,,\quad\text{and}\quad \frac{(1+a)B}{c_p}+\tilde{B} = 1-\frac{1}{2d}\,.
\]
Since $h$ satisfies~\eqref{eqn:conditiononheta}, this relaxes them to
\[
(1+a)A + c_p\tilde{A} \leq (1+a)(1-\mu h)\,,\quad\text{and}\quad \frac{(1+a)B}{c_p}+\tilde{B} = 1-\frac{1}{2d}\leq 1-\frac{\mu h}{2}\,.
\]
Setting $a=\frac{\mu h/2}{1-\mu h}$ so that
\[
(1+a)(1-\mu h)= 1-\frac{\mu h}{2}\,,\quad\text{and}\quad 1+1/a\leq 2/\mu h\,,
\]
and this finally leads to
\begin{equation}\label{firstiterationformula2}
\begin{aligned}
T^{m+1}&\leq (1-\mu h/2)T^m_1+(1-\mu h/2)c_pT^m_2+2h^3C+\frac{2}{\mu h}h^4D\\
&\leq (1-\mu h/2)T^{m}+2\left(h^3C+h^3D/\mu \right)\,.
\end{aligned}
\end{equation}
Noting
\[
W^2_2(q^O_m,p)\leq T^m
\]
and
\[
\begin{aligned}
T^{0}&=\EE|y^0-x^0|^2+c_{p}\EE|g^0-\beta^0|^2=\EE|y^0-x^0|^2+c_{p}\EE|\nabla f(x^0)-\nabla f(y^0)|^2\\
&\leq (1+c_pL^2)\EE|y^0-x^0|^2\leq (1+\mu ^2/L^2)W^2_2(q^O_0,p)\leq (1+1/\kappa ^2)W^2_2(q^O_0,p)\,,
\end{aligned}
\]
where we use $c_pL^2\leq 36h^2L^2d^2$ and $hLd<\mu /(27L)$, by iteration, we finally have
\begin{equation}\label{eqn:secondlast}
W^2_2(q^O_m,p)\leq \exp(-\mu hm/2)(1+1/\kappa ^2)W^2_2(q_0^O,p)+4\left(h^2C/\mu +h^2D/\mu ^2\right)\,.
\end{equation}
The proof is concluded considering
\[
C/\mu \leq d^3\left(2L^2/(d^2\mu )+75L^2/\mu \right)\leq 77d^3\kappa ^2\mu \,,
\]
\[
D/\mu ^2\leq d^2(H^2/\mu ^2+20\kappa ^2+\kappa ^3\mu /d)\,.
\]
\end{proof}

\section{Proof of convergence of RCAD-U-LMC (Theorem \ref{thm:thmconvergenceULDSAGA})}\label{proofofRCAD-U-LMC}
Recall the definitions:
\begin{itemize}
\item $E^m$: $E^m = \nabla f(x^m) - F^m$
\item $\widetilde{g}^0: \widetilde{g}^0=\nabla f(x^0)$
\item $\widetilde{g}^{m+1}$: $\widetilde{g}^{m+1}_{r_m}=\partial_{r_m}f(x^m)\quad\text{and}\quad \widetilde{g}^{m+1}_{i}=\widetilde{g}^{m}_{i}\quad \text{if}\quad i\neq r_m\,,$
\item $\widetilde{F}^m$: $\widetilde{F}^m=\widetilde{g}^m+d\left(\widetilde{g}^{m+1}-\widetilde{g}^m\right)$
\item $\widetilde{E}^m$: $\widetilde{E}^m=\nabla f(x^m)-\widetilde{F}^m=E^m+F^m-\widetilde{F}^m$
\end{itemize}
Similarly, we also have
\begin{equation}\label{errorinG1}
|\widetilde{g}^m-g^m|^2\leq L^2\eta^2d,\quad \left|\widetilde{F}^m-F^m\right|^2=\left|\widetilde{E}^m-E^m\right|^2\leq 2L^2\eta^2d+8L^2\eta^2d^2\,.
\end{equation}

According to the algorithm, RCAD-U-LMC can be seen as drawing $(x^0,v^0)$ from distribution induced by $q^U_0$, and
update $(x^m,v^m)$ using the following coupled SDEs:
\begin{equation}\label{eqn:ULDSDE2SAGA}
\left\{\begin{aligned}
&\mathrm{V}_t=v^me^{-2(t-mh)}-\gamma \int^{t}_{mh}e^{-2(t-s)}\rd sF^m+\sqrt{4\gamma}e^{-2 (t-mh)}\int^{t}_{mh}e^{2 s}d\mathcal{B}_s \\
&\mathrm{X}_t=x^m+\int^{t}_{mh} \mathrm{V}_sds
\end{aligned}\right.\,,
\end{equation}
where $\mathcal{B}_s$ is the Brownian motion and $(x^{m+1},v^{m+1})=(\mathrm{X}_{(m+1)h},\mathrm{V}_{(m+1)h})$. 

We then define $w^m=x^m+v^m$, and denote $u_m(x,w)$ the probability density of $(x^m,w^m)$ and $u^\ast(x,w)$ the probability density of $(x,w)$ if $(x,v=w-x)$ is distributed according to density function $p_2$. One main reason to change $(x,v)$ to $(x,w)$ is that in~\cite{Cheng2017UnderdampedLM}, the authors showed that the map $(x_0,w_0)\rightarrow (x_t,w_t)$ induced from \eqref{eqn:underdampedLangevin} is a contracting map for for $t$. From~\cite{Cheng2017UnderdampedLM}, we also have:
\begin{equation}\label{trivialinequlaitySAGA}
 |x^m-x|^2+|v^m-v|^2\leq 4(|x^m-x|^2+|w^m-w|^2)\leq 16(|x^m-x|^2+|v^m-v|^2)\,\\
 \end{equation}
 and
 \begin{equation}\label{trivialinequlaitySAGA2}
  W^2_2(q^U_{m},p_2)\leq  4W^2_2(u_{m},u^*)\leq  16W^2_2(q^U_{m},p_2)\,.
\end{equation}

Similar to RCAD-O-LMC, define another trajectory of sampling by setting  $(\widetilde{x}^0,\wv^0)$ to be drawn from the distribution induced by $p_2$, and that
$\wx^m=\wrx_{hm},\ \wv^m=\wrv_{hm},\ \ww^m=\wx^m+\wv^m$ are samples from$\left(\wrx_t,\wrv_t\right)$ that satisfy
\begin{equation}\label{eqn:ULDSDE2SAGAstar}
\left\{\begin{aligned}
&\wrv_t=\wv_0e^{-2 t}-\gamma \int^t_0e^{-2(t-s)}\nabla f\left(\wrx_s\right)\rd s+\sqrt{4\gamma}e^{-2 t}\int^t_0e^{2 s}d\mathcal{B}_s \\
&\wrx_t=\wx_0+\int^t_0 \wrv_sds
\end{aligned}\right.\,,
\end{equation}
with the same Brownian motion as before. This leads to
\begin{equation}\label{eqn:ULDSDE2SAGAstar2}
\left\{\begin{aligned}
&\wv^{m+1}=\wv^me^{-2h}-\gamma \int^{(m+1)h}_{mh}e^{-2((m+1)h-s)}\nabla f(\wrx_s)\rd s+\sqrt{4\gamma}e^{-2 h}\int^{(m+1)h}_{mh}e^{2 s}d\mathcal{B}_s \\
&\wx^{m+1}=\wx^m+\int^{(m+1)h}_{mh} \wrv_sds
\end{aligned}\right.\,.
\end{equation}

Clearly $\left(\widetilde{X}_t,\widetilde{V}_t\right)$ can be seen as drawn from target distribution for all $t$, and initially we can pick $(\widetilde{x}^0,\widetilde{v}^0)$ such that
\[
W^2_2(q^U_0,p_2)=\EE\left(|x^0-\wx^0|^2+|v^0-\wv^0|^2\right)\,,\quad\text{and}\quad W^2_2(u_0,u^\ast)=\EE\left(|x^0-\wx^0|^2+|w^0-\ww^0|^2\right)\,.
\]

We then also define $\beta^m$
\[
\beta^0=\nabla f(\wx^0)
\]
and
\[
\beta^{m+1}_{r_m}=\partial_{r_m}f(\wx^m)\quad\text{and}\quad \beta^{m+1}_{i}=\beta^{m}_{i}\quad \text{if}\quad i\neq r_m\,,
\]

We will be showing the decay of the following Lyapunov function:
\begin{equation}\label{eqn:LynaULMC}
T^m\triangleq T^m_1+c_pT^m_2=\EE\left(|\wx^m-x^m|^2+|\ww^m-w^m|^2\right)+c_{p}\EE|\widetilde{g}^m-\beta^m|^2\,,
\end{equation}
where $c_{p}$ will be carefully chosen later.

The following lemma gives bounds for $T^{m+1}_1,T^{m+1}_2$ using $T^m_1,T^m_2$, and the proof of the theorem amounts to selecting the correct $c_p$.
\begin{lemma} Under conditions of Theorem \ref{thm:thmconvergenceULDSAGA}, we have
\begin{equation}\label{TM+11BOUNDULD}
\begin{aligned}
T^{m+1}_1< &D_1T^{m}_1+D_2T^{m}_2+D_3\,,
\end{aligned}
\end{equation}
\begin{equation}\label{TM+12BOUNDULD}
T^{m+1}_2\leq \frac{L^2}{d}T^m_1+\left(1-\frac{1}{d}\right)T^m_2\,,
\end{equation}
\end{lemma}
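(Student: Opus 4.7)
The second inequality~\eqref{TM+12BOUNDULD} is essentially dynamics-free and mirrors the corresponding step in the overdamped analysis (Lemma~\ref{lemmaTM2+1}). The plan is to condition on the history up to step $m$, use that $r^m$ is uniform on $\{1,\ldots,d\}$, and compute per-coordinate:
\begin{equation*}
\EE_{r^m}\bigl|\beta^{m+1}_i - \widetilde g^{m+1}_i\bigr|^2 = \bigl(1-1/d\bigr)\bigl|\beta^m_i - \widetilde g^m_i\bigr|^2 + (1/d)\bigl|\partial_i f(\wx^m) - \partial_i f(x^m)\bigr|^2.
\end{equation*}
Summing over $i$ and bounding the second sum by $L^2|\wx^m - x^m|^2\leq L^2 T^m_1$ via Assumption~\ref{assum:Cov} gives the claim.

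The substantive content is~\eqref{TM+11BOUNDULD}. The plan is to couple~\eqref{eqn:ULDSDE2SAGA} and~\eqref{eqn:ULDSDE2SAGAstar2} with the same Brownian motion $\mathcal{B}_s$, so that the stochastic integrals cancel upon subtraction. This yields closed expressions for $v^{m+1}-\wv^{m+1}$ and $x^{m+1}-\wx^{m+1}$ (hence for $w^{m+1}-\ww^{m+1}$) in which the only randomness-and-error driving term is $F^m - \nabla f(\wrx_s)$ integrated against the kernel $e^{-2((m+1)h-s)}$ on $[mh,(m+1)h]$. I would split
\begin{equation*}
F^m - \nabla f(\wrx_s) = -\widetilde E^m + (\widetilde F^m - F^m) + \bigl(\nabla f(x^m) - \nabla f(\wx^m)\bigr) + \bigl(\nabla f(\wx^m) - \nabla f(\wrx_s)\bigr).
\end{equation*}
Each summand plays a distinct role: $\widetilde E^m$ is the random-coordinate-variance piece whose second moment is controlled in terms of $T^m_2$ (plus additive constants) by the bounds on $\widetilde E^m$ established in Appendix~\ref{VarwidetileEX}, and this is what produces $D_2 T^m_2$; $\widetilde F^m - F^m$ is the finite-differencing error bounded by~\eqref{errorinG1}, contributing to $D_3$; the Lipschitz term feeds into the $T^m_1$ coefficient via Assumption~\ref{assum:Cov}; and the SDE-discretization residual $\nabla f(\wx^m) - \nabla f(\wrx_s)$ is an $O(h)$-per-step error handled via a priori second-moment estimates on $\wrv_s$ (which stay bounded because $(\wrx_s,\wrv_s)$ is stationary under $p_2$).

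The principal obstacle, and the place where I expect the calculation to be the most delicate, is assembling the $x$- and $w$-components together so that the contraction of the underdamped flow in $(x,w)$ coordinates survives discretization. After subtraction one obtains, schematically,
\begin{equation*}
\begin{pmatrix} x^{m+1}-\wx^{m+1} \\ w^{m+1}-\ww^{m+1} \end{pmatrix} = M_h \begin{pmatrix} x^m-\wx^m \\ w^m-\ww^m \end{pmatrix} + (\text{error})\,,
\end{equation*}
and one needs $\|M_h\|\leq 1-\Theta(h/\kappa)$ for small $h$, matching the rate in~\eqref{eqn:convergeULDSAGA}. The plan is to invoke the contraction computation of~\cite{Cheng2017UnderdampedLM} with $\gamma = 1/L$ for the main part, then use Young's inequality with a tunable parameter (analogous to the $a=\mu h/2/(1-\mu h)$ chosen in the overdamped proof) to absorb cross-terms between the contractive block, the Lipschitz block, and the error block without corrupting the leading contraction rate. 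All genuinely higher-order-in-$h$ or $\eta$-dependent terms (finite-differencing error, SDE discretization error, and the constant part of the $\widetilde E^m$ variance) are lumped into $D_3$, so that the structure $D_1 T^m_1 + D_2 T^m_2 + D_3$ is obtained, ready to be combined with~\eqref{TM+12BOUNDULD} through an appropriate choice of $c_p$ (of order $h^2$, by analogy with the overdamped proof) in the subsequent convergence argument.
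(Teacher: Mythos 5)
Your outline coincides with the paper's proof in every structural respect: the bound on $T^{m+1}_2$ is verbatim the overdamped computation of Lemma~\ref{lemmaTM2+1}; and for $T^{m+1}_1$ the paper uses exactly your decomposition (the variance piece $\widetilde{E}^m$, the finite-differencing discrepancy $\widetilde{F}^m-F^m$ bounded by~\eqref{errorinG1}, the Lipschitz difference $\nabla f(\wx^m)-\nabla f(x^m)$ which enters the contraction block, and the within-step residual $\nabla f(\wrx_s)-\nabla f(\wx^m)$ controlled by stationarity of $(\wrx_t,\wrv_t)$ under $p_2$), together with the $(x,w)$-coordinate contraction estimate $(1+h^2)|A^m|^2+|C^m|^2\le(1-h/\kappa+Dh^2)|\Delta^m|^2$ and a Young parameter $a$ tuned as in the overdamped argument.

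The one place your plan is underspecified, and where a naive execution would fail, is the cross term between the contractive block and $\widetilde{E}^m$. You propose to absorb all cross terms by Young's inequality with a tunable parameter; but Young applied to $2\EE\langle \Delta^m\text{-part},\ \gamma h\widetilde{E}^m\rangle$ forces a weight of order $h/\kappa$ on the $|\Delta^m|^2$ side (to preserve the contraction) and hence a weight of order $\gamma^2\kappa h$ on $\EE|\widetilde{E}^m|^2$. Since $\EE_{r_m}|\widetilde{E}^m|^2=(d-1)|\nabla f(x^m)-\widetilde{g}^m|^2$ is only controlled by a bound of order $dL^2T^m_1$ (Lemma~\ref{lem:varianceofE}), this injects a term of order $\kappa h d\,T^m_1$ into the coefficient of $T^m_1$, which destroys the $1-h/(2\kappa)$ contraction for every admissible $h$ because $\kappa^2 d\gg1$. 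The proof must instead exploit the unbiasedness $\EE_{r_m}\widetilde{E}^m=0$, which annihilates the cross term against every quantity that is measurable before $r_m$ is drawn; this is the mechanism that makes the $\widetilde{E}^m$ contribution enter only at order $h^2\EE|\widetilde{E}^m|^2$. Moreover---and this is the wrinkle absent from the overdamped case---one cross term survives even then: $\EE\langle\int_{mh}^{(m+1)h}(\wrv_s-\rv_s)\rd s,\ \widetilde{E}^m\rangle\neq0$, because $\rv_s$ for $s>mh$ already depends on $r_m$ through $F^m$. The paper handles it by substituting the explicit formula for $\rv_s$, which exposes an additional factor of $h$ and reduces this term to $O(\gamma^2h^3)\EE|\widetilde{E}^m|^2$ plus finite-differencing remainders. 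Without these two steps the claimed constants $D_1$ and $D_2$ are not attainable; with them, the rest of your assembly goes through exactly as in the paper.
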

where
\[
D_1=1-h/(2\kappa )+244h^2d,\ D_2=84\gamma^2h^2d,\ D_3=672\gamma h^4d^4+30h^3d/\mu +260h^6d^2\,.
\]

\begin{proof}
The proof for bounding $T^m_2$ is the same as the one in Appendix~\ref{proofofRCAD-O-LMC} Lemma~\ref{lemmaTM2+1} and is omit from here. We only prove the first inequality.
\begin{itemize}[leftmargin=*]
\item \textbf{Step 1:} We firstly define $|\Delta^m|^2=|\ww^m-w^m|^2+|\wx^m-x^m|^2$, and compare \eqref{eqn:ULDSDE2SAGA} and \eqref{eqn:ULDSDE2SAGAstar2} for:
\[
\begin{aligned}
|\Delta^{m+1}|^2=&\left|
(\wv^m-v^m)e^{-2h}+(\wx^m-x^m)+\int^{(m+1)h}_{mh}\wrv_s-\rv_s\rd s\right.\\
&-\gamma\int^{(m+1)h}_{mh}e^{-2((m+1)h-s)}\left[\nabla f\left(\wrx_s\right)-\nabla f(x^m)\right]\rd s\\
&\left.+\gamma\int^{(m+1)h}_{mh}e^{-2((m+1)h-s)}E^m\rd s
\right|^2\\
&+\left|
(\wx^m-x^m)+\int^{(m+1)h}_{mh}\wrv_s-\rv_s\rd s
\right|^2\\
=&\left|\mathrm{J}^m_1\right|^2+\left|\mathrm{J}^m_2\right|^2=\left|\mathrm{J}^{r,m}_1+\mathrm{J}^{E,m}_1\right|^2+\left|\mathrm{J}^m_2\right|^2\,,
\end{aligned}
\]
where we denote
\[
\begin{aligned}
\mathrm{J}^{r,m}_1&=(\wv^m-v^m)e^{-2h}+(\wx^m-x^m)+\int^{(m+1)h}_{mh}\wrv_s-\rv_s\rd s\\
&-\gamma\int^{(m+1)h}_{mh}e^{-2((m+1)h-s)}\left[\nabla f\left(\wrx_s\right)-\nabla f(x^m)\right]\rd s\,
\end{aligned}
\]
and
\[
\mathrm{J}^{E,m}_1=\gamma\int^{(m+1)h}_{mh}e^{-2((m+1)h-s)}E^m\,.
\]
To control $\mathrm{J}^m_1$, we realize that $\mathrm{J}^{E,m}_1$ term, produced by $E^m$, is not perpendicular to the rest of the terms, namely $\mathrm{J}^{r,m}_1$, and it will lead to a lot of cross terms. We thus replace it by $\mathrm{J}^{\widetilde{E},m}_1$ induced by $\widetilde{E}^m$. This allows us to eliminate all cross terms. Since $E^m-\widetilde{E}^m$ is small, such replacement brings only small perturbation. In particular, with Young's inequality:
\begin{equation}\label{Jm1first}
\begin{aligned}
\EE\left|\mathrm{J}^m_1\right|^2\leq &(1+h^2)\EE\left|\mathrm{J}^m_1+\mathrm{J}^{\widetilde{E},m}_1-\mathrm{J}^{E,m}_1\right|^2+(1+1/h^2)\EE\left|\mathrm{J}^{\widetilde{E},m}_1-\mathrm{J}^{E,m}_1\right|^2\\
\leq &(1+h^2)\EE\left|\mathrm{J}^m_1+\mathrm{J}^{\widetilde{E},m}_1-\mathrm{J}^{E,m}_1\right|^2+\gamma^2(h^2+1)(2L^2\eta^2d+8L^2\eta^2d^2)
\end{aligned}\,,
\end{equation}
where we use the smallness of $E^m-\tilde{E}^m$ in \eqref{errorinG1}. The first term of \eqref{Jm1first} can be separated into three terms:
\[
\begin{aligned}
&\EE\left|\mathrm{J}^m_1+\mathrm{J}^{\widetilde{E},m}_1-\mathrm{J}^{E,m}_1\right|^2=\EE \left|\mathrm{J}^{r,m}_1+\mathrm{J}^{\widetilde{E},m}_1\right|^2\\
=&\EE\left|\mathrm{J}^{r,m}_1\right|^2+\EE\left|\mathrm{J}^{\widetilde{E},m}_1\right|^2+2\EE\left\langle\mathrm{J}^{r,m}_1,\mathrm{J}^{\widetilde{E},m}_1\right\rangle\\
\end{aligned}\,.
\]
Firstly note that
\[
\EE\left|\mathrm{J}^{\widetilde{E},m}_1\right|^2\leq \gamma^2h^2\EE\left|\widetilde{E}^m\right|^2\,.
\]
And to bound the third term, note that
\[
\EE\left\langle \mathrm{J}^{r,m}_1\,,\mathrm{J}^{\widetilde{E},m}_1\right\rangle
=\EE\left\langle \int^{(m+1)h}_{mh}\wrv_s-\rv_s\rd s\,,\mathrm{J}^{\widetilde{E},m}_1\right\rangle
\]
due to the fact that
\begin{equation}\label{eqn:rmindependence}
\EE\langle A,\tilde{E}^m\rangle = \EE\langle A,\EE_{r_m}\tilde{E}^m\rangle = 0
\end{equation}
for all $A$ that has no $r_m$ dependence. To further bound this term, we plug in the definition and have:
\[
\begin{aligned}
&2\EE\left\langle\int^{(m+1)h}_{mh}\wrv_s-\rv_s\rd s,\gamma\int^{(m+1)h}_{mh}e^{-2((m+1)h-s)}\rd s\widetilde{E}^m\right\rangle\\
=&-2\EE\left\langle\int^{(m+1)h}_{mh}\rv_s\rd s,\gamma\int^{(m+1)h}_{mh}e^{-2((m+1)h-s)}\rd s\widetilde{E}^m\right\rangle\\
=&2\EE\left\langle\gamma\int^{(m+1)h}_{mh}\int^s_{mh} e^{-2(s-t)}\rd t\rd sE^m,\gamma\int^{(m+1)h}_{mh}e^{-2((m+1)h-s)}\rd s\widetilde{E}^m\right\rangle\\
\leq&\gamma^2h^3(3\EE\left|\widetilde{E}^m\right|^2+4L^2\eta^2d+16L^2\eta^2d^2)
\end{aligned}\,,
\]
where we used~\eqref{eqn:rmindependence} again in the first and second equalities and
\[\EE\left\langle E^m,\widetilde{E}^m\right\rangle\leq 3\EE|\widetilde{E}^m|^2+2\EE\left|\widetilde{E}^m-E^m\right|^2
\]
together with \eqref{errorinG1} in the last inequality.

In conclusion, we have
\begin{equation}\label{Deltam+1ULD}
\begin{aligned}
T^{m+1}_1=\EE\left|\Delta^{m+1}\right|^2\leq &(1+h^2)\EE\left|\mathrm{J}^{r,m}_1\right|^2+\left|\mathrm{J}^m_2\right|^2+\gamma^2(h^2+1)(2L^2\eta^2d+8L^2\eta^2d^2)\\
&+(1+h^2)\left(\gamma^2h^2\EE\left|\widetilde{E}^m\right|^2+\gamma^2h^3(3\EE\left|\widetilde{E}^m\right|^2+4L^2\eta^2d+16L^2\eta^2d^2)\right)
\end{aligned}\,.
\end{equation}

Using $\gamma L=1$, $h<1$, $\eta<h^3$, we have
\begin{equation}\label{eqn:ULDSAGADELTAM+1}
\begin{aligned}
T^{m+1}_1=\EE\left|\Delta^{m+1}\right|^2\leq &(1+h^2)\EE\left|\mathrm{J}^{r,m}_1\right|^2+\left|\mathrm{J}^m_2\right|^2+2\gamma^2(h^2+3h^3)\EE\left|\widetilde{E}^m\right|^2\\
&+60h^6d^2
\end{aligned}\,.
\end{equation}
\item \textbf{Step 2:} Now, we study first two terms in~\eqref{eqn:ULDSAGADELTAM+1}. We try to bound $(1+h^2)\EE\left|\mathrm{J}^{r,m}_1\right|^2+\left|\mathrm{J}^m_2\right|^2$ using $T^{m}_1$ and $\EE|\widetilde{E}^m|^2$. We first try to separate out ($x^m,\widetilde{x}^m,v^m,\widetilde{v}^m$) from $\mathrm{J}^{r,m}_1$ and $\mathrm{J}^m_2$. Denote
\begin{equation}\label{eqn:ASAGA}
\begin{aligned}
A^m=&(\wv^m-v^m)(h+e^{-2h})+(\wx^m-x^m)\\
&-\gamma\int^{(m+1)h}_{mh}e^{-2((m+1)h-s)}\left[\nabla f(\wx^m)-\nabla f(x^m)\right]\rd s\,,
\end{aligned}
\end{equation}
\begin{equation}\label{eqn:BSAGA}
\begin{aligned}
B^m=&\int^{(m+1)h}_{mh}\wrv_s-\rv_s-(\wv^m-v^m)\rd s\\
&-\gamma\int^{(m+1)h}_{mh}e^{-2((m+1)h-s)}\left[\nabla f\left(\wrx_s\right)-\nabla f(\wx^m)\right]\rd s
\end{aligned}\,,
\end{equation}
\begin{equation}\label{eqn:CSAGA}
C^m=(\wx^m-x^m)+\int^{(m+1)h}_{mh}\wv^m-v^m\rd s=(\wx^m-x^m)+h(\wv^m-v^m)\,,
\end{equation}
\begin{equation}\label{eqn:DSAGA}
D^m=\int^{(m+1)h}_{mh}\wrv_s-\rv_s-(\wv^m-v^m)\rd s\,,
\end{equation}
then we have
\[
\mathrm{J}^{r,m}_1=A^m+B^m,\quad \mathrm{J}^m_2=C^m+D^m\,.
\]
By Young's inequality, we have
\begin{equation}\label{eqn:ABCDSAGA}
\begin{aligned}
(1+h^2)\EE|\mathrm{J}^{r,m}_1|^2+\EE|\mathrm{J}^m_2|^2=&(1+h^2)\EE|A^m+B^m|^2+\EE|C^m+D^m|^2\\
\leq &(1+a)\left((1+h^2)\EE|A^m|^2+\EE|C^m|^2\right)\\
&+(1+1/a)((1+h^2)\EE|B^m|^2+\EE|D^m|^2)\,,
\end{aligned}
\end{equation}
where $a>0$ will be carefully chosen later. Now, the first term of \eqref{eqn:ABCDSAGA} only contains information from previous step, using $f$ is strongly convex, we can bound it using $\left|\Delta^m\right|^2$ (showed in Lemma \ref{lem:ACSAGA}). To bound the second term, we need to consider difference between $x,v$ at $t_{m+1}$ and $t_m$, which can be bounded by $|\Delta^m|^2$ and $|E^m|^2$ (showed in Lemma \ref{lem:BDSAGA}).

According to Lemma \ref{lem:BDSAGA}-\ref{lem:ACSAGA},
we first have
\begin{equation}\label{JRM1JM2}
\begin{aligned}
&(1+h^2)\EE|\mathrm{J}^{r,m}_1|^2+\EE|\mathrm{J}^m_2|^2\\
\leq &(1+a)\left[1-h/\kappa +Dh^2\right]T^m_1\\
&+(1+1/a)\left[80h^4T^m_1+5\gamma^2h^4\EE|E^m|^2+5\gamma h^4d\right]\\
= &C_1T^1_m+5(1+1/a)\gamma^2h^4\EE|E^m|^2+5(1+1/a)\gamma h^4d\,,
\end{aligned}
\end{equation}
where in the first inequality we use $1+h^2<2$ and
\[
C_1=(1+a)[1-h/\kappa +Dh^2]+80(1+1/a)h^4\,.
\]

Plug \eqref{JRM1JM2} in \eqref{eqn:ULDSAGADELTAM+1} and also replace $\EE(|E^m|^2)$ with Lemma \ref{lem:varianceofE} equation \eqref{eqn:varianceofEULMC2}, we have
\begin{equation}\label{eqn:ULDSAGADELTAM+12}
\begin{aligned}
T^{m+1}_1\leq &C_1T^m_1+\gamma^2\left[10(1+1/a)h^4+8h^2\right]\EE\left|\widetilde{E}^m\right|^2\\
&+100(1+1/a)h^{10}d^2+5(1+1/a)\gamma h^4d+60h^6d^2\,,
\end{aligned}
\end{equation}
where we use $\gamma L=1,\eta<h^3$ and $h<1$.

\item \textbf{Step 3:} To ensure the decay of $T^m_1$, we need to choose $a$ 
such that the coefficient in front of $T_1^m$ is strictly smaller than $1$. Noting in
\[
C_1=(1+a)[1-h/\kappa +Dh^2]+80(1+1/a)h^4
\]
the second term is of high order, while the first one is of $1-O(h)$ amplified by $1+a$, so it is possible to choose $a$ small enough to make the entire term $1-O(h)$. Indeed, since $h\leq \frac{1}{(1+D)\kappa }$, we have
\[
1-h/\kappa +Dh^2\leq 1-2h/(3\kappa )\,,
\]

and thus by setting $a$ so that
\[
1+a=\frac{1-h/(2\kappa )}{1-2h/(3\kappa )}\,.
\]
The entire coefficient is $1-h/2\kappa +480\kappa h^3$ and is smaller than $1$ for moderately small $h$. Moreover, due to the definition of $a$, we have
\[
1+1/a\leq 6\kappa /h\,,
\]
plugging the calculation in \eqref{eqn:ULDSAGADELTAM+12} we have
\begin{equation}\label{eqn:ULDSAGADELTAM+13}
\begin{aligned}
T^{m+1}_1\leq &\left\{1-h/(2\kappa )+480\kappa h^3\right\}T^m_1\\
&+\gamma^2\left[60\kappa h^3+8h^2\right]\EE\left|\widetilde{E}^m\right|^2\\
&+600\kappa h^9d^2+30\gamma \kappa h^3d+60h^6d^2\,.
\end{aligned}
\end{equation}

We further bound $\EE|\tilde{E}^m|^2$ by plugging in Lemma \ref{lem:varianceofE} equation \eqref{eqn:varianceofEULMC} and use $\gamma L=1,\kappa h<1\leq d,\gamma \kappa = 1/\mu $, we have
\begin{equation}\label{eqn:ULDSAGADELTAM+14}
\begin{aligned}
T^{m+1}_1\leq &\left\{1-h/(2\kappa )+480\kappa h^3\right\}T^m_1\\
&+84h^2d\EE|\wx^m-x^m|^2\\
&+28\gamma^2h^2(24Lh^2d^4+3d\EE\left|\beta^m-\widetilde g^m\right|^2)\\
&+600\kappa h^9d^2+30\gamma \kappa h^3d+60h^6d^2\\
< &\left\{1-h/(2\kappa )+244h^2d\right\}T^m_1\\
&+84\gamma^2h^2d\EE\left|\beta^m-\widetilde g^m\right|^2\\
&+672\gamma h^4d^4+30h^3d/\mu +260h^6d^2
\end{aligned}\,,
\end{equation}
where we use $\EE|\wx^m-x^m|^2\leq \EE|\Delta^m|^2=T^m_1$ and try to absorb small terms into large terms to simplify the formula:
\[
60\kappa h^3+8h^2<28h^2,\quad 600\kappa h^9d^2+60h^6d^2<260h^6d^2,
\]
and
\[
480\kappa h^3+84h^2d\leq 244h^2d,\quad 30\gamma \kappa h^3d=30h^3d/\mu 
\]
 This proves \eqref{TM+11BOUNDULD}.
\end{itemize}
\end{proof}

Now we are ready to prove Theorem \ref{thm:thmconvergenceULDSAGA} by adjusting $c_p$.

\begin{proof}[Proof of Theorem \ref{thm:thmconvergenceULDSAGA}]
Plug \eqref{TM+11BOUNDULD} and \eqref{TM+12BOUNDULD} into \eqref{eqn:LynaULMC}:
\[
T^{m+1}\leq \left\{D_1+\frac{c_pL^2}{d}\right\}T^m_1+\left(1-\frac{1}{d}+\frac{D_2}{c_p}\right)c_pT^m_2+D_3\,.
\]
Note that according to the definition $D_3$ is of $O(h^3)$, and $D_2$ is of $O(h^2)$ while $D_1\sim 1-O(h)$, so it makes sense to choose $c_p$ small enough so that the coefficient for $T^m_1$ keeps being of $1-O(h)$. Indeed, we let
\[
c_p=168\gamma^2h^2d^2\,,
\]
and will have
\begin{equation}\label{eqn:ULDFINAL1}
\begin{aligned}
T^{m+1}\leq &\left\{1-h/(2\kappa )+412h^2d\right\}T^m_1+\left(1-\frac{1}{2d}\right)T^m_2\\
&+672\gamma h^4d^4+30h^3d/\mu +260h^6d^2
\end{aligned}\,,
\end{equation}
where we use $\gamma L=1$.

Using \eqref{eqn:conditionuetaULDSAGA}, we can verify
\[
\max\{1-h/(2\kappa )+412h^2d,1-1/2d\}\leq 1-h/(4\kappa ).
\]
Plug into \eqref{eqn:ULDFINAL1}, we have
\[
T^{m+1}\leq (1-h/(4\kappa ))T^m+672\gamma h^4d^4+30h^3d/\mu +260h^6d^2\,,
\]
by induction
\[
\begin{aligned}
T^{m}&\leq (1-h/(4\kappa ))^mT^0+2688\gamma \kappa h^3d^4+120\kappa h^2d/\mu +1040\kappa h^5d^2\\
&\leq (1-h/(4\kappa ))^mT^0+2688h^3d^4/\mu +120\kappa h^2d/\mu +1040\kappa h^5d^2\\
\end{aligned}\,.
\]

Finally, consider 
\[
\begin{aligned}
T^{0}&=\EE|\wx^0-x^0|^2+\EE|\ww^0-\ww^0|^2+c_{p}\EE|g^0-\beta^0|^2\\
&=\EE|\wx^0-x^0|^2+\EE|\ww^0-\ww^0|^2+c_{p}\EE|\nabla f(x^0)-\nabla f(y^0)|^2\\
&\leq (1+c_pL^2)(\EE|\wx^0-x^0|^2+\EE|\ww^0-\ww^0|^2)\leq 2W^2_2(q^O_0,p)\,,
\end{aligned}
\]
where we use $168\gamma^2h^2d^2L^2<1$. Taking square root on each term and use \eqref{trivialinequlaitySAGA2}, we finally obtain \eqref{eqn:convergeULDSAGA}.
\end{proof}

\section{Calculation of $\EE\left|\widetilde{E}^m\right|^2$ for RCAD-O-LMC}\label{VarwidetileEX}

According to the definition of~\eqref{initialg0}-\eqref{updateg}:
\[
\EE_{r_m}\widetilde{g}^{m+1} = \widetilde{g}^m +\frac{1}{d}\left(\nabla f(x^m)-\widetilde{g}^m\right)\,,\quad\EE_{r_m}\left(\widetilde{g}^{m+1}-\widetilde{g}^m\right) = \frac{1}{d}\left(\nabla f(x^m)-\widetilde{g}^m\right)\,,
\]
and
\[
\EE_{r_m}\left|\widetilde{g}^{m+1}-\widetilde{g}^m\right|^2=\sum_i\EE_{r_m}(\widetilde{g}^{m+1}_i-\widetilde{g}^m_i)^2 = \frac{1}{d}\sum_i|\partial_if(x^m)-\widetilde{g}^m_i|^2 \,.
\]
Naturally
\[
\EE_{r_m}\widetilde{F}^m = \widetilde{g}^m + \left(\nabla f(x^m)-\widetilde{g}^m\right)=\nabla f(x^m)\,.
\]
Accordingly,
\begin{equation}\label{expectationofEx}
\EE_{r_m}\left(\widetilde{E}^m\right)=\nabla f(x^m) -\EE_{r_m}(\widetilde{F}^m)=\textbf{0}
\end{equation}
and
\begin{equation}\label{varofEX}
\begin{aligned}
\EE_{r_m}\left|\widetilde{E}^m\right|^2&=\sum^d_{i=1}\EE_{r_m}|\widetilde{E}^m_i|^2=\sum^d_{i=1}\EE_{r_m}
\left|\partial_i f(x^m)-\widetilde{g}^m_i-d\left(\widetilde{g}^{m+1}_i-\widetilde{g}^m_i\right)\right|^2\\
& = (d-1)|\nabla f(x^m)-\widetilde{g}^m|^2\,.
\end{aligned}\,.
\end{equation}

Taking the expectation over the random trajectory:
\[
\EE\left|\widetilde{E}^m\right|^2=\EE\left(\EE_{r_m}|\widetilde{E}^m|^2\right)<d\EE|\nabla f(x^m)-\widetilde g^m|^2\,.
\]

To analyze each entry of $\partial_i f(x^m)-g^m_i$, we note:
\begin{equation}\label{eqn:varwidetilde}
\left|\partial_if(x^m)-\widetilde g^m_i\right|^2\leq 3\left|\partial_if(x^m)-\partial_if(y^m)\right|^2+3\left|\partial_if(y^m)-\beta^m_i\right|^2+3\left|\beta^m_i-\widetilde g^m_i\right|^2\,.
\end{equation}

The first term, after taking expectation and summing over $i$, becomes
\begin{equation}\label{varianceterm1}
\begin{aligned}
3\EE|\nabla f(x^m)-\nabla f(y^m)|^2\leq 3L^2\EE|\Delta^m|^2 = 3L^2T_1^m\,.
\end{aligned}
\end{equation}
The last term, with the same procedure, becomes $3T^m_2$. They both will be left in the estimate. We now focus on giving an upper bound of the second term. To do so we adopt a technique from~\cite{pmlr-v80-chatterji18a,Dubey2016VarianceRI}. Define $p=1/d$, for fixed $m\geq 1$ and $1\leq i\leq d$, we have
\[
\mathbb{P}(\beta^m_i=\partial_if(y^0))=(1-p)^m+(1-p)^{m-1}p
\]
and
\[
\mathbb{P}(\beta^m_i=\partial_if(y^j))=(1-p)^{m-1-j}p,\quad 1\leq j\leq {m-1}
\]

\begin{equation}\label{varianceterm2}
\begin{aligned}
&\EE\sum^d_{i=1}|\partial_if(y^m)-\beta^m_i|^2=\sum^d_{i=1}\sum^{m-1}_{j=0}\EE(\EE(|\partial_if(y^m)-\beta^m_i|^2|\beta^m_i=\partial_if(y^j)))\mathbb{P}(\beta^m_i=\partial_if(y^j))\\
= &\sum^{m-1}_{j=0}\sum^d_{i=1}\EE(|\partial_i f(y^m)-\partial_i f(y^j)|^2)\mathbb{P}(\beta^m_i=\partial_if(y^j))\\
\leq^{(I)} &\sum^{m-1}_{j=0}\EE(|\nabla f(y^m)-\nabla f(y^j)|^2)\mathbb{P}(\beta^m_1=\partial_1f(y^j))\\
\leq &L^2\sum^{m-1}_{j=0}\EE(|y^m-y^j|^2)\mathbb{P}(\beta^m_1=\partial_1f(y^j))\\
\leq &L^2\sum^{m-1}_{j=0}\EE(|y^m-y^j|^2)(1-p)^{m-1-j}p\\
&+L^2\EE(|y^m-y^0|^2)(1-p)^{m}\\
\leq^{(II)} &L^2\sum^{m-1}_{j=0}\EE\left(\left|\int^{mh}_{jh}\nabla f(y_s)ds-\sqrt{2h}\sum^{m-1}_{i=j}\xi_i\right|^2\right)(1-p)^{m-1-j}p\\
&+L^2\EE\left(\left|\int^{mh}_{0}\nabla f(y_s)ds-\sqrt{2h}\sum^{m-1}_{i=0}\xi_i\right|^2\right)(1-p)^{m}\\
\leq^{(III)} &L^2\sum^{m-1}_{j=0}\left[2h^2(m-j)^2\EE_p|\nabla f(y)|^2+4hd(m-j)\right](1-p)^{m-1-j}p\\
&+L^2\left[2h^2m^2\EE_p|\nabla f(y)|^2+4hdm\right](1-p)^{m}\\
\leq^{(IV)} &2ph^2L^2\EE_p|\nabla f(y)|^2\left[\sum^{m}_{j=1}j^2(1-p)^{j-1}+m^2(1-p)^m/p\right]\\
&+4phL^2d\left[\sum^m_{j=1}j(1-p)^{j-1}+m(1-p)^m/p\right]\\
\leq^{(V)} &\frac{8h^2L^2\EE_p|\nabla f(y)|^2}{p^2}+\frac{8hL^2d}{p}\\
\leq^{(VI)} &8hL^2d^2\left[\frac{hL^2d}{\mu }+1\right]
\end{aligned}\,,
\end{equation}
where in (I) we use $\mathbb{P}(\beta^m_i=\partial_if(y^j))$ are same for different $i$, (II) comes from \eqref{eqn:yt},\eqref{eqn:Bt}, (III) comes from $y_t\sim p$ for any $t$, (IV) comes from changing of variable, in (V) we use the bound for terms in the bracket and in (VI) we use $\EE_p|x-x^*|^2\leq d/\mu $ according to Theorem D.1 in \cite{pmlr-v80-chatterji18a}, where $x^*$ is the maximum point of $f$.

In conclusion, we have
\begin{equation}\label{varianceestimationofExm}
\begin{aligned}
\EE\left|\widetilde{E}^m\right|^2\leq &3dL^2T_1^m+3dT^m_2+24hL^2d^3\left[\frac{hL^2d}{\mu }+1\right]\,.
\end{aligned}
\end{equation}

\section{Key lemma in proof of RCAD-U-LMC}
\begin{lemma}\label{lem:D1SAGA} Under conditions of Theorem \ref{thm:thmconvergenceULDSAGA}, $\left(\wrx_t,\wrv_t\right)$ are defined in \eqref{eqn:ULDSDE2SAGAstar}, we have
\begin{equation}\label{xboundSAGA}
\EE\int^{(m+1)h}_{mh}\left|\wrx_t-\wx^m\right|^2\rd t\leq \frac{h^3\gamma d}{3}
\end{equation}
and
\begin{equation}\label{vboundSAGA}
\begin{aligned}
\EE\int^{(m+1)h}_{mh}\left|\left(\wrv_t-\rv_t\right)-\left(\wv^m-v^m\right)\right|^2\rd t\leq &16h^3\EE|\Delta^m|^2+\gamma^2h^3\EE|E^m|^2+0.4\gamma h^5d\,,
\end{aligned}
\end{equation}
\end{lemma}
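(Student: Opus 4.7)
The plan is to treat the two bounds separately. The first is essentially a Cauchy--Schwarz computation using stationarity of $(\wrx_t,\wrv_t)$ under $p_2$, while the second requires decomposing $\nabla f(\wrx_s) - F^m$ into three pieces so that each one can be controlled either by $|\Delta^m|^2$, by $|E^m|^2$, or by re-using the first bound \eqref{xboundSAGA}.

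For \eqref{xboundSAGA}, I would start from the second line of \eqref{eqn:ULDSDE2SAGAstar2}, $\wrx_t - \wx^m = \int_{mh}^t \wrv_s\,\mathrm{d}s$, and apply Cauchy--Schwarz to get $|\wrx_t - \wx^m|^2 \leq (t-mh)\int_{mh}^t |\wrv_s|^2\,\mathrm{d}s$. Since $(\wrx_s,\wrv_s)$ has been initialized from $p_2(x,v)\propto\exp(-(f(x)+|v|^2/(2\gamma)))$, it remains stationary, so the marginal law of $\wrv_s$ is $\mathcal{N}(0,\gamma I_d)$ and $\EE|\wrv_s|^2 = \gamma d$. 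Taking expectation and integrating $t$ over $[mh,(m+1)h]$ then gives $\EE\int_{mh}^{(m+1)h} |\wrx_t-\wx^m|^2\,\mathrm{d}t \leq \int_0^h \alpha^2 \gamma d\,\mathrm{d}\alpha = h^3\gamma d/3$, which is exactly \eqref{xboundSAGA}.

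For \eqref{vboundSAGA}, I subtract \eqref{eqn:ULDSDE2SAGA} from \eqref{eqn:ULDSDE2SAGAstar2}. Because both SDEs use the same Brownian motion, the noise terms cancel and
\[
\wrv_t - \rv_t = (\wv^m - v^m)e^{-2(t-mh)} - \gamma\int_{mh}^t e^{-2(t-s)}\bigl[\nabla f(\wrx_s) - F^m\bigr]\,\mathrm{d}s.
\]
Now write $\nabla f(\wrx_s) - F^m = [\nabla f(\wrx_s)-\nabla f(\wx^m)] + [\nabla f(\wx^m)-\nabla f(x^m)] + E^m$, so that $(\wrv_t - \rv_t) - (\wv^m - v^m) = I_1 + I_2 + I_3 + I_4$ where $I_1 = (\wv^m-v^m)(e^{-2(t-mh)}-1)$, $I_2$ collects the $\wrx_s - \wx^m$ piece, $I_3$ the $\wx^m - x^m$ piece, and $I_4$ the $E^m$ piece. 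Each summand is handled by routine estimates: $|e^{-2\alpha}-1|\leq 2\alpha$ combined with $|\wv^m-v^m|^2 \leq 4|\Delta^m|^2$ (cf.\ \eqref{trivialinequlaitySAGA}) for $I_1$; Cauchy--Schwarz on the integral plus the $L$-Lipschitz bound on $\nabla f$ and then \eqref{xboundSAGA} for $I_2$ (this is precisely what converts the integrated $\wrx$-variation into an $h^5\gamma d$ contribution); the same Cauchy--Schwarz plus Lipschitz together with $|\wx^m-x^m|^2 \leq |\Delta^m|^2$ for $I_3$; and $|\gamma\int_{mh}^t e^{-2(t-s)}\,\mathrm{d}s|\leq \gamma h$ for $I_4$. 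Applying a four-term Young's inequality, integrating $t$ over $[mh,(m+1)h]$, and using $\gamma L = 1$ combines everything into the stated bound.

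The main obstacle is bookkeeping the constants so that the $|\Delta^m|^2$-coefficient lands at $16h^3$ rather than blowing up, since a naive four-term Young on $|I_1+I_2+I_3+I_4|^2$ inflates both the $I_1$-part (via the factor $4\cdot \tfrac{4h^3}{3}\cdot 4$) and the $I_3$-part. To achieve the sharp form, I would group $I_2,I_3,I_4$ with one Young constant (exploiting that only $I_1$ is $O(h^2)$ pointwise while the others are $O(h)$ with extra $\gamma$ factors), and use Lipschitz continuity together with $\gamma L = 1$ as early as possible so that the remaining smallness comes from a clean $h$-power count. Once the decomposition is fixed, the $\gamma^2 h^3 \EE|E^m|^2$ and $0.4\gamma h^5 d$ terms drop out without further subtlety.
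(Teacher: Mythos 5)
Your proof is correct and follows essentially the same route as the paper's: Cauchy--Schwarz plus stationarity of $p_2$ for \eqref{xboundSAGA}, and the decomposition of $(\wrv_t-\rv_t)-(\wv^m-v^m)$ into the $(e^{-2(t-mh)}-1)$ piece, the $\nabla f(\wrx_s)-\nabla f(\wx^m)$ piece, the $\nabla f(\wx^m)-\nabla f(x^m)$ piece, and the $E^m$ piece for \eqref{vboundSAGA}. The only place you diverge is the final combination step, and there your proposed grouping is not the one that reproduces the stated constants: bundling $I_2,I_3,I_4$ with a single Young weight $(1+1/c)$ necessarily inflates the $\EE|E^m|^2$ coefficient above $\gamma^2h^3$. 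The paper instead bundles only $I_2$ and $I_3$ (your two Lipschitz pieces) into one term $\mathrm{II}$ and applies the plain inequality $|a+b+c|^2\le 3(|a|^2+|b|^2+|c|^2)$ to $\mathrm{I}+\mathrm{II}+\mathrm{III}$; the factor $3$ against the bounds $\tfrac{4h^3}{3}\EE|\wv^m-v^m|^2$, $\tfrac{2h^3}{3}\EE|\wx^m-x^m|^2+\tfrac{2\gamma h^5 d}{15}$, and $\tfrac{\gamma^2h^3}{3}\EE|E^m|^2$ then gives exactly $4h^3(\EE|\wx^m-x^m|^2+\EE|\wv^m-v^m|^2)\le 16h^3\EE|\Delta^m|^2$ via \eqref{trivialinequlaitySAGA}, plus $\gamma^2h^3\EE|E^m|^2$ and $0.4\gamma h^5 d$ on the nose. (Also, your heuristic is backwards: $I_1$ and $I_3$ are both $O(h)$ pointwise, and it is $I_2$ that is the higher-order term after integration, contributing only the $h^5$ remainder.) None of this affects the order of magnitude, so the argument goes through with trivially adjusted constants in any case.
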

\begin{lemma}\label{lem:BDSAGA} Under conditions of Theorem \ref{thm:thmconvergenceULDSAGA}, and $B^m,D^m$ are defined in \eqref{eqn:BSAGA},\eqref{eqn:DSAGA}, we have
\begin{equation}\label{bound:BSAGA}
\EE|B^m|^2\leq 32h^4\EE|\Delta^m|^2+2\gamma^2 h^4\EE|E^m|^2+2\gamma h^4 d
\end{equation}
\begin{equation}\label{bound:DSAGA}
\EE|D^m|^2\leq 16h^4\EE|\Delta^m|^2+\gamma^2h^4\EE|E^m|^2+0.4\gamma h^6d
\end{equation}
\end{lemma}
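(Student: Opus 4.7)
The plan is to derive both bounds as consequences of Lemma \ref{lem:D1SAGA}, treating $D^m$ as the ``core'' object and viewing $B^m$ as $D^m$ plus a gradient-difference correction that can be controlled using the $\wrx_t$-regularity bound \eqref{xboundSAGA}. Since the integrands live on an interval of length $h$, each outer time integral will contribute an extra factor of $h$ via Cauchy--Schwarz, matching the $h^4$ scaling sought on the right-hand sides.

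For $D^m$, I would apply Cauchy--Schwarz directly to \eqref{eqn:DSAGA}:
\[
|D^m|^2=\left|\int^{(m+1)h}_{mh}\!\!\bigl[(\wrv_s-\rv_s)-(\wv^m-v^m)\bigr]\rd s\right|^2\le h\int^{(m+1)h}_{mh}\!\!\bigl|(\wrv_s-\rv_s)-(\wv^m-v^m)\bigr|^2\rd s,
\]
then take expectation and invoke \eqref{vboundSAGA}, which yields exactly $\EE|D^m|^2\le 16h^4\EE|\Delta^m|^2+\gamma^2 h^4\EE|E^m|^2+0.4\gamma h^6 d$.

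For $B^m$, I would decompose $B^m=D^m-G^m$ with $G^m:=\gamma\int^{(m+1)h}_{mh}e^{-2((m+1)h-s)}\bigl[\nabla f(\wrx_s)-\nabla f(\wx^m)\bigr]\rd s$, and use Young's inequality $|B^m|^2\le 2|D^m|^2+2|G^m|^2$. To bound $G^m$, Cauchy--Schwarz gives $|G^m|^2\le \gamma^2\bigl(\int e^{-4((m+1)h-s)}\rd s\bigr)\int|\nabla f(\wrx_s)-\nabla f(\wx^m)|^2\rd s\le \gamma^2 h L^2\int|\wrx_s-\wx^m|^2\rd s$, using $\frac{1-e^{-4h}}{4}\le h$ and the $L$-Lipschitz continuity of $\nabla f$. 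Taking expectation and applying \eqref{xboundSAGA} yields $\EE|G^m|^2\le \gamma^2 h L^2\cdot \tfrac{h^3\gamma d}{3}=\tfrac{(\gamma L)^2\gamma h^4 d}{3}=\tfrac{\gamma h^4 d}{3}$, where the last step uses $\gamma L=1$.

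Combining the two estimates, $\EE|B^m|^2\le 32h^4\EE|\Delta^m|^2+2\gamma^2 h^4\EE|E^m|^2+0.8\gamma h^6 d+\tfrac{2}{3}\gamma h^4 d$. Since $h<1$ under \eqref{eqn:conditionuetaULDSAGA}, the last two terms satisfy $0.8h^2+\tfrac{2}{3}\le 2$, so they are jointly bounded by $2\gamma h^4 d$, giving \eqref{bound:BSAGA}. There is no real obstacle here: the lemma is essentially a corollary of Lemma \ref{lem:D1SAGA}, and the only subtle point is making sure the cubic-in-$h$ enhancement from Cauchy--Schwarz on the outer integral is correctly combined with the $h^3$-level bounds from \eqref{xboundSAGA}-\eqref{vboundSAGA} (yielding $h^4$ on the right), and that the condition $\gamma L=1$ is used to convert the $\gamma^3 L^2$ prefactor from the gradient-Lipschitz step into a clean $\gamma$.
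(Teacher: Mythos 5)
Your proposal is correct and follows essentially the same route as the paper: the paper also obtains \eqref{bound:DSAGA} by Cauchy--Schwarz plus \eqref{vboundSAGA}, and bounds $B^m$ via the identical two-term decomposition (your $D^m-G^m$ splitting), controlling the gradient term with the Lipschitz property, \eqref{xboundSAGA}, and $\gamma L=1$, then absorbing the leftover $0.8\gamma h^6 d+\tfrac{2}{3}\gamma h^4 d$ into $2\gamma h^4 d$ using $h<1$.
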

\begin{lemma}\label{lem:ACSAGA}
Under conditions of Theorem \ref{thm:thmconvergenceULDSAGA}, and $A^m,C^m$ defined in \eqref{eqn:ASAGA},\eqref{eqn:CSAGA}, there exists a uniform constant $D$ such that
\begin{equation}\label{ACboundSAGA}
\EE((1+h^2)|A^m|^2+|C^m|^2)\leq \left[1-h/\kappa +Dh^2\right]\EE|\Delta^m|^2\,
\end{equation}
where $\kappa =L/\mu $ is the condition number of $f$.
\end{lemma}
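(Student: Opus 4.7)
My plan is to reduce the claim to a deterministic quadratic-form inequality in the three vectors $\delta x := \wx^m - x^m$, $\delta v := \wv^m - v^m$, and $\delta f := \nabla f(\wx^m) - \nabla f(x^m)$. Evaluating $\int_{mh}^{(m+1)h}e^{-2((m+1)h-s)}\rd s = (1-e^{-2h})/2$ rewrites the definitions as $A^m = \alpha_1\,\delta v + \delta x - \alpha_2\,\delta f$ and $C^m = \delta x + h\,\delta v$, with $\alpha_1 := h + e^{-2h} = 1 - h + 2h^2 + O(h^3)$ and $\alpha_2 := \frac{\gamma}{2}(1-e^{-2h}) = \gamma h - \gamma h^2 + O(h^3)$. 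Nothing in the expression is random (conditional on the iterates), so the expectation will come for free at the end.

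Expanding $(1+h^2)|A^m|^2 + |C^m|^2$ as a quadratic form in $(\delta x,\delta v,\delta f)$ and organizing by powers of $h$, the $h^0$ part assembles exactly into $|\Delta^m|^2 = 2|\delta x|^2 + 2\langle\delta x,\delta v\rangle + |\delta v|^2$ (using $|\Delta^m|^2 = |\delta x|^2 + |\delta x+\delta v|^2$), while the $h^1$ part cancels down to $-2h|\delta v|^2 - 2\gamma h\langle\delta w,\delta f\rangle$, where $\delta w := \delta x + \delta v$. This yields
\[
(1+h^2)|A^m|^2 + |C^m|^2 - |\Delta^m|^2 = -2h|\delta v|^2 - 2\gamma h\langle\delta w,\delta f\rangle + R(h),
\]
where $R(h)$ collects all contributions with coefficients of order $h^2$ or smaller. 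Setting $\gamma = 1/L$, I would then bound the drift by splitting $\langle\delta w,\delta f\rangle = \langle\delta x,\delta f\rangle + \langle\delta v,\delta f\rangle$ and invoking the Baillon--Haddad co-coercivity $\langle\delta x,\delta f\rangle \geq \frac{\mu L}{\mu+L}|\delta x|^2 + \frac{1}{\mu+L}|\delta f|^2$ on the first piece, together with a Young bound $-2\gamma h\langle\delta v,\delta f\rangle \leq \epsilon h|\delta v|^2 + (\gamma^2 h/\epsilon)|\delta f|^2$ on the second. The weight $\epsilon = (1+1/\kappa)/2$ is calibrated precisely so that the two $|\delta f|^2$ contributions cancel, producing the drift estimate $-2h|\delta v|^2 - 2\gamma h\langle\delta w,\delta f\rangle \leq -h|\delta v|^2 - \frac{2h}{1+\kappa}|\delta x|^2$.

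A final Young's inequality on the $\langle\delta x,\delta v\rangle$ cross term arising in $(h/\kappa)|\Delta^m|^2$ (with weight $\lambda$ close to $(\kappa-1)/\kappa$) matches the drift bound against $-(h/\kappa)|\Delta^m|^2$, keeping the $|\delta v|^2$ coefficient non-positive and leaving a residual $|\delta x|^2$ coefficient of order $h/\kappa^2$; the stepsize condition $h \leq 1/[100(1+D)\kappa]$ in \eqref{eqn:conditionuetaULDSAGA} then folds this residual into $Dh^2|\Delta^m|^2$. The remainder $R(h)$ is bounded uniformly by $Ch^2|\Delta^m|^2$: every cross term involving $\delta f$ is eliminated via $|\delta f| \leq L|\delta x|$ (with $\gamma L = 1$ absorbing the Lipschitz factor), bilinears in $\delta x,\delta v$ are handled by Cauchy--Schwarz, and $|\delta v|^2 \leq 2|\delta w|^2 + 2|\delta x|^2 \leq 4|\Delta^m|^2$ closes the absorption. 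The hard part will be the delicate constant bookkeeping: the Young weights in both the drift step and the cross-term step must be tuned together so that neither the $|\delta v|^2$ nor the $|\delta x|^2$ coefficient flips sign at order $h$, and ensuring that $D$ can be taken as a universal constant --- independent of $\mu, L, d$ --- is exactly what keeps the stepsize condition \eqref{eqn:conditionuetaULDSAGA}, which itself depends on $D$, self-consistent.
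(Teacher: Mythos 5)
Your setup — expanding $(1+h^2)|A^m|^2+|C^m|^2$ as a quadratic form, identifying the $h^0$ part with $|\Delta^m|^2$ and the $h^1$ part with $-2h|\delta v|^2-2\gamma h\langle \delta w,\delta f\rangle$, and relegating everything else to an $O(h^2)|\Delta^m|^2$ remainder — is correct and is the same skeleton as the paper's proof. The gap is in your treatment of the order-$h$ drift. Your chain (Baillon--Haddad on $\langle\delta x,\delta f\rangle$, Young on $\langle\delta v,\delta f\rangle$ calibrated to cancel the $|\delta f|^2$ credits, then a final Young on $\langle\delta x,\delta v\rangle$ with $\lambda=(\kappa-1)/\kappa$... actually $\lambda$ near $\kappa-1$ to keep the $|\delta v|^2$ coefficient nonpositive) leaves, as you say, a \emph{positive} residual $\bigl[\tfrac{2}{\kappa(1+\kappa)}+\tfrac{1}{\kappa(\kappa-1)}\bigr]h|\delta x|^2$. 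This is an order-$h$ term, and it cannot be folded into $Dh^2|\Delta^m|^2$: that would require $ch/\kappa^2\leq Dh^2$, i.e.\ a \emph{lower} bound $h\gtrsim 1/(D\kappa^2)$, which is the opposite of the stepsize restriction \eqref{eqn:conditionuetaULDSAGA} and is violated as $h\to 0$. Concretely, for $\kappa=2$ your bound gives a coefficient $1-\tfrac{h}{2}+\tfrac{h}{3}+\tfrac{h}{2}+Dh^2>1$: no contraction at all, and the choice $\lambda=\kappa-1$ degenerates at $\kappa=1$. So the lemma as stated is not obtained.

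The order-$h$ inequality you need, namely $-2|\delta v|^2-2\gamma\langle\delta w,H\delta x\rangle\leq -\tfrac{1}{\kappa}|\Delta^m|^2$ with $\mu I\preceq H\preceq LI$, is in fact true, but it is \emph{tight} (equality when $\delta v=0$ and $H=\mu I$), so any scalar Young step that discards the matrix structure of the cross term $\langle\delta v,H\delta x\rangle$ loses too much. The paper avoids this by working in the coordinates $(a,b)=(\delta x,\delta w)$ and writing the leading part of the quadratic form as $\bigl(\begin{smallmatrix}(1-2h)I & 2hI-\gamma hH\\ 2hI-\gamma hH & (1-2h)I\end{smallmatrix}\bigr)$, whose eigenvalues are exactly $1-2h\pm(2h-\gamma h\Lambda_j)\leq 1-\gamma h\Lambda_j\leq 1-h/\kappa$; the cancellation between the off-diagonal block and the diagonal is exact. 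To repair your argument, either switch to this $2\times 2$ block-matrix eigenvalue computation, or keep the strong-convexity bound $\langle\delta x,\delta f\rangle\geq\mu|\delta x|^2$ intact (giving exactly $-\tfrac{2h}{\kappa}|\delta x|^2$) and handle the cross term by writing $\gamma\delta f=\tfrac{1}{\kappa}\delta x+G\delta x$ with $G=\tfrac{1}{L}(H-\mu I)$ satisfying $0\preceq G\preceq(1-\tfrac1\kappa)I$, so that $2\langle\delta v,G\delta x\rangle\geq -\tfrac12\langle\delta v,G\delta v\rangle-2\langle\delta x,G\delta x\rangle$ and the dangerous part is absorbed by the favorable $+2\langle\delta x,G\delta x\rangle$ term rather than by a crude $|\delta f|\leq L|\delta x|$ bound.
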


\begin{lemma}\label{lem:varianceofE} Under conditions of Theorem \ref{thm:thmconvergenceULDSAGA}, we have estimation for approximation gradient
\begin{equation}\label{eqn:varianceofEULMC}
\EE|\widetilde{E}^m|^2\leq 3dL^2\EE|\wx^m-x^m|^2+24Lh^2d^4+3d\EE\left|\beta^m-\widetilde g^m\right|^2
\end{equation}
and
\begin{equation}\label{eqn:varianceofEULMC2}
\EE|E^m|^2\leq 2\EE|\widetilde{E}^m|^2+20L^2h^6d^2\,.
\end{equation}
\end{lemma}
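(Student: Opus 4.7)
The plan is to treat the two inequalities separately, with \eqref{eqn:varianceofEULMC2} being a short consequence of what is already proved and \eqref{eqn:varianceofEULMC} requiring a bookkeeping argument that parallels Appendix~\ref{VarwidetileEX} but adapted to the underdamped trajectory $(\wrx_t,\wrv_t)$.

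For \eqref{eqn:varianceofEULMC2}, I would start from the identity $E^m=\widetilde{E}^m+(\widetilde{F}^m-F^m)$ and apply the elementary inequality $|a+b|^2\le 2|a|^2+2|b|^2$. Then \eqref{errorinG1} gives $|\widetilde{F}^m-F^m|^2\le 2L^2\eta^2 d+8L^2\eta^2 d^2\le 10 L^2\eta^2 d^2$, and under the hypothesis $\eta<h^3$ this is bounded by $10L^2 h^6 d^2$. Doubling gives the $20L^2 h^6 d^2$ term.

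For \eqref{eqn:varianceofEULMC}, I would mimic the opening of Appendix~\ref{VarwidetileEX}: the coordinate-selection identity $\EE_{r_m}\widetilde{F}^m=\nabla f(x^m)$ and the direct computation of $\EE_{r_m}|\widetilde{E}^m|^2$ yield
\[
\EE|\widetilde{E}^m|^2\le d\,\EE|\nabla f(x^m)-\widetilde{g}^m|^2.
\]
Then I split coordinate-wise via Young's inequality into three pieces, $\partial_i f(x^m)-\partial_i f(\wx^m)$, $\partial_i f(\wx^m)-\beta^m_i$ and $\beta^m_i-\widetilde{g}^m_i$. Summing over $i$, the first sum is bounded by $L^2\EE|\wx^m-x^m|^2$ using Lipschitz continuity and the third sum is exactly $\EE|\beta^m-\widetilde g^m|^2$. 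These yield the first and third terms of the claimed bound.

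The middle piece is the real work and will be the main obstacle. Here I would copy the conditioning trick used in \eqref{varianceterm2}: with $p=1/d$, the event $\{\beta^m_i=\partial_i f(\wx^j)\}$ has probability $p(1-p)^{m-1-j}$ for $1\le j\le m-1$ and $(1-p)^m+p(1-p)^{m-1}$ for $j=0$. Conditioning on this and using Lipschitz continuity again reduces the task to bounding $\sum_j \EE|\wx^m-\wx^j|^2\,p(1-p)^{m-1-j}$. The new ingredient compared to Appendix~\ref{VarwidetileEX} is that $\wx^m-\wx^j=\int_{jh}^{mh}\wrv_s\,ds$, so by Cauchy--Schwarz $\EE|\wx^m-\wx^j|^2\le (m-j)h\int_{jh}^{mh}\EE|\wrv_s|^2\,ds$. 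Since $(\wrx_s,\wrv_s)\sim p_2$ is stationary with $v$-marginal $\mathcal{N}(0,\gamma I_d)$, $\EE|\wrv_s|^2=\gamma d$, giving $\EE|\wx^m-\wx^j|^2\le (m-j)^2 h^2\gamma d$. Plugging this in, the weighted sum $\sum_j (m-j)^2 p(1-p)^{m-1-j}$ is controlled by $2/p^2=2d^2$ (uniformly in $m$ after adding the $j=0$ tail, as in the $p^{-2}$ bound following \eqref{varianceterm2}). This produces a term of order $L^2\gamma d^3 h^2$; using $\gamma=1/L$ and multiplying by the overall factor $3d$ from the coordinate inequality gives a contribution bounded by $24 L h^2 d^4$, matching the second term of \eqref{eqn:varianceofEULMC}. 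The delicate point will be confirming the constants and that the $j=0$ boundary term is absorbed, just as in the overdamped computation.
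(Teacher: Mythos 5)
Your proposal is correct and follows essentially the same route as the paper: the bound \eqref{eqn:varianceofEULMC2} via $E^m=\widetilde{E}^m+(\widetilde{F}^m-F^m)$ together with \eqref{errorinG1} and $\eta<h^3$, and the bound \eqref{eqn:varianceofEULMC} via the three-way split of $\partial_i f(x^m)-\widetilde g^m_i$, the conditioning on $\{\beta^m_i=\partial_i f(\wx^j)\}$, the representation $\wx^m-\wx^j=\int_{jh}^{mh}\wrv_s\,\rd s$, and stationarity $\EE_{p_2}|\wrv|^2=\gamma d$ with $\gamma L=1$. The constants work out as you anticipate (the paper's geometric-sum bound is $8h^2L^2\EE_{p_2}|\wrv|^2/p^2=8Lh^2d^3$, which after the factor $3d$ gives $24Lh^2d^4$), so no further work is needed.
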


We prove these four lemmas below.

\begin{proof}[Proof of Lemma \ref{lem:D1SAGA}]
First we prove \eqref{xboundSAGA}. According to \eqref{eqn:ULDSDE2SAGAstar}, we have
\begin{equation}
\begin{aligned}
\EE\int^{(m+1)h}_{mh}\left|\wrx_t-\wx^m\right|^2dt&=\EE\int^{(m+1)h}_{mh}\left|\int^{t}_{mh} \wrv_sds\right|^2dt\\
&\leq \int^{(m+1)h}_{mh}(t-mh)\int^t_{mh} \EE\left|\wrv_s\right|^2dsdt\\
&=\int |v|^2p_2(x,v)\rd x\rd v\int^{(m+1)h}_{mh}(t-mh)^2dt=\frac{h^3\gamma d}{3}\,,
\end{aligned}
\end{equation}
where in the first inequality we use H\"older's inequality, and for the second equality we use $p_2$ is a stationary distribution so that $\left(\wrx_t,\wrv_t\right)\sim p_2$ and $\wrv_t\sim \exp(-|v|^2/(2\gamma))$ for any $t$.

Second, to prove \eqref{vboundSAGA}, using \eqref{eqn:ULDSDE2SAGA},\eqref{eqn:ULDSDE2SAGAstar}, we first rewrite $\left(\wrv_t-\rv_t\right)-\left(\wv^m-v^m\right)$ as
\begin{equation}\label{SSAGA}
\begin{aligned}
\left(\wrv_t-\rv_t\right)-\left(\wv^m-v^m\right)=&\left(\wv^m-v^m\right)(e^{-2(t-mh)}-1)\\
&-\gamma\int^t_{mh}e^{-2(t-s)}\left[\nabla f(\wrx_{s})-\nabla f(x^m)\right]\rd s\\
&+\gamma\int^t_{mh}e^{-2(t-s)}\rd s E^m\\
=&\mathrm{I}(t)+\mathrm{II}(t)+\mathrm{III}(t)\,.
\end{aligned}
\end{equation}
for $mh\leq t\leq (m+1)h$. Then we bound each term seperately:
\begin{itemize}
\item 
\begin{equation}\label{S1SAGA}
\begin{aligned}
\EE\int^{(m+1)h}_{mh}\left|\mathrm{I}(t)\right|^2\rd t&\leq h\EE\int^{(m+1)h}_{mh} \left|\left(\wv^m-v^m\right)(e^{-2(t-mh)}-1)\right|^2\rd t\\
&\leq h\int^{(m+1)h}_{mh} (2(t-mh))^2\EE\left|\wv^m-v^m\right|^2\rd t\\
&\leq \frac{4h^3}{3}\EE\left|\wv^m-v^m\right|^2\,,
\end{aligned}
\end{equation}
where we use H\"older's inequality in the first inequality and $1-e^{-x}<x$ in the second inequality.

\item
\begin{equation}\label{S2SAGA}
\begin{aligned}
&\EE\int^{(m+1)h}_{mh}\left|\mathrm{II}(t)\right|^2\rd t\leq \gamma^2\EE\int^{(m+1)h}_{mh}\left|\int^t_{mh}e^{-2(t-s)}\left[\nabla f(\wrx_s)-\nabla f(x^m)\right]\rd s\right|^2\rd t\\
\leq &2\gamma^2\EE\int^{(m+1)h}_{mh}\left|\int^t_{mh}e^{-2(t-s)}\left[\nabla f(\wrx_s)-\nabla f(\wx^m)\right]\rd s\right|^2\rd t\\
&+2\gamma^2\EE\int^{(m+1)h}_{mh}\left|\int^t_{mh}e^{-2(t-s)}\left[\nabla f(\wx^m)-\nabla f(x^m)\right]\rd s\right|^2\rd t\\
\leq &2\gamma^2\int^{(m+1)h}_{mh}(t-mh)\EE\int^t_{mh}\left|\nabla f(\wrx_s)-\nabla f(\wx^m)\right|^2\rd s \rd t\\
&+2\gamma^2\int^{(m+1)h}_{mh}(t-mh)\EE\int^t_{mh}\left|\nabla f(\wx^m)-\nabla f(x^m)\right|^2\rd s \rd t\\
\leq &2\gamma^2 L^2\int^{(m+1)h}_{mh}(t-mh)\EE\int^t_{mh}\left|\wrx_s-\wx^m\right|^2\rd s \rd t\\
&+2\gamma^2 L^2\int^{(m+1)h}_{mh}(t-mh)\EE\int^t_{mh}\left|\wx^m-x^m\right|^2\rd s \rd t\\
\leq &2\gamma^3 L^2d\int^{(m+1)h}_{mh}\frac{(t-mh)^4}{3}\rd t+2\gamma^2 L^2\int^{(m+1)h}_{mh}(t-mh)^2\rd t\EE\left|\wx^m-x^m\right|^2\\
\leq &\frac{2\gamma^3 L^2h^5d}{15}+\frac{2\gamma^2 L^2h^3}{3}\EE\left|\wx^m-x^m\right|^2\,,
\end{aligned}
\end{equation}
where in the third inequality we use gradient of $f$ is $L$-Lipschitz function and we use \eqref{xboundSAGA} in the fourth inequality.

\item 
\begin{equation}\label{S3SAGA}
\begin{aligned}
\EE\int^{(m+1)h}_{mh}\left|\mathrm{III}(t)\right|^2\rd t&=\gamma^2\EE\int^{(m+1)h}_{mh}\left|\int^t_{mh}e^{-2(t-s)}\rd s E^m\right|^2\rd t\\
&\leq \gamma^2\int^{(m+1)h}_{mh}(t-mh)^2\rd t\EE(|E^m|^2)\\
&\leq \frac{\gamma^2h^3}{3}\EE(|E^m|^2)\,,
\end{aligned}
\end{equation}
\end{itemize}
Plug \eqref{S1SAGA},\eqref{S2SAGA},\eqref{S3SAGA} into \eqref{SSAGA} and using $\gamma L=1$, we have
\[
\begin{aligned}
&\EE\int^{(m+1)h}_{mh}\left|\left(\wrv_t-\rv_t\right)-\left(\wv^m-v^m\right)\right|^2\rd t\\
\leq &3\left(\EE\int^{(m+1)h}_{mh}\left|\mathrm{I}(t)\right|^2\rd t+\EE\int^{(m+1)h}_{mh}\left|\mathrm{II}(t)\right|^2\rd t+\EE\int^{(m+1)h}_{mh}\left|\mathrm{III}(t)\right|^2\rd t\right)\\
\leq & 4h^3\left(\EE\left|\wx^m-x^m\right|^2+\EE\left|\wv^m-v^m\right|^2\right)+\gamma^2h^3\EE(|E^m|^2)+0.4\gamma h^5d\,,
\end{aligned}
\]
using \eqref{trivialinequlaitySAGA}, we get the desired result.
\end{proof}

\begin{proof}[Proof of Lemma \ref{lem:BDSAGA}]
First, we seperate $B^m$ into two parts:
\[
\begin{aligned}
\EE|B^m|^2\leq &2\EE\left|\int^{(m+1)h}_{mh} \left(\wrv_t-\rv_t\right)-\left(\wv^m-v^m\right)\rd t\right|^2\\
&+2\EE\left|\gamma\int^{(m+1)h}_{mh}e^{-2((m+1)h-t)}\left[\nabla f(\wrx_t)-\nabla f(\wx^m)\right]\rd t\right|^2\,.\\
\end{aligned}
\]
And each terms can be bounded:
\begin{itemize}
\item
\begin{equation}\label{V1boundSAGA}
\begin{aligned} 
&\EE\left|\int^{(m+1)h}_{mh} \left(\wrv_t-\rv_t\right)-\left(\wv^m-v^m\right)\rd t\right|^2\\
\leq &h\EE\int^{(m+1)h}_{mh}\left|\left(\wrv_t-\rv_t\right)-\left(\wv^m-v^m\right)\right|^2\rd t\\
\leq &16h^4\EE|\Delta^m|^2+\gamma^2h^4\EE(|E^m|^2)+0.4\gamma h^6d\,,
\end{aligned}
\end{equation}
where we use Lemma \ref{lem:D1SAGA} \eqref{vboundSAGA} in the second inequality.
\item
\begin{equation}\label{V2boundSAGA}
\begin{aligned} 
&\EE\left|\gamma\int^{(m+1)h}_{mh}e^{-2((m+1)h-t)}\left[\nabla f(\wrx_t)-\nabla f(\wx^m)\right]\rd t\right|^2\\
\leq &h\gamma^2\EE\int^{(m+1)h}_{mh}\left|e^{-2((m+1)h-t)}\left[\nabla f(\wrx_t)-\nabla f(\wx^m)\right]\right|^2\rd t\\
\leq &h\gamma^2L^2\EE\int^{(m+1)h}_{mh}\left|\wrx_t-\wx^m\right|^2\rd t\\
\leq &\frac{h^4\gamma^3L^2 d}{3}\leq \frac{h^4\gamma d}{3}\,,
\end{aligned}
\end{equation}
where we use Lemma \ref{lem:D1SAGA} \eqref{xboundSAGA} and $\gamma L=1$ in the last two inequalities. 

\end{itemize}
Combine \eqref{V1boundSAGA},\eqref{V2boundSAGA} together, we finally have
\[
\EE|B|^2\leq 32h^4\EE|\Delta^m|^2+2\gamma^2 h^4\EE(|E^m|^2)+0.8 h^6\gamma d+2h^4\gamma d/3\,,
\]
which implies \eqref{bound:BSAGA} if we further use $h<1$.

Next, estimation of $\left(\EE|D|^2\right)^{1/2}$ is a direct result of \eqref{V1boundSAGA}.
\end{proof}

\begin{proof}[Proof of Lemma \ref{lem:ACSAGA}]
Let $\wx^m-x^m=a$ and $\ww^m-w^m=b$. First, by the mean-value theorem, there exists a matrix $H$ such that $\mu {I}_d\preceq H\preceq L{I}_d$ and
\[
\nabla f(\wx^m)-\nabla f(x^m)=Ha\,.
\]
By calculation, $\int^{(m+1)h}_{mh}e^{-2((m+1)h-t)}\rd t=\frac{1-e^{-2h}}{2}$ and 
\[
\begin{aligned}
A^m&=(h+e^{-2h})(\wv^m-v^m)+\left(I_{d}-\frac{(1-e^{-2h})}{2}\gamma H\right)(\wx^m-x^m)\\
&=\left(\left(1-h-e^{-2h}\right)I_{d}-\frac{(1-e^{-2h})}{2}\gamma H\right)a+(h+e^{-2h})b
\end{aligned}\,.
\]
\[
C^m=(1-h)a+hb\,.
\]
Since $\|\gamma H\|_2\leq 1$ and we also have following calculation
\[
h+e^{-2h}=h+e^{-2h}-1+1=1-h+O(h^2)\,,
\]
\[
1-h-e^{-2h}=h+O(h^2)\,,
\]
\[
1-e^{-2h}=2h+O(h^2)\,.
\]
If we further define matrix $\mathcal{M}_A$ and $\mathcal{M}_C$ such that
\[
|A^m|^2=\left(a,b\right)^\top \mathcal{M}_A\left(a,b\right)\,,\quad |C^m|^2=\left(a,b\right)^\top \mathcal{M}_C\left(a,b\right)\,,
\]
then, we have
\[
\left\|\mathcal{M}_A-\begin{bmatrix}
0 & hI_{d}-\gamma hH\\
hI_{d}-\gamma hH & (1-2h)I_{d}
\end{bmatrix}\right\|_2\leq D_1h^2\,,
\]
and
\[
 \left\|\mathcal{M}_B-\begin{bmatrix}
(1-2h)I_{d} & hI_{d}\\
hI_{d} & 0
\end{bmatrix}\right\|_2\leq D_1h^2\,,
\]
where $D_1$ is a uniform constant since $h<1/1648$ by \eqref{eqn:conditionuetaULDSAGA}. This further implies
\[
(1+h^2)|A^m|^2+|C^m|^2=\left(a,b\right)^\top\begin{bmatrix}
(1-2h)I_{d} & 2hI_{d}-\gamma hH\\
2hI_{d}-\gamma hH & (1-2h)I_{d}
\end{bmatrix}\left(a,b\right)+h^2\left(a,b\right)^\top Q\left(a,b\right)
\]
where $\|Q\|_2\leq D_2$ and $D_2$ is a uniform constant. Calculate the eigenvalue of the dominating matrix (first term), we need to solve
\[
\mathrm{det}\left\{(1-2h-\lambda)^2I_{d}-(2hI_{d}-\gamma hH)^2\right\}=0\,,
\]
which implies eigenvalues $\{\lambda_j\}^d_{j=1}$ solve
\[
(1-2h-\lambda_j)^2-(2h-\gamma h\Lambda_j)^2=0\,,
\]
where $\Lambda_j$ is $j$-th eigenvalue of $H$. Since $\gamma\Lambda_j\leq \gamma L=1$ and $h<1$, we have 
\[
\lambda_j\leq 1-\gamma\Lambda_jh\leq 1-\mu h\gamma=1-h/\kappa 
\]
for each $j=1,\dots,d$. This implies
\[
\left\|\begin{bmatrix}
(1-2h)I_{d} & 2hI_{d}-\gamma hH\\
2hI_{d}-\gamma hH & (1-2h)I_{d}
\end{bmatrix}\right\|_2\leq 1-h/\kappa \,,
\]
and
\[
(1+h^2)|A^m|^2+|C^m|^2\leq (1-h/\kappa +Dh^2)(|a|^2+|b|^2)\,,
\]
where $D$ is a uniform constant. Take expectation on both sides, we obtain \eqref{ACboundSAGA}.

\end{proof}
\begin{proof}[Proof of Lemma \ref{lem:varianceofE}]

The proof is mostly the same as that in the calculation in Appendix~\ref{VarwidetileEX}. Inequality~\eqref{eqn:varwidetilde} still holds true except the second term needs to be treated differently. Following the step in Appendix~\ref{VarwidetileEX}, we define $p=1/d$, and then for fixed $m\geq 1$ and $1\leq i\leq d$, we have
\[
\mathbb{P}(\beta^m_i=\partial_if(\wx^0))=(1-p)^m+(1-p)^{m-1}p\,,
\]
and
\[
\mathbb{P}(\beta^m_i=\partial_if(\wx^j))=(1-p)^{m-1-j}p,\quad 1\leq j\leq {m-1}\,.
\]

\begin{equation}
\begin{aligned}
&\EE\sum^d_{i=1}|\partial_if(\wx^m)-\beta^m_i|^2=\sum^d_{i=1}\sum^{m-1}_{j=0}\EE(\EE(|\partial_if(\wx^m)-\beta^m_i|^2|\beta^m_i=\partial_if(\wx^j)))\mathbb{P}(\beta^m_i=\partial_if(\wx^j))\\
\leq &\sum^{m-1}_{j=0}\sum^d_{i=1}\EE(|\partial_i f(\wx^m)-\partial_i f(\wx^j)|^2)\mathbb{P}(\beta^m_i=\partial_if(\wx^j))\\
\leq &\sum^{m-1}_{j=0}\EE(|\nabla f(\wx^m)-\nabla f(\wx^j)|^2)\mathbb{P}(\beta^m_1=\partial_1f(\wx^j))\\
\leq &L^2\sum^{m-1}_{j=0}\EE(|\wx^m-\wx^j|^2)\mathbb{P}(\beta^m_1=\partial_1f(\wx^j))\\
\leq &L^2\sum^{m-1}_{j=0}\EE(|\wx^m-\wx^j|^2)(1-p)^{m-1-j}p\\
&+L^2\EE(|\wx^m-\wx^0|^2)(1-p)^{m}\\
\leq^{(II)} &L^2\sum^{m-1}_{j=0}\EE\left(\left|\int^{mh}_{jh}\wrv_sds\right|^2\right)(1-p)^{m-1-j}p\\
&+L^2\EE\left(\left|\int^{mh}_{0}\wrv_sds\right|^2\right)(1-p)^{m}\\
\leq^{(III)} &L^2\sum^{m-1}_{j=0}\left[2h^2(m-j)^2\EE_{p_2}|\wrv|^2 \right](1-p)^{m-1-j}p\\
&+L^2\left[2h^2m^2\EE_{p_2}|\wrv|^2\right](1-p)^{m}\\
\leq^{(IV)} &2ph^2L^2\EE_{p_2}|\wrv|^2\left[\sum^{m}_{j=1}j^2(1-p)^{j-1}+m^2(1-p)^m/p\right]\\
\leq^{(V)} &\frac{8h^2L^2\EE_{p_2}|\wrv|^2}{p^2}\\
\leq^{(VI)} &8\gamma h^2L^2d^3=8h^2Ld^3\,,
\end{aligned}
\end{equation}
where (II) comes from \eqref{eqn:ULDSDE2SAGAstar}, (III) comes from $\left(\wrx_t,\wrv_t\right)\sim p_2$ for any $t$, (IV) comes from changing of variable, in (V) we use the bound for terms in the bracket and in (VI) we use $\EE_{p_2}|v|^2\leq \gamma d$. This inequality differ from the derivation in Appendix~\ref{VarwidetileEX} only through (II).

Next, to prove \eqref{eqn:varianceofEULMC2}, we only need to notice
\[
\EE|E^m|^2\leq 2\EE|\widetilde{E}^m|^2+2\EE|F^m-\widetilde{F}^m|^2\,,
\]
\eqref{errorinG} and $\eta<h^3$ and follow the same calculation as in done in Appendix~\ref{VarwidetileEX}.
\end{proof}
\end{appendix}

\end{document}